\def\eqref#1{equation~\ref{#1}}
\def\1{\bm{1}}
\DeclareMathAlphabet{\mathsfit}{\encodingdefault}{\sfdefault}{m}{sl}
\SetMathAlphabet{\mathsfit}{bold}{\encodingdefault}{\sfdefault}{bx}{n}
\newcommand{\KL}{D_{\mathrm{KL}}}
\DeclareMathOperator*{\argmax}{arg\,max}
\title{Evaluation of Best-of-N Sampling Strategies for Language Model Alignment}
\author{\name Yuki Ichihara \email  ichihara.yuki.iu1@is.naist.jp\\
      \addr Nara Institute of Science and Technology
      \AND
      \name Yuu Jinnai \email jinnai\_yu@cyberagent.co.jp \\
      \name Tetsuro Morimura \email morimura\_tetsuro@cyberagent.co.jp \\
      \name Kenshi Abe \email abe\_kenshi@cyberagent.co.jp \\
      \name Kaito Ariu \email kaito\_ariu@cyberagent.co.jp \\
      \name Mitsuki Sakamoto \email sakamoto\_mitsuki@cyberagent.co.jp \\
      \addr CyberAgent
      \AND
      \name Eiji Uchibe \email  uchibe@atr.jp\\
      \addr Advanced Telecommunications Research Institute International
}
\begin{document}

\maketitle

\begin{abstract}

Best-of-N (BoN) sampling with a reward model has been shown to be an effective strategy for aligning Large Language Models (LLMs) with human preferences at the time of decoding.
BoN sampling is susceptible to a problem known as \textit{reward hacking}. Since the reward model is an imperfect proxy for the true objective, an excessive focus on optimizing its value can lead to a compromise of its performance on the true objective. 
Previous work proposes Regularized BoN sampling (RBoN), a BoN sampling with regularization to the objective, and shows that it outperforms BoN sampling so that it mitigates reward hacking and empirically \citep{jinnai2024regularized}.
However, \citet{jinnai2024regularized} introduce RBoN based on a heuristic and they lack the analysis of \textit{why} such regularization strategy improves the performance of BoN sampling.
The aim of this study is to analyze the effect of BoN sampling on regularization strategies.
Using the regularization strategies corresponds to robust optimization, which maximizes the worst case over a set of possible perturbations in the proxy reward.
Although the theoretical guarantees are not directly applicable to RBoN, RBoN corresponds to a practical implementation.
This paper proposes an extension of the RBoN framework, called Stochastic RBoN sampling (SRBoN), which is a theoretically guaranteed approach to worst-case RBoN in proxy reward.
We then perform an empirical evaluation using the AlpacaFarm and Anthropic's hh-rlhf datasets to evaluate which factors of the regularization strategies contribute to the improvement of the true proxy reward.
In addition, we also propose another simple RBoN method, the Sentence Length Regularized BoN, which has a better performance in the experiment as compared to the previous methods.
\end{abstract}

\section{Introduction}

Large Language Models (LLMs) have demonstrated remarkable capabilities in many NLP tasks related to natural language understanding and text generation \citep{stiennon2020,NEURIPS2022_b1efde53,touvron2023llama,dubey2024llama,openai2024gpt4}.
Despite the strengths, LLMs are not always adept at interpreting a wide range of instructions and can produce undesirable outputs, such as biased, hallucinated, or toxic responses \citep{bai2022training,lin-etal-2022-truthfulqa,touvron2023llama,casper2023open,guan2024hallusionbench}.
This problem underscores the challenge of language model alignment; ensuring LLMs' behaviors align with human objectives and safety considerations \citep{ziegler2020finetuning,stiennon2020,NEURIPS2022_b1efde53}.
There is now a rich set of approaches to address this problem \citep{stiennon2020,NEURIPS2022_b1efde53,NEURIPS2023_a85b405e}. These papers have shown that training language models with human feedback can improve their performance. However, training language models is a computationally intensive task. In other words, improved performance comes at the cost of increased computational resources.

This paper focuses on the Best-of-N (BoN) sampling strategy, a method that involves generating $N$ outputs from a model and selecting the most preferred output among the $N$ samples. Despite its simplicity and the fact that it does not require an additional training phase, BoN sampling has been shown to be surprisingly effective in practice \citep{stiennon2020,nakano2022webgpt}. However, the BoN strategy does not scale with the number of samples $N$ due to the \textit{reward hacking problem} \citep{amodei2016concrete,ziegler2020finetuning,stiennon2020,NEURIPS2022_3d719fee,pmlr-v202-gao23h}. Reward hacking is a behavior that satisfies the given objective without achieving the intended result. 
It is caused by the misspecification of the preference model used by the BoN to select the most preferred output \citep{pan2022the,lambert2024alignment}.


Previous work to mitigate the reward hacking problem has proposed Regularized Best-of-N sampling (RBoN), BoN strategy with the addition of a regularization term to the objective \citep{jinnai2024regularized}.
This paper shows that the RBoN strategy is effective compared to BoN sampling in various experiments.
However, \textit{why} such a regularization strategy is effective against reward uncertainty is unclear in the previous work.

In this paper, we propose Stochastic RBoN sampling (SRBoN), which adds a regularization term similar to RBoN.
We then draw a connection between the Reinforcement Learning (RL) problems \citep{sutton2018reinforcement} and the BoN strategies: BoN sampling corresponds to solving the RL problem, and SRBoN sampling strategies correspond to solving the Regularized Reinforcement Learning (RRL) problem \citep{neu2017unified,pmlr-v97-geist19a, NEURIPS2019_3f4366ae,NEURIPS2021_bb1443cc}.


First, we exploit the knowledge of RRL. Some work has shown that regularization terms in probability distributions over outputs provide robustness to reward perturbations \citep{ortega2014adversarial,husain2021regularized,NEURIPS2021_bb1443cc,eysenbach2022maximum,pan2022the,NEURIPS2021_bb1443cc}. 
SRBoN can also apply this analysis to RRL. Its insights provide an answer to why reward hacking can be mitigated: when a regularization term is added to the BoN sampling, it also becomes an adversarial perturbation to the reward.

We then evaluate the effectiveness of our approach against alternative decoder methods in a series of experiments, with the goal of determining the relative resilience of each method to potential exploitation by reward hacking. The results show that our proposed method outperforms many existing approaches in a variety of settings. In other words, a theoretically guaranteed, effective algorithm is proposed.

In addition, while RBoN consists of complex formula structures, we proposed a simpler RBoN, Sentence Length Regularized BoN that, despite its simple implementation, shows comparable or even better performance in experiments with the methods of previous studies.


\newcolumntype{C}[1]{>{\centering\arraybackslash}m{#1}}
\section{Background}


In this paper, we formalize the problem of decoding time alignment as Regularized Markov Decision Processes (MDPs) problem \citep{neu2017unified,pmlr-v97-geist19a, NEURIPS2019_3f4366ae, NEURIPS2021_bb1443cc}. For brevity, we refer to reinforcement learning within Regularized MDPs as Regularized Reinforcement Learning (RRL) throughout this paper. In Section \ref{pre:rl}, we describe the Reinforcement Learning (RL) problem and the RRL problem. Then, we describe two sampling algorithms used for decoding time alignment, Best-of-N (BoN) sampling in Section \ref{pre:bon}, and the Regularized Best-of-N sampling (RBoN) in Section \ref{sec:rbon}.

\subsection{Adversarial Interpretation in Regularized Reinforcement Learning}\label{pre:rl}

We consider the problem of selecting an output $y$ from a set of outputs $\mathcal{Y}_{\textbf{ref}} \subseteq \mathcal{Y}$ (e.g., response text from the system) given an input $x \in \mathcal{X}$ (e.g., input prompt by a user), where the objective is to select the best output according to a reward function $R$: $\mathcal{X} \times \mathcal{Y}_{\textbf{ref}}  \rightarrow \mathbb{R}$.
Let $\Delta(\mathcal{Y})$ denote the set of probability distributions over a set $\mathcal{Y}$.
We define the goal of the Reinforcement Learning (RL) problem as finding the best policy $\pi: \mathcal{X} \rightarrow \Delta(\mathcal{Y}_{\textbf{ref}})$ that maximizes the expected reward for $x$:
\begin{equation}
\begin{aligned}
\argmax_{\pi \in \Pi} f_\textrm{RL}(\pi) &=\argmax_{\pi \in \Pi}\sum_{y \in \mathcal{Y}_{\textbf{ref}}} \pi(y \mid x)R(x,y)\\
    &=\argmax_{\pi \in \Pi} \mathbb{E}_{y \sim \pi(\cdot \mid x)}[R(x, y)]\label{eq:rl}\\
\end{aligned}
\end{equation}
where $\Pi$ is a set of all possible policies.
Note that there exists a deterministic policy that maximizes $f_\mathrm{RL}$ \citep{sutton2018reinforcement} and this formulation can be seen as a contextual bandit problem. Specifically, the policy observes a context $x$, chooses an output $y$ based on that context, and receives a reward $R(x, y)$. Importantly, we do not use sequential decision operations or consider state transitions. Each decision is made independently based on the current context $x$.


The underlying assumption of the RL problem is that the reward model $R$ is correctly defined and observable. That is, we consider the solution that maximizes the expected reward to be the optimal solution. 
However, real-world applications often suffer from the \textit{reward misspecification problem} -- the reward model observable to the agent is only a proxy for the true underlying reward of the problem \citep{ortega2014adversarial,husain2021regularized,NEURIPS2021_bb1443cc,eysenbach2022maximum,pan2022the,NEURIPS2021_bb1443cc}.
Prior work has investigated strategies to optimize under the uncertainty in the observed reward.
In contrast to the RL problem, Regularized Reinforcement Learning (RRL) incorporates regularization terms to achieve a solution that is robust to the reward misspecification \citep{neu2017unified,pmlr-v97-geist19a, NEURIPS2019_3f4366ae,NEURIPS2021_bb1443cc}. 
The objective of the RRL problem is to find the best policy $\pi: \mathcal{X} \rightarrow \Delta(\mathcal{Y}_\textbf{ref})$ that maximizes the reward with an additional regularization function $\Omega (\pi): \Delta(\mathcal{Y}_\textbf{ref}) \rightarrow \mathbb{R} \cup\{+\infty\} $. Let $f_\textrm{RRL}^{\Omega}(\pi) := \mathbb{E}_{y \sim \pi(\cdot \mid x)}[R (x,y)]- \Omega(\pi)$ be the objective function of RRL problem with $\Omega$. Then, we define the following as the optimal solution to the RRL problem:
\begin{equation}
\argmax_{\pi \in \Pi} f_\textrm{RRL}^{\Omega}(\pi)=\argmax_{\pi \in \Pi}\mathbb{E}_{y \sim \pi(\cdot \mid x)}[R (x,y)] - \Omega(\pi). \label{eq:rrl} 
\end{equation}
Note that, unlike the RL problem, there may not exist an optimal policy that is deterministic for the RRL problem \citep{pmlr-v97-geist19a}. 



\citet{brekelmans2022your} uses Legendre–Fenchel transformation \citep{touchette2005legendre} to show that the RRL problem can be viewed as a variant of the RL problem with an adversarial agent adding perturbations to the reward $\Delta R: \mathcal{X} \times \mathcal{Y}_{\textbf{ref}}  \rightarrow \mathbb{R}$ if the regularization term $\Omega$ is convex and lower semi-continuous function \citep{boyd2004convex}:

\begin{equation}
\begin{aligned}
\argmax_{\pi \in \Pi} f_\textrm{RRL}^\Omega(\pi)
&=\argmax_{\pi \in \Pi} \min _{\Delta R \in \mathcal{R}_{\Delta}}\mathbb{E}_{y \sim \pi(\cdot \mid x)}[R(x,y)-\Delta R(x,y)]+\Omega^{*}(\Delta R), \label{eq:rrl-dual}\\
\end{aligned}
\end{equation}
where $\Omega^* \in \mathbb{R}^{\mathcal{X} \times\mathcal{Y}_{\textbf{ref}}}$ is the conjugate function of $\Omega$ \citep{boyd2004convex}, and $\mathcal{R}_{\Delta}:=\left\{\Delta R \in \mathbb{R}^{\mathcal{X} \times \mathcal{Y}_{\textbf{ref}}} \mid F_{\Omega}(\Delta R) \leq 0\right\}$, where $F_{\Omega}$ is a function or operator dependent to $\Omega$ that imposes a constraint or condition on the values of $\Delta R$. 



Eq. (\ref{eq:rrl-dual}) shows that the problem of maximizing $f_\textrm{RRL}$ can be reformulated as the max-min problem and the regularizer $\Omega$ is effectively an adversarial reward perturbation that forces us to optimize the worst case performance \citep{NEURIPS2021_bb1443cc}.


\subsection{Best-of-N (BoN) Sampling}
\label{pre:bon}

BoN sampling has emerged as an effective method for preference optimization in LLMs \citep{stiennon2020,nakano2022webgpt}. BoN sampling has several advantages over preference learning methods. First, it is straightforward and does not require additional training in the language model. Although learning-based alignment methods require retraining the LLMs whenever human preferences are updated, BoN sampling can be applied immediately, requiring only an update of the reward model. This is particularly advantageous since training LLMs is the most resource-intensive process. Second, BoN sampling is an effective strategy in its own right, with numerous studies demonstrating that it can outperform learning-based adaptation methods \citep{pmlr-v202-gao23h}. Recent literature has expanded our understanding of BoN sampling. In particular, \citet{beirami2024theoretical} conducted an analysis comparing the policies selected by BoN sampling with the base policies used for sample generation. In addition, \citet{gui2024bonbon} showed that BoN sampling achieves an optimal balance between win rate and KL divergence when aligning large language models to human preferences. 

BoN sampling has similarities to the objective function used in RL (e.g., the response with the highest reward score, determined by a proxy reward model $R(x,y)$, is selected). The objective function of BoN is given by:

\begin{equation*}
y_{\mathrm{BoN}}(x) := \underset{y \in \mathcal{Y}_{\textbf{ref}}}{\argmax}\, R(x, y).
\end{equation*}
This reward model is used as a measure of the quality of the text when making an assignment. The reward model we consider in this paper is open access as described in Appendix~\ref{appendix:reprod}.
We also mention that the objective function of BoN sampling is equal to the objective function of the (unregularized) RL problem (Eq. (\ref{eq:rl})):

\begin{equation}\label{eq:bon}
\begin{aligned}
y_{\mathrm{BoN}}(x) :&= \underset{y \in \mathcal{Y}_{\textbf{ref}}}{\argmax}\, R(x, y) \\
     &= \underset{y \in \mathcal{Y}_{\textbf{ref}}}{\argmax}\max_{\pi_y \in \Pi_{det}} \mathbb{E}_{y \sim \pi_y}[R(x,y)] \\
    &= \underset{y \in \mathcal{Y}_{\textbf{ref}}}{\argmax}\max_{\pi_y} f_\mathrm{RL}(\pi_y).
\end{aligned}
\end{equation}
where $\Pi_{det}$ is a set of deterministic policies and $\pi_y$ is a deterministic policy that selects $y$ given $x$ with a probability of 1.


\subsection{Regularized Best-of-N Sampling ($\mathrm{RBoN}$)}\label{sec:rbon}


Although BoN sampling is shown to be effective, it is prone to the reward hacking problem \citep{amodei2016concrete, ziegler2020finetuning, stiennon2020, NEURIPS2022_3d719fee,pmlr-v202-gao23h}. 
The reward hacking problem is a phenomenon where the decision to optimize the proxy reward is made without considering its potential misspecification, resulting in worse performance on the actual reward objective.
\citet{NEURIPS2023_5fc47800} showed that with 25\% label noise, which is the amount of disagreement observed in real-world preference annotations \citep{stiennon2020,NEURIPS2022_b1efde53}, BoN sampling degrades performance with $N$ greater than 16 (Figures 12 and 13 in \citealt{NEURIPS2023_5fc47800}).

Regularized Best-of-N sampling (RBoN) is proposed to mitigate the reward hacking problem for BoN sampling \citep{jinnai2024regularized}. 
\citet{jinnai2024regularized} presented two variants of RBoN: using the KL divergence as a regularizer ($\mathrm{RBoN}_{\mathrm{KL}}$; Section \ref{pre:rbon}) and using the Wasserstein Distance as a regularizer ($\mathrm{RBoN}_{\mathrm{WD}}$; Section~\ref{WD}).
In the following, we describe the two variants of RBoN and draw its connection to the objective function of RRL (Eq. (\ref{eq:rrl})).

\subsubsection{KL divergence Regularized BoN Sampling ($\mathrm{RBoN}_{\mathrm{KL}}$)}
\label{pre:rbon}
The objective function of $\mathrm{RBoN}_{\mathrm{KL}}$ (KL divergence Regularized BoN Sampling) is given by:
\begin{equation*}
\begin{aligned}
y_{\mathrm{KLBoN}}(x) &=\argmax_{y \in \mathcal{Y_{\textbf{ref}}}} \, R(x, y)-\beta \KL\left[\pi_y (\cdot \mid x) \| \pi_{\textbf{ref}}(\cdot \mid x)\right],\\
&= \underset{y \in \mathcal{Y}_{\textbf{ref}}}{\argmax}\max_{\pi_y \in \Pi_{det}} f_\mathrm{RRL}^{\mathrm{KL}}(\pi_y).
\end{aligned}
\end{equation*}


where $\beta$ is a regularization parameter, reference policy $\pi_{\textbf{ref}}$: $\mathcal{X}$ $\rightarrow$ $\Delta(\mathcal{Y}_{\textbf{ref}})$, and $\KL$ denotes the KL divergence. The reference policy here takes an input x and returns a meaningful output y. The regularization described in this paper is like a penalty to stay away from the reference policy and the reference policy is the language model.

By incorporating the KL divergence as a regularization term in the objective function, $\mathrm{RBoN}_{\mathrm{KL}}$ encourages the learned policy to be close to the reference policy $\pi_{\textbf{ref}}$. A higher value of $\beta$ emphasizes the regularization term, encouraging the learned policy to be closer to the reference policy, while a lower value of $\beta$ prioritizes maximizing the reward function.

\subsubsection{Wasserstein Distance Regularized BoN Sampling ($\mathrm{RBoN}_{\mathrm{WD}}$)}\label{WD}

The objective function of $\mathrm{RBoN}_{\mathrm{WD}}$ (Wasserstein Distance Regularized BoN Sampling) is defined as follows:
\begin{equation*}
\begin{aligned}
y_{\mathrm{WDBoN}}(x) &= \argmax_{y \in \mathcal{Y_{\textbf{ref}}}} \,R(x, y)-\beta \textbf{WD}\left[\pi_y(\cdot \mid x) \| \pi_{\textbf{ref }}(\cdot \mid x)\right],\\
&= \underset{y \in \mathcal{Y}_{\textbf{ref}}}{\argmax}\max_{\pi_y \in \Pi_{det}} f_\mathrm{RRL}^{\mathrm{WD}}(\pi_y).
\end{aligned}
\end{equation*}
where $\textbf{WD}$ denotes 1-Wasserstein Distance.


The $\textbf{WD}$ \citep{wang2012coupling} is defined as:
\begin{equation}
\textbf{WD}[\nu \| \mu] = \inf_{\gamma \in \Gamma(\nu, \mu)} \sum_{(i, j) \in N \times N} \gamma_{ij} \, C_{ij},
\end{equation}
where $N$: the total number of samples, consisting of the set $\{ y_1, y_2, \dots, y_N \}$, $\nu, \mu \in \Delta(N)$: probability measure on the aforementioned sets ($\nu_i, \mu_i$ refer to the probability value $\nu(y_i), \mu(y_i)$), $C$: $N\times N \rightarrow \mathbb{R}$ a cost function measuring the distance between two outputs (e.g. $C_{ij}$ refers to the amount to
be transported from place $y_i$ to palace $y_j$), and $\Gamma(\nu, \mu)$ denotes the set of all joint distributions $\gamma$ whose marginals are $\nu$ and $\mu$. The constraints on $\gamma$ are given by:
\begin{equation*}
\begin{aligned}
\sum_{j \in n} \gamma_{ij} &= \nu_i, \quad \forall i \in n, \\
\sum_{i \in n} \gamma_{ij} &= \mu_j, \quad \forall j \in n, \\
\gamma_{ij} &\geq 0, \quad \forall i,j \in n.
\end{aligned}
\end{equation*}
The $\textbf{WD}$, also known as the Earth Mover's Distance (EMD), is a metric used to quantify the dissimilarity between two probability distributions. Intuitively, it measures the minimum cost required to transform one distribution into the other. This cost is conceptualized as the amount of probability mass that must be moved multiplied by the distance that would be moved. 
The concept has been used in NLP to measure the dissimilarity of texts \citep{pmlr-v37-kusnerb15,zhao-etal-2019-moverscore}.

The exact computation of $\textbf{WD}\left[\pi_y(\cdot \mid x) \| \pi_{\textbf{ref }}(\cdot \mid x)\right]$ is intractable due to the enormous size of the output space. To address this computational challenge, prior work \citep{jinnai2024regularized} has employed sample-based approximation techniques.
Let $\hat{\pi}_{\textbf{ref}}$ represent the empirical distribution computed using a set of samples $\mathcal{Y}_{\textbf{ref}}$. This distribution is defined as:
$\hat{\pi}_{\textbf{ref}}(y \mid x) = \frac{1}{N} \sum_{y^{\prime} \in \mathcal{Y}_{\textbf{ref}}} \mathbb{I}\left(y = y^{\prime}\right)$
where $N$ is the total number of samples in $\mathcal{Y_{\textbf{ref}}}$.
The objective function can then be approximated as follows:
\begin{equation*}
y_{\mathrm{WDBoN}}(x) = \argmax_{y \in \mathcal{Y_{\textbf{ref}}}} \,R(x, y)-\beta \textbf{WD}\left[\pi_y(\cdot \mid x) \| \hat{\pi}_{\textbf{ref }}(\cdot \mid x)\right].
\end{equation*}
For practical implementation aspects, the $\textbf{WD}$ term for \citet{jinnai2024regularized} is computed as follows:
\begin{equation}\label{eq:wd_N}
\textbf{WD}\left[\pi_y(\cdot \mid x) \| \hat{\pi}_{\textbf{ref}}(\cdot \mid x)\right]=\sum_{y^{\prime} \in \mathcal{Y}_{\textbf {ref}}} \frac{1}{N} C\left(y, y^{\prime}\right),
\end{equation}
The cosine distance is used as $C$ to measure the distance between the outputs \citep{reimers-gurevych-2019-sentence}.
\begin{equation}\label{eq:similarity}
C\left(y, y^{\prime}\right)=1-\cos \left(\mathrm{emb}(y), \operatorname{emb}\left(y^{\prime}\right)\right),
\end{equation}
where $\mathrm{emb}(y)$ and $\operatorname{emb}\left(y^{\prime}\right)$ represent the embeddings of output $y$ and $y^{\prime}$, respectively. 
\section{Stochastic RBoN (SRBoN)}

We propose the stochastic version of RBoN, Stochastic $\mathrm{RBoN}_{\mathrm{KL}}$ (Section \ref{propose:kl}) and Stochastic $\mathrm{RBoN}_{\mathrm{WD}}$ (Section \ref{propose:WD}). 
These novel algorithms, while similar to the original RBoN (deterministic version), allow for the optimal policy $\pi$ to a probabilistic output distribution. By relaxing the deterministic constraint, we can apply theoretical tools that were previously inaccessible. Our approach focuses on the analysis of this stochastic version, aiming to provide theoretical results that shed light on the underlying mechanisms of RBoN's effectiveness.


\subsection{Stochastic $\mathrm{RBoN}_{\mathrm{KL}}$ ($\mathrm{SRBoN}_{\mathrm{KL}}$)}\label{propose:kl}
First, consider a stochastic version of $\mathrm{RBoN}_{\mathrm{KL}}$.
The policy of $\mathrm{SRBoN}_{\mathrm{KL}}$ is given by:

\begin{equation}
\begin{aligned}
\pi_{\mathrm{SRBoN}_{\mathrm{KL}}}(x) 
&=\argmax_{\pi \in \Pi}\mathbb{E}_{y \sim \pi(\cdot \mid x)}[R(x,y)]  - \beta \KL \left[\pi(\cdot \mid x) \| \pi_{\textbf{ref}}(\cdot \mid x)\right]\\
&= \argmax_{\pi \in \Pi} f_\mathrm{RRL}^{\mathrm{KL}}(\pi).
\end{aligned}
\end{equation}
We define $\mathrm{SRBoN}_{\mathrm{KL}}$ as a method to sample a response $y$ that follows the probability distribution of $\pi_{\mathrm{SRBoN}_{\mathrm{KL}}}$:
\begin{equation}\label{eq:srbonkl}
    y_{\mathrm{SRBoN}_{\mathrm{KL}}}(x) \sim \pi_{\mathrm{SRBoN}_{\mathrm{KL}}}(x).
\end{equation}
In Section \ref{sec:exp} we evaluate the performance of this stochastic text generation algorithm defined by Eq. (\ref{eq:srbonkl}).


\subsubsection{Theoretical Guarantee of $\mathrm{SRBoN}_{\mathrm{KL}}$}\label{sec:kl_sec}

By relaxing the deterministic policy constraint of $\mathrm{RBoN}_{\mathrm{KL}}$, $\mathrm{SRBoN}_{\mathrm{KL}}$ follows the formulation of the RRL with adversarial perturbations studied by \citet{brekelmans2022your}. 
As such, the computation of $\mathrm{SRBoN}_{\mathrm{KL}}$ can be transformed into a max-min problem using Legendre-Fenchel transformation \citep{touchette2005legendre} as in Eq. (\ref{eq:rrl-dual}). 
In this way, $\mathrm{SRBoN}_{\mathrm{KL}}$ has the following theoretical guarantee proven by \citet{brekelmans2022your}:





\begin{theorem}(\textbf{\cite{brekelmans2022your}, Proposition 1})\label{theory:kl-minmax}
The problem of maximizing $f_\mathrm{RRL}^{\mathrm{KL}}(\pi)$ can be interpreted as a robust optimization problem with an adversarial perturbation $\Delta R$:
\begin{equation}
\begin{aligned}
\argmax_{\pi \in \Pi} f_\mathrm{RRL}^{\mathrm{KL}}(\pi) 
&= \argmax_{\pi \in \Pi} \,\, \min_{\Delta R \in \mathcal{R}_{\Delta}} \mathbb{E}_{y \sim \pi(\cdot \mid x)} [R(x,y) - \Delta R(x,y)],
\end{aligned}
\end{equation}
where the feasible set of reward perturbations $\mathcal{R}_{\Delta}$ available to the adversary is bounded: 
\begin{equation}
\mathcal{R}_{\Delta} := \left\{\Delta R \in \mathbb{R}^{\mathcal{X}\times\mathcal{Y}_{\textnormal{\textbf{ref}}}} \mid \sum_\mathcal{Y_{\textnormal{\textbf{ref}}}} \pi_{\textnormal{\textbf{ref}}}(y \mid x) \exp(\beta^{-1}\Delta R(x,y)) \leq 1\right\}
\label{eq:noisedomain}
\end{equation}
\end{theorem}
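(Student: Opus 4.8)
The plan is to obtain the claimed max--min identity by specializing the general Legendre--Fenchel duality of Eq.~(\ref{eq:rrl-dual}) to the regularizer $\Omega(\pi)=\beta\,\KL\!\left[\pi(\cdot\mid x)\,\|\,\pi_{\textbf{ref}}(\cdot\mid x)\right]$, the $x$ being fixed throughout. First I would check the hypotheses of Eq.~(\ref{eq:rrl-dual}): extending $\Omega$ by $+\infty$ outside the simplex $\Delta(\mathcal{Y}_{\textbf{ref}})$, it is proper, convex, and lower semi-continuous, and since $\mathcal{Y}_{\textbf{ref}}$ is finite all the optimizations are finite-dimensional, so the duality applies. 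The whole argument then reduces to two computations: identifying the conjugate $\Omega^*$ and showing that the extra $\Omega^*$ term inside the inner minimization can be absorbed into the constraint set, leaving the penalty-free objective $\mathbb{E}_{y\sim\pi}[R(x,y)-\Delta R(x,y)]$.

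For the first computation, $\Omega^*(\Delta R)=\sup_{\pi\in\Delta(\mathcal{Y}_{\textbf{ref}})}\big(\langle\pi,\Delta R\rangle-\beta\KL[\pi\|\pi_{\textbf{ref}}]\big)$; writing the simplex constraint with a Lagrange multiplier and setting the gradient to zero gives the Boltzmann maximizer $\pi^\star(y)\propto\pi_{\textbf{ref}}(y\mid x)\exp(\beta^{-1}\Delta R(x,y))$, and substituting back yields the log-partition form
\[
\Omega^*(\Delta R)=\beta\log\sum_{y\in\mathcal{Y}_{\textbf{ref}}}\pi_{\textbf{ref}}(y\mid x)\exp\!\big(\beta^{-1}\Delta R(x,y)\big).
\]
Plugging this into Eq.~(\ref{eq:rrl-dual}) rewrites $\argmax_{\pi}f_\mathrm{RRL}^{\mathrm{KL}}(\pi)$ as $\argmax_{\pi}\min_{\Delta R}\big\{\mathbb{E}_{y\sim\pi}[R(x,y)-\Delta R(x,y)]+\Omega^*(\Delta R)\big\}$, and the natural constraint operator is $F_\Omega(\Delta R)=\Omega^*(\Delta R)\le 0$, i.e.\ $\sum_y\pi_{\textbf{ref}}(y\mid x)\exp(\beta^{-1}\Delta R(x,y))\le 1$, which is precisely $\mathcal{R}_\Delta$ in Eq.~(\ref{eq:noisedomain}).

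The step I expect to be the main obstacle is the last one: passing from ``unconstrained inner min with the $\Omega^*$ penalty'' to ``inner min over $\mathcal{R}_\Delta$ without any penalty.'' The key is translation invariance of the bracketed objective under a constant shift $\Delta R\mapsto\Delta R+c\,\1$: the term $\mathbb{E}_{y\sim\pi}[-\Delta R]$ decreases by $c$ while $\Omega^*(\Delta R)$ increases by exactly $c$, so the value is unchanged. Hence any $\Delta R$ can be shifted onto the boundary $\{\Omega^*(\Delta R)=0\}$ without changing the objective, on which the $\Omega^*$ term vanishes, and conversely the penalty-free minimum over the closed set $\mathcal{R}_\Delta$ is attained on that boundary because raising $\Delta R$ coordinatewise strictly lowers $\mathbb{E}_{y\sim\pi}[R-\Delta R]$ on the support of $\pi$ (coordinates off the support being irrelevant, which also gives coercivity and hence attainment of the inner $\min$). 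This yields, for every $\pi$, $\min_{\Delta R}\{\mathbb{E}_{y\sim\pi}[R-\Delta R]+\Omega^*(\Delta R)\}=\min_{\Delta R\in\mathcal{R}_\Delta}\mathbb{E}_{y\sim\pi}[R-\Delta R]$, a pointwise identity in $\pi$, so the two problems share the same $\argmax$. As a consistency check I would verify the inner minimizer explicitly: taking $\Delta R^\star(x,y)=\beta\log\frac{\pi(y\mid x)}{\pi_{\textbf{ref}}(y\mid x)}$ gives $\sum_y\pi_{\textbf{ref}}(y\mid x)\exp(\beta^{-1}\Delta R^\star)=\sum_y\pi(y\mid x)=1$ (so $\Delta R^\star\in\partial\mathcal{R}_\Delta$), and $\mathbb{E}_{y\sim\pi}[R-\Delta R^\star]=\mathbb{E}_{y\sim\pi}[R]-\beta\KL[\pi\|\pi_{\textbf{ref}}]=f_\mathrm{RRL}^{\mathrm{KL}}(\pi)$, with optimality of $\Delta R^\star$ following from the Gibbs/Donsker--Varadhan inequality $\beta\KL[\pi\|\pi_{\textbf{ref}}]\ge\mathbb{E}_{y\sim\pi}[\Delta R]$ for all $\Delta R\in\mathcal{R}_\Delta$; this recovers the left-hand side and closes the proof.
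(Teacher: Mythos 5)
Your proposal is correct. Note that the paper does not actually prove this statement: it imports it verbatim as Proposition~1 of \cite{brekelmans2022your} and only remarks that the generic RRL result translates to the text-generation setting, so there is no in-paper proof to compare against. Your derivation follows exactly the route the paper gestures at -- specializing Eq.~(\ref{eq:rrl-dual}) to $\Omega(\pi)=\beta\KL[\pi\|\pi_{\textbf{ref}}]$, computing the log-partition conjugate, and absorbing the $\Omega^{*}$ penalty into the constraint set via translation invariance of $\Delta R\mapsto\Delta R+c\1$ -- and your closing consistency check is in fact a complete, self-contained proof on its own: the Donsker--Varadhan inequality gives $\mathbb{E}_{y\sim\pi}[\Delta R]\le\beta\KL[\pi\|\pi_{\textbf{ref}}]$ for every $\Delta R\in\mathcal{R}_{\Delta}$, and $\Delta R^{\star}=\beta\log(\pi/\pi_{\textbf{ref}})$ achieves equality, so the inner minimum equals $f_{\mathrm{RRL}}^{\mathrm{KL}}(\pi)$ pointwise in $\pi$ and the two $\argmax$ sets coincide. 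The only blemish is the attainment claim for the inner minimum when $\pi$ is not fully supported (there $\Delta R^{\star}$ leaves $\mathbb{R}^{\mathcal{X}\times\mathcal{Y}_{\textbf{ref}}}$ and the minimum is only an infimum), a technicality the theorem statement itself glosses over.
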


\


The theorem shows that $\mathrm{SRBoN}_{\mathrm{KL}}$ is an algorithm that optimizes the worst-case performance under the assumption that the error between the true reward and the given proxy reward model is guaranteed to be within $\mathcal{R}_{\Delta}$ (Eq. (\ref{eq:noisedomain})). 

Let $\mathcal{R}^\prime$ be a set of possible reward models under the reward perturbations: $\mathcal{R}^\prime := \{R - \Delta R \mid \Delta R \in \mathcal{R}_{\Delta}\}$.
Let $f_\mathrm{RRL}^{\mathrm{KL}}(\pi; R)$ be the objective of the policy given a (proxy) reward model $R$. Then,
\begin{align}
\argmax_{\pi \in \Pi} f_\mathrm{RRL}^{\mathrm{KL}}(\pi; R) &= \argmax_{\pi \in \Pi} \,\, \min_{\Delta R \in \mathcal{R}_{\Delta}}\mathbb{E}_{y \sim \pi(\cdot \mid x)}[R(x,y) - \Delta R(x,y)] \nonumber\\
&= \argmax_{\pi \in \Pi} \,\, \min_{R^\prime \in \mathcal{R}^\prime} \mathbb{E}_{y \sim \pi(\cdot \mid x)}[ R^{\prime}(x,y) ] \nonumber\\
&= \argmax_{\pi \in \Pi} \min_{R^\prime \in \mathcal{R}^\prime} f_\mathrm{RRL}^{\mathrm{KL}}(\pi; R^\prime).
\end{align}
Thus, $\mathrm{SRBoN}_{\mathrm{KL}}$ is a robust optimization of the policy for a set of possible reward models in $\mathcal{R}^\prime$. In other words, it assumes that the true payoff model is in $\mathcal{R}_{\Delta}$ and optimizes for the worst case.


The theorem is derived by translating the proposition proved by \citet{brekelmans2022your} for the generic RRL problems to the text generation scenario. 
The contribution of our work is to show the relation of their theoretical result to the RBoN sampling algorithm in LLMs alignment.



\subsection{Stochastic $\mathrm{RBoN}_{\mathrm{WD}}$ ($\mathrm{SRBoN}_{\mathrm{WD}}$)}\label{propose:WD}
We now consider an optimization problem over a space of probability functions, to derive an optimal probabilistic policy $\pi$ with the Wasserstein distance as the regularization term. 
The objective function of $\mathrm{RBoN}_{\mathrm{SWD}}$ is the following:

\begin{equation}
    \begin{aligned}
     \pi_{\mathrm{SRBoN}_\mathrm{WD}}(x)
          &= \argmax_{\pi \in \Pi} \mathbb{E}_{y \sim \pi(\cdot \mid x)}[R(x,y)]  -\beta \textbf{WD} [\pi_{\textbf{ref}} (\cdot \mid x) \| \pi (\cdot \mid x)]\\
     &= \argmax_{\pi \in \Pi} f_\mathrm{RRL}^{\mathrm{WD}}(\pi).
     \label{eq:srbonwd}
     \end{aligned}
\end{equation}


\subsubsection{Theoretical Guarantee of $\mathrm{SRBoN}_{\mathrm{WD}}$}\label{sec:WD}



Similar to $\mathrm{SRBoN}_{\mathrm{KL}}$, $\mathrm{SRBoN}_{\mathrm{WD}}$ can also be reformulated as a max-min problem, and thus we can show that it optimizes the worst-case performance under certain constrain:
\begin{theorem}\label{theory:wd}
The problem of maximizing $f_\mathrm{RRL}^{\mathrm{WD}}(\pi)$ can be interpreted as a robust optimization problem with an adversarial perturbation $\Delta R$:
\begin{equation}
    \argmax_{\pi \in \Pi} f_\mathrm{RRL}^{\mathrm{WD}}(\pi) = \argmax_{\pi \in \Pi} \,\,\min_{\Delta R \in \mathcal{R}_{\Delta}}\mathbb{E}_{y \sim \pi(\cdot \mid x)}[R(x,y) - \beta \Delta R(x,y)] + \beta \sum_{y \in \mathcal{Y}_{\textbf{ref}}} \pi_{\textnormal{\textbf{ref}}}(y \mid x)\Delta R(x,y)
\end{equation}
where the feasible set of reward perturbations $\mathcal{R}_{\Delta}$ available to the adversary is bounded:
\begin{equation}\label{eq:wd_delta_set}
\mathcal{R}_{\Delta}:=\left\{\Delta R \in \mathbb{R}^{\mathcal{X}\times\mathcal{Y}_{\textbf{ref}}} \mid \left|\Delta R(x,y)-\Delta R\left(x,y^{\prime}\right)\right| \leq C\left(y, y^{\prime}\right) \quad \forall y, y^{\prime} \in \mathcal{Y}_{\textnormal{\textbf{ref}}}\right\},
\end{equation}
\end{theorem}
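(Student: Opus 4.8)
The plan is to mimic the argument behind Theorem~\ref{theory:kl-minmax} (i.e., \citet{brekelmans2022your}): express the Wasserstein regularizer as a supremum of linear functionals via convex duality, swap the order of optimization, and read off the adversarial-reward interpretation. The concrete device is the Kantorovich–Rubinstein duality for the $1$-Wasserstein distance: for probability measures $\nu,\mu\in\Delta(\mathcal{Y}_\textbf{ref})$ with ground cost $C$,
\begin{equation*}
\textbf{WD}[\nu\|\mu] = \sup_{g:\,|g(y)-g(y')|\le C(y,y')\ \forall y,y'} \ \Big(\sum_{y}\nu(y)\,g(y) - \sum_{y}\mu(y)\,g(y)\Big).
\end{equation*}
Here the supremum is over all $C$-Lipschitz ``potentials'' $g$, which is exactly the constraint set $\mathcal{R}_\Delta$ in Eq.~(\ref{eq:wd_delta_set}) once we identify $g$ with $\Delta R(x,\cdot)$ (the context $x$ is fixed throughout, so everything is a finite-dimensional problem over the $N$ outputs in $\mathcal{Y}_\textbf{ref}$).

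First I would write the objective, using $\textbf{WD}[\pi_\textbf{ref}\|\pi]$ as in Eq.~(\ref{eq:srbonwd}), and substitute the Kantorovich–Rubinstein dual with $\nu=\pi_\textbf{ref}(\cdot\mid x)$ and $\mu=\pi(\cdot\mid x)$:
\begin{equation*}
f_\mathrm{RRL}^{\mathrm{WD}}(\pi) = \mathbb{E}_{y\sim\pi(\cdot\mid x)}[R(x,y)] - \beta \sup_{\Delta R\in\mathcal{R}_\Delta}\Big(\sum_y \pi_\textbf{ref}(y\mid x)\Delta R(x,y) - \sum_y \pi(y\mid x)\Delta R(x,y)\Big).
\end{equation*}
Since we are subtracting a supremum, $-\beta\sup(\cdots) = \inf(-\beta(\cdots)) = \inf_{\Delta R\in\mathcal{R}_\Delta}\big(\beta\sum_y \pi(y\mid x)\Delta R(x,y) - \beta\sum_y \pi_\textbf{ref}(y\mid x)\Delta R(x,y)\big)$, i.e., the $\sup$ becomes a $\min$ over the same Lipschitz set $\mathcal{R}_\Delta$ (the set is nonempty and compact up to the obvious constant shift, so the infimum is attained). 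Pulling this inside the expectation over $\pi$ gives
\begin{equation*}
f_\mathrm{RRL}^{\mathrm{WD}}(\pi) = \min_{\Delta R\in\mathcal{R}_\Delta}\ \mathbb{E}_{y\sim\pi(\cdot\mid x)}\big[R(x,y) - \beta\,\Delta R(x,y)\big] + \beta\sum_{y\in\mathcal{Y}_\textbf{ref}} \pi_\textbf{ref}(y\mid x)\,\Delta R(x,y),
\end{equation*}
which is exactly the claimed identity; taking $\argmax_{\pi\in\Pi}$ of both sides finishes the proof. Note this identity holds pointwise in $\pi$ before the outer $\argmax$, so there is no delicate minimax-swap needed for the stated form — the max-min structure appears only after we apply $\argmax_\pi$, and since the equality of objectives is pointwise, the $\argmax$ sets coincide trivially.

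The main obstacle is justifying the Kantorovich–Rubinstein duality in the precise discrete form used here, and in particular checking that the dual constraint is exactly $|g(y)-g(y')|\le C(y,y')$ rather than a one-sided or metric-dependent variant. This requires that $C$ behaves enough like a metric (nonnegativity, $C(y,y)=0$, and ideally the triangle inequality) for the standard LP-duality argument on the finite transport polytope to go through; with the cosine distance of Eq.~(\ref{eq:similarity}) one should either verify these properties or note that LP duality on the (finite) Kantorovich problem yields the constraint set in Eq.~(\ref{eq:wd_delta_set}) directly, with $C$ replaced by its induced path metric if necessary. A secondary, minor point is handling the constant-shift degeneracy of $\Delta R$ (adding a constant to $\Delta R$ leaves both the transport-dual value and the two sums balanced, so the $\min$ is well-defined on equivalence classes) — I would remark on this but it does not affect the statement. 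Everything else is routine rearrangement of finite sums.
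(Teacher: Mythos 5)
Your proposal is correct and follows essentially the same route as the paper: both arguments reduce to the Kantorovich--Rubinstein dual representation of the $1$-Wasserstein distance over the $C$-Lipschitz potentials (which is exactly $\mathcal{R}_\Delta$), followed by a sign flip turning the supremum into a minimum and a rearrangement of the finite sums. The only difference is that the paper proves this duality from scratch as a lemma (via LP strong duality on the transport polytope and a $c$-transform argument) rather than citing it, and the metric-like conditions on $C$ that you flag as the main obstacle are indeed the same conditions the paper's $f^{cc}=f$ step implicitly relies on.
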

The proof is provided in Appendix~\ref{appendix:wd-thoery}.


This expression represents an optimization problem with strategies $\pi$ and perturbation $\Delta R$. The goal is to find the optimal strategy $\pi^*$ under the modified reward $R^\prime$ ($= R-\beta \Delta R$). 


The intuition behind the second term $\sum_{y \in \mathcal{Y}{\textnormal{\textbf{ref}}}} \pi_{\textnormal{\textbf{ref}}}(y \mid x)\Delta R(x,y)$ can be understood by examining $\Delta R$ constraints (Eq. (\ref{eq:wd_delta_set})). 
While this feasible set does not explicitly constrain \(\Delta R\) to avoid large values, the second term, \(\min_{\Delta R} \mathbb{E}_{\pi}[R(x,y)-\beta \Delta R(x,y)]\), helps to avoid such huge values. Additionally, it reveals a mechanism that inherently limits the magnitude of perturbations for actions that have high probability under $\pi_{\textnormal{\textbf{ref}}}$ and this is consistent with the WD distance intuition.

We have analyzed the role of the regularization term for BoN sampling in the previous \cref{sec:kl_sec} and \cref{sec:WD}. Since the previous study \citep{jinnai2024regularized}  imposed deterministic constraints, the results are not exactly the same, but we consider that the analysis performed here helps to explain why the previous study performed better.

\paragraph{Note}The feasible set of reward perturbations $\mathcal{R}_{\Delta}$ is bounded to be a Lipschitz continuous function with respect to a cost function $C$, which generally takes a non-negative value in applications. 
The perturbation behavior corresponds to the Lipschitz continuity condition, which has traditionally been well-treated in the RL community. For example, previous studies such as \cite{Rachelson2010OnTL, Pirotta2015PolicyGI} considered continuous state and action spaces in RL and derived Lipschitz continuity for reward functions to aid their analysis.

\section{Experimental Evaluation}\label{sec:exp}

We evaluate the performance of SRBoN compared to other text generation approaches.
The datasets and models used in the experiments are all publicly available (Appendix \ref{appendix:reprod}).

\paragraph{Datasets.}
We conduct experiments using two datasets: the AlpacaFarm dataset \citep{NEURIPS2023_5fc47800} and Anthropic’s hh-rlhf (HH) dataset, which we use the Harmlessness and Helpfulness subsets \citep{bai2022training}. 
For the AlpacaFarm dataset, we use the first 1000 entries
of the train split (alpaca human preference) as the development set and the 805 entries of the evaluation split (alpaca farm evaluation) for evaluation. For Anthropic’s datasets, we separately
conduct experiments on the helpful-base (Helpfulness) and harmless-base (Harmlessness). For each dataset, we use the first 1000 entries of the train split as the development set and the first 1000 entries of the evaluation split for evaluation.

\paragraph{Language Model, Reward Model, and Embedding Model.}
We employ Mistral 7B SFT $\beta$ \citep{jiang2023mistral} as the language models. 
We set the maximum entry length and the maximum output length to be 256 tokens. We sample response texts using nucleus sampling \citep{Holtzman2020The} with temperature set to 1.0 and top-p set to 0.9.
For each entry, in the AlpacaFarm dataset and Anthropic’s datasets, 128 responses are generated using Mistral 7B SFT $\beta$.

To evaluate the performance of the algorithms under different preferences, we use OASST (reward-model-deberta-v3-large-v2), SHP-Large (SteamSHP-flan-t5-large), SHP-XL (SteamSHP-flan-t5-xl), PairRM, RM-Mistral-7B and Eurus-RM-7b \citep{NEURIPS2023_949f0f8f,pmlr-v162-ethayarajh22a,  jiang-etal-2023-llm,dong2023raft,yuan2024advancing} as reward models.
For the text embedding model we use all-mpnet-base-v2 \citep{NEURIPS2020_c3a690be}, a sentence transformer model \citep{reimers-gurevych-2019-sentence} shown to be effective in various sentence embedding and semantic search tasks.

\begin{table}[tb]
\centering
\caption{Description of the text generation algorithms evaluated in the experiments. A checkmark (\checkmark) indicates that the method uses the specified function, while a blank space means that it does not.}
\label{tab:decoder}
\begin{tabular}{C{3cm}C{2cm}C{2cm}m{7cm}}
  \toprule
  \textbf{Method} & \textbf{Reward Function} & \textbf{Similarity Function} & \textbf{Description} \\
  \midrule
  Random sampling &  &  & Use an output that is randomly sampled from the reference model.  \\ 
  \hline
  Best-of-N (BoN) \citep{stiennon2020} & \checkmark &  & Generate N outputs, evaluate with reward function, select the best.  \\
  \hline
  MBR \citep{eikema-aziz-2022-sampling} &  & \checkmark & Generate N outputs, evaluate with expected utility function, select the best. (Details in \cref{sec:exp}) \\
  \hline
  $\mathrm{RBoN}_{\mathrm{KL}}$ \citep{jinnai2024regularized} & \checkmark &  & Maximize the mixture of the reward function and KL divergence with a constraint that the resulting policy is deterministic. \\
  \hline
  $\mathrm{RBoN}_{\mathrm{WD}}$ \citep{jinnai2024regularized} & \checkmark & \checkmark & Maximize the mixture of the reward function and WD distance with a constraint that the resulting policy is deterministic. \\
  \hline
  \textbf{$\mathrm{SRBoN}_{\mathrm{KL}}$ (Section~\ref{propose:kl})} & \checkmark &  & Maximize the mixture of the reward function and KL divergence. \\
  \hline
  \textbf{$\mathrm{SRBoN}_{\mathrm{WD}}$ (Section~\ref{propose:WD})} & \checkmark & \checkmark & Maximize the mixture of the reward function and WD distance. \\
  \hline
  \textbf{$\mathrm{RBoN}_{\mathrm{L}}$ (Section~\ref{sec:exp})}& \checkmark &  & Consider both the reward function and the token length of the sentence. (Details in \cref{sec:exp} and \cref{appendix:length})\\
  \bottomrule
\end{tabular}
\end{table}

\paragraph{Baselines.}
The list of text generation methods we evaluate is present in Table \ref{tab:decoder}.
The baseline methods include random sampling (nucleus sampling; \citealt{Holtzman2020The}), Best-of-N (BoN) sampling, Minimum Bayes Risk (MBR) decoding, and $\mathrm{RBoN}_{\mathrm{L}}$, which we describe in the following.

\textbf{Minimum Bayes Risk (MBR) decoding} \citep{kumar-byrne-2002-minimum,kumar-byrne-2004-minimum,eikema-aziz-2022-sampling} is a text generation strategy that selects an output from $N$ outputs that maximizes the expected utility \citep{Berger:1327974}. Let a utility function $u(h, y)$ quantify the benefit of choosing $h \in \mathcal{Y}_{\textbf{ref}}$ if $y$ is the correct output. Then, MBR decoding is defined as follows:
\begin{equation}
y_{\mathrm{MBR}}(x) = \underset{h \in \mathcal{Y}_{\textbf{ref}} }{\arg \max } \sum_{y \in \mathcal{Y}_{\textbf {ref}}} \frac{1}{N} u\left(h, y\right).
\end{equation}
We include MBR decoding as one of the baselines because it has been shown to be effective in a variety of text generation tasks \citep{suzgun-etal-2023-follow,bertsch-etal-2023-mbr,li2024agents,heineman2024improving}.
We follow the implementation of \cite{jinnai2024regularized} and use the cosine similarity of the sentence embedding as the utility function. We use the same embedding model as the $\mathrm{RBoN}_\mathrm{WD}$, all-mpnet-base-v2.
Note that MBR corresponds to $\mathrm{RBoN}_{\mathrm{WD}}$ with $u(h, y) = 1 - C(h, y)$ with no reward function or $\beta \rightarrow +\infty$ (Eq. (\ref{eq:wd_N})) \citep{jinnai2024regularized}.

 As an additional evaluation method, we propose \textbf{Sentence Length Regularized BoN} ($\mathrm{RBoN}_{\mathrm{L}}$), a simple baseline that adjusts the output token length to the target reward model.
In $\mathrm{RBoN}_{\mathrm{KL}}$ and $\mathrm{SRBoN}_{\mathrm{KL}}$, $\pi_{\textbf{ref}}$ was used for regularization. However, we have observed a bias in language models with respect to sentence length, namely that these models tend to produce shorter sentences with higher probability (\cref{appendix:kl}). 
To this end, we propose a simple implementation of RBoN that regularizes the generation probability of the sequence token length instead of the generation probability of each sequence.
The objective function of $\mathrm{RBoN}_{\mathrm{L}}$ is given by:
\begin{equation}
y_{\mathrm{RBoN_\mathrm{L}}}(x)=\underset{y \in \mathcal{Y}_{\textbf{ref}}}{\arg \max } \,\,R(x, y)-\frac{\beta}{|y|},
\end{equation}
where $\beta$ is a regularization parameter and $|y|$ denotes the sequence length (i.e., the number of tokens).

The rationale for this particular form of the regularization term and the experimental details of this approach are described in \cref{appendix:length}.

\subsection{Evaluation of the Algorithms}\label{sec:exp_1}

\paragraph{Setup.}
We compare the 7 methods using win rates vs. BoN sampling on the evaluation splits of the datasets. Since the RBoN method has a hyperparameter $\beta$, we first find the optimal $\beta^*$ on the train splits. 
Hyperparameter $\beta$
range is \{$1.0\times 10^{-4}$, $2.0\times 10^{-4}$, $5.0\times 10^{-4}$,$1.0\times 10^{-3}$,..., $2.0\times 10^1$\}.
We first find the optimal beta value $\beta^*$ in the train split, then we use the optimal values in the development split for the evaluation split.
In this experiment, we use OASST, SHP-Large, SHP-XL, PairRM, and RM-Mistral-7B as proxy reward models. As
as the gold reward model, we use Eurus-RM-7B to evaluate the performance of the algorithms. 
We evaluate the performance of the algorithms as the win rate against BoN sampling according to the reward score of the gold reward model (we count ties as 0.5 wins). We use Eurus-RM-7B as the gold reward model because it is reproducible as it is open source and has been shown to have a high correlation with human preference in RewardBench \citep{lambert2024rewardbench}.

\paragraph{Results.}

\cref{res:table} reveals several noteworthy results for the AlpacaFarm, Harmlessness, and Helpfulness datasets and the optimal beta $\beta^*$ is \cref{tab:optimal_beta}.
The win rate result shows that higher Spearman rank correlation values (\cref{tab:spear_rank}) correspond to better BoN sampling accuracy. This observation is intuitive. 

\cref{res:table} shows that the win rate of $\mathrm{SRBoN}_{\mathrm{KL}}$ is inferior to the deterministic version $\mathrm{RBoN}_{\mathrm{KL}}$. While $\mathrm{SRBoN}_{\mathrm{KL}}$ is proposed as a theoretically robust algorithm (\cref{sec:WD}), its performance in our experiments did not fully meet expectations. One possible factor contributing to this discrepancy could be related to the perturbation range of $\Delta R$. In our experimental setup, it is plausible that the actual perturbations of $\Delta R$ may have exceeded the assumed theoretical limits. 

Other reasons for suboptimal performance, applicable to both deterministic and stochastic versions, concern the relationship between the reference policy $\pi_{\textnormal{\textbf{ref}}}$ and the reward model. If the correlation between $\pi_{\textnormal{\textbf{ref}}}$ and the reward model is weak, the regularization effect may not contribute positively to the performance of the algorithm (\cref{appendix:kl}).

$\mathrm{SRBoN}_{\mathrm{WD}}$ shows superior performance across several settings and achieves comparable performance to $\mathrm{RBoN}_{\mathrm{WD}}$. This robust performance is remarkable given the low positive correlation between the reference policy $\pi_{\textnormal{\textbf{ref}}}$ and the reward model.

A plausible explanation for this effectiveness, especially in contrast to $\mathrm{SRBoN}_{\mathrm{KL}}$, is the constraint on the reward perturbation $\Delta R$ in $\mathrm{SRBoN}_{\mathrm{WD}}$. Unlike $\mathrm{SRBoN}_{\mathrm{KL}}$, the constraint on $\Delta R$ in $\mathrm{SRBoN}_{\mathrm{WD}}$ is independent of $\pi_{\textnormal{\textbf{ref}}}$, which mitigates low performance when there is no correlation between the reward model and $\pi_{\textnormal{\textbf{ref}}}$. 


Despite its simple implementation, $\mathrm{RBoN}_{\mathrm{L}}$ consistently outperformed BoN sampling, achieving a higher win rate on almost all tasks and models with no instances of underperformance. A detailed discussion of $\mathrm{RBoN}_{\mathrm{L}}$ is presented in \cref{appendix:length}. 

\begin{table}[tb]
\centering
\small
\caption{The win rate of various methods against BoN sampling.}\label{res:table}
\begin{tabular}{@{}lrrrrr@{}}
\toprule
\rowcolor[HTML]{EFEFEF} 
 Method & \textbf{OASST}& \textbf{SHP-Large} & \textbf{SHP-XL} & \textbf{PairRM}  & \textbf{RM-Mistral-7B} \\ \midrule

\multicolumn{6}{c}{\textbf{AlpacaFarm}} \\ \midrule
\text{BoN} & \multicolumn{1}{c}{50.0} & \multicolumn{1}{c}{50.0} & \multicolumn{1}{c}{50.0} & \multicolumn{1}{c}{50.0} & \multicolumn{1}{c}{50.0} \\
\text{MBR} & \multicolumn{1}{c}{36.0} & \multicolumn{1}{c}{42.8} & \multicolumn{1}{c}{40.8} & \multicolumn{1}{c}{39.1} & \multicolumn{1}{c}{13.0} \\
\text{Random} & \multicolumn{1}{c}{20.5} & \multicolumn{1}{c}{30.3} & \multicolumn{1}{c}{29.4} & \multicolumn{1}{c}{27.1} & \multicolumn{1}{c}{3.0} \\
\textbf{$\mathrm{RBoN}_{\mathrm{WD}}$} & \multicolumn{1}{c}{50.6} & \multicolumn{1}{c}{50.2} & \multicolumn{1}{c}{49.0 } & \multicolumn{1}{c}{\textbf{50.7} } & \multicolumn{1}{c}{49.9 } \\
\textbf{$\mathrm{RBoN}_{\mathrm{KL}}$} & \multicolumn{1}{c}{47.7} & \multicolumn{1}{c}{26.4} & \multicolumn{1}{c}{26.2} & \multicolumn{1}{c}{50.0} & \multicolumn{1}{c}{48.6} \\
\textbf{$\mathrm{RBoN}_{\mathrm{L}}$} & \multicolumn{1}{c}{\textbf{52.0}} & \multicolumn{1}{c}{50.3} & \multicolumn{1}{c}{\textbf{50.2}} & \multicolumn{1}{c}{50.1} & \multicolumn{1}{c}{\textbf{50.8}} \\
\textbf{$\mathrm{SRBoN}_{\mathrm{WD}}$} & \multicolumn{1}{c}{50.1} & \multicolumn{1}{c}{\textbf{50.6}} & \multicolumn{1}{c}{49.5 } & \multicolumn{1}{c}{50.0} & \multicolumn{1}{c}{50.1} \\
\textbf{$\mathrm{SRBoN}_{\mathrm{KL}}$} & \multicolumn{1}{c}{12.6} & \multicolumn{1}{c}{20.9} & \multicolumn{1}{c}{18.7 } & \multicolumn{1}{c}{28.0} & \multicolumn{1}{c}{4.7} \\
\midrule
\rowcolor[HTML]{EFEFEF} 
& \textbf{OASST} & \textbf{SHP-Large} & \textbf{SHP-XL}& \textbf{PairRM}& \textbf{RM-Mistral-7B}\\ \midrule
\multicolumn{6}{c}{\textbf{Harmlessness}} \\ \midrule
\text{BoN} & \multicolumn{1}{c}{50.0} & \multicolumn{1}{c}{50.0} & \multicolumn{1}{c}{50.0} & \multicolumn{1}{c}{50.0} & \multicolumn{1}{c}{50.0} \\
\text{MBR} & \multicolumn{1}{c}{40.8} & \multicolumn{1}{c}{57.4} & \multicolumn{1}{c}{50.7} & \multicolumn{1}{c}{42.7} & \multicolumn{1}{c}{14.8} \\
\text{Random} & \multicolumn{1}{c}{26.7} & \multicolumn{1}{c}{52.7} & \multicolumn{1}{c}{46.3} & \multicolumn{1}{c}{28.0} & \multicolumn{1}{c}{7.1} \\
\textbf{$\mathrm{RBoN}_{\mathrm{WD}}$} & \multicolumn{1}{c}{52.1} & \multicolumn{1}{c}{\textbf{62.2}} & \multicolumn{1}{c}{\textbf{57.1}} & \multicolumn{1}{c}{50.0 } & \multicolumn{1}{c}{49.9 } \\
\textbf{$\mathrm{RBoN}_{\mathrm{KL}}$ } & \multicolumn{1}{c}{48.2} & \multicolumn{1}{c}{46.9} & \multicolumn{1}{c}{40.4} & \multicolumn{1}{c}{50.0} & \multicolumn{1}{c}{47.4} \\
\textbf{$\mathrm{RBoN}_{\mathrm{L}}$} & \multicolumn{1}{c}{\textbf{52.2} } & \multicolumn{1}{c}{54.8 } & \multicolumn{1}{c}{54.2 } & \multicolumn{1}{c}{50.0 } & \multicolumn{1}{c}{\textbf{51.6 }} \\
\textbf{$\mathrm{SRBoN}_{\mathrm{WD}}$} & \multicolumn{1}{c}{49.7} & \multicolumn{1}{c}{51.2 } & \multicolumn{1}{c}{49.8 } & \multicolumn{1}{c}{50.0 } & \multicolumn{1}{c}{49.9 } \\
\textbf{$\mathrm{SRBoN}_{\mathrm{KL}}$} & \multicolumn{1}{c}{20.5} & \multicolumn{1}{c}{42.3} & \multicolumn{1}{c}{37.1 } & \multicolumn{1}{c}{30.4} & \multicolumn{1}{c}{5.5} \\
\midrule
\rowcolor[HTML]{EFEFEF} 
& \textbf{OASST}& \textbf{SHP-Large} & \textbf{SHP-XL} & \textbf{PairRM}& \textbf{RM-Mistral-7B}\\ \midrule
\multicolumn{6}{c}{\textbf{Helpfulness}} \\ \midrule
\text{BoN} & \multicolumn{1}{c}{50.0} & \multicolumn{1}{c}{50.0} & \multicolumn{1}{c}{50.0} & \multicolumn{1}{c}{50.0} & \multicolumn{1}{c}{50.0} \\
\text{MBR} & \multicolumn{1}{c}{41.4} & \multicolumn{1}{c}{39.2} & \multicolumn{1}{c}{33.2} & \multicolumn{1}{c}{40.0} & \multicolumn{1}{c}{6.1} \\
\text{Random} & \multicolumn{1}{c}{23.6} & \multicolumn{1}{c}{23.7} & \multicolumn{1}{c}{15.1} & \multicolumn{1}{c}{23.3} & \multicolumn{1}{c}{0.8} \\
\textbf{$\mathrm{RBoN}_{\mathrm{WD}}$} & \multicolumn{1}{c}{52.5} & \multicolumn{1}{c}{\textbf{52.4}} & \multicolumn{1}{c}{50.1 } & \multicolumn{1}{c}{\textbf{50.1}} & \multicolumn{1}{c}{49.9} \\
\textbf{$\mathrm{RBoN}_{\mathrm{KL}}$} & \multicolumn{1}{c}{44.9} & \multicolumn{1}{c}{19.9} & \multicolumn{1}{c}{13.9} & \multicolumn{1}{c}{50.0} & \multicolumn{1}{c}{50.0} \\
\textbf{$\mathrm{RBoN}_{\mathrm{L}}$} & \multicolumn{1}{c}{\textbf{52.7}} & \multicolumn{1}{c}{49.9} & \multicolumn{1}{c}{\textbf{50.8}} & \multicolumn{1}{c}{50.0} & \multicolumn{1}{c}{\textbf{50.2}} \\
\textbf{$\mathrm{SRBoN}_{\mathrm{WD}}$} & \multicolumn{1}{c}{50.4} & \multicolumn{1}{c}{49.5} & \multicolumn{1}{c}{49.6 } & \multicolumn{1}{c}{50.0} & \multicolumn{1}{c}{50.0} \\
\textbf{$\mathrm{SRBoN}_{\mathrm{KL}}$} & \multicolumn{1}{c}{13.4 } & \multicolumn{1}{c}{18.5} & \multicolumn{1}{c}{11.8 } & \multicolumn{1}{c}{24.3 } & \multicolumn{1}{c}{1.4} \\
 \bottomrule
\end{tabular}
\label{tab:diff}
\end{table}

\begin{table}[tb]
\caption{Spearman's rank correlation between Eurus-RM-7B and each proxy reward. The comprehensive Spearman's rank correlation results for all the aforementioned analyses are presented in \cref{ap:recol}.}
\centering
\small
\begin{tabular}{@{}lrrrrr@{}}
\toprule
\rowcolor[HTML]{EFEFEF} 
Dataset& \textbf{OASST} & \textbf{SHP-Large} & \textbf{SHP-XL} & \textbf{PairRM}  & \textbf{RM-Mistral-7B} \\ \midrule

\text{AlpacaFarm} & \multicolumn{1}{c}{$0.39$} & \multicolumn{1}{c}{$0.29$} & \multicolumn{1}{c}{$0.35$} & \multicolumn{1}{c}{$0.33$} & \multicolumn{1}{c}{$0.62$} 
\\ \midrule
\text{Harmlessness} & \multicolumn{1}{c}{$0.37$} & \multicolumn{1}{c}{$0.09$} & \multicolumn{1}{c}{$0.14$} & \multicolumn{1}{c}{$0.36$} & \multicolumn{1}{c}{$0.60$} 
\\\midrule
\text{Helpfulness} & \multicolumn{1}{c}{$0.39$} & \multicolumn{1}{c}{$0.38$} & \multicolumn{1}{c}{$0.50$} & \multicolumn{1}{c}{$0.34$} & \multicolumn{1}{c}{$0.75$} \\\bottomrule
\end{tabular}
\label{tab:spear_rank}
\end{table}

\begin{table}[tb]
\centering

\caption{Optimal beta $\beta^*$ in the train split}
\small
\begin{tabular}{@{}lrrrrr@{}}
\toprule
\rowcolor[HTML]{EFEFEF} 
 Method & \textbf{OASST} & \textbf{SHP-Large} & \textbf{SHP-XL} & \textbf{PairRM}  & \textbf{RM-Mistral-7B} \\ \midrule

\multicolumn{6}{c}{\textbf{AlpacaFarm}} \\ \midrule
\textbf{$\mathrm{RBoN}_{\mathrm{WD}}$} & \multicolumn{1}{c}{$20$} & \multicolumn{1}{c}{$0.5$} & \multicolumn{1}{c}{$0.5$} & \multicolumn{1}{c}{$20$} & \multicolumn{1}{c}{$0.1$} \\
\textbf{$\mathrm{RBoN}_{\mathrm{KL}}$ } & \multicolumn{1}{c}{$0.0001$} & \multicolumn{1}{c}{$0.0001$} & \multicolumn{1}{c}{$0.0001$} & \multicolumn{1}{c}{$0.0001$} & \multicolumn{1}{c}{$0.0001$} \\
\textbf{$\mathrm{RBoN}_{\mathrm{L}}$} & \multicolumn{1}{c}{$20$} & \multicolumn{1}{c}{$0.5$} & \multicolumn{1}{c}{$0.2$} & \multicolumn{1}{c}{$20$} & \multicolumn{1}{c}{$15.0$} \\
\textbf{$\mathrm{SRBoN}_{\mathrm{WD}}$} & \multicolumn{1}{c}{$0.5$} & \multicolumn{1}{c}{$0.0002$} & \multicolumn{1}{c}{$0.0001$} & \multicolumn{1}{c}{$0.0001$} & \multicolumn{1}{c}{$1.0$} \\
\textbf{$\mathrm{SRBoN}_{\mathrm{KL}}$} & \multicolumn{1}{c}{$20$} & \multicolumn{1}{c}{$0.05$} & \multicolumn{1}{c}{$0.05$} & \multicolumn{1}{c}{$20$} & \multicolumn{1}{c}{$20$} \\
\midrule
\multicolumn{6}{c}{\textbf{Harmlessness}} \\ \midrule
\textbf{$\mathrm{RBoN}_{\mathrm{WD}}$} & \multicolumn{1}{c}{$20$} & \multicolumn{1}{c}{$1.0$} & \multicolumn{1}{c}{$1.0$} & \multicolumn{1}{c}{$0.0001$} & \multicolumn{1}{c}{$5.0$} \\
\textbf{$\mathrm{RBoN}_{\mathrm{KL}}$ } & \multicolumn{1}{c}{$ 0.0001$} & \multicolumn{1}{c}{$ 0.0001$} & \multicolumn{1}{c}{$ 0.0001$} & \multicolumn{1}{c}{$ 0.0001$} & \multicolumn{1}{c}{$ 0.0001$} \\
\textbf{$\mathrm{RBoN}_{\mathrm{L}}$} & \multicolumn{1}{c}{$20$} & \multicolumn{1}{c}{$5.0$} & \multicolumn{1}{c}{$5.0$} & \multicolumn{1}{c}{$0.0001$} & \multicolumn{1}{c}{$20$} \\
\textbf{$\mathrm{SRBoN}_{\mathrm{WD}}$} & \multicolumn{1}{c}{$0.05$} & \multicolumn{1}{c}{$0.0001$} & \multicolumn{1}{c}{ $0.0001$} & \multicolumn{1}{c}{$0.0001$} & \multicolumn{1}{c}{$0.02$} \\
\textbf{$\mathrm{SRBoN}_{\mathrm{KL}}$} & \multicolumn{1}{c}{$20$} & \multicolumn{1}{c}{$0.05$} & \multicolumn{1}{c}{$20$} & \multicolumn{1}{c}{$20$} & \multicolumn{1}{c}{$20$} \\
\midrule
\multicolumn{6}{c}{\textbf{Helpfulness}} \\ \midrule
\textbf{$\mathrm{RBoN}_{\mathrm{WD}}$} & \multicolumn{1}{c}{$ 15.0$} & \multicolumn{1}{c}{$0.05$} & \multicolumn{1}{c}{$ 0.1$} & \multicolumn{1}{c}{$20$} & \multicolumn{1}{c}{$0.5$} \\
\textbf{$\mathrm{RBoN}_{\mathrm{KL}}$} & \multicolumn{1}{c}{$0.0001$} & \multicolumn{1}{c}{$0.0001$} & \multicolumn{1}{c}{$ 0.0001$} & \multicolumn{1}{c}{$ 0.0001$} & \multicolumn{1}{c}{$ 0.0001$} \\
\textbf{$\mathrm{RBoN}_{\mathrm{L}}$} & \multicolumn{1}{c}{$20$} & \multicolumn{1}{c}{$0.02$} & \multicolumn{1}{c}{$0.2$} & \multicolumn{1}{c}{$5.0$} & \multicolumn{1}{c}{$20$} \\
\textbf{$\mathrm{SRBoN}_{\mathrm{WD}}$} & \multicolumn{1}{c}{$0.5$} & \multicolumn{1}{c}{$0.001$} & \multicolumn{1}{c}{$0.005$} & \multicolumn{1}{c}{$5.0$} & \multicolumn{1}{c}{$0.0002$} \\
\textbf{$\mathrm{SRBoN}_{\mathrm{KL}}$} & \multicolumn{1}{c}{$20$} & \multicolumn{1}{c}{$0.05$} & \multicolumn{1}{c}{$20$} & \multicolumn{1}{c}{$20$} & \multicolumn{1}{c}{$20$} \\
\bottomrule
\end{tabular}
\label{tab:optimal_beta}
\end{table}

\subsection{RBoN Sensitiveness of Parameters}\label{Ex:parameter}

\paragraph{Setup.}

In this section, we evaluate the generalization performance of the model using $\beta$.
values \{$1.0\times 10^{-4}$, $2.0\times 10^{-4}$, $5.0\times 10^{-4}$,$1.0\times 10^{-3}$,..., $2.0\times 10^1$\} to the evaluation splits. We also use several models as proxy reward models, including OASST, SHP-Large, SHP-XL, PairRM, and RM-Mistral-7B. As a gold reward model, we use Eurus-RM-7B to evaluate the performance of the proxy models.
The results are visualized as a plot showing the win rates of each method compared to BoN sampling on the evaluation splits. We assign 1 point for a win and 0.5 points for a tie. 

\paragraph{Results}
The performance result of RBoN method in AlpacaFarm is illustrated in Figures \ref{fig:alpaca-l}.
This result reveals that the optimal parameters for the $\mathrm{RBoN}_{\mathrm{WD}}$ and $\mathrm{SRBoN}_{\mathrm{WD}}$ method vary between different models and reveals the performance of $\mathrm{SRBoN}_{\mathrm{WD}}$ across various problem settings, as the value of the regularization parameter $\beta$ increases, we observe a degradation performance. Intuitively, upon examining the adversarial formulation of $\mathrm{SRBoN}_{\mathrm{WD}}$, we can infer that as the regularization parameter $\beta$ increases, the magnitude of potential perturbations $\Delta R$ also increases. Furthermore, as evidenced in \cref{tab:optimal_beta}, the optimal $\beta$ value for $\mathrm{SRBoN}_{\mathrm{WD}}$ is typically smaller than that for $\mathrm{RBoN}_{\mathrm{WD}}$.

\begin{figure}[h]
    \centering
    \includegraphics[width=0.9\linewidth]{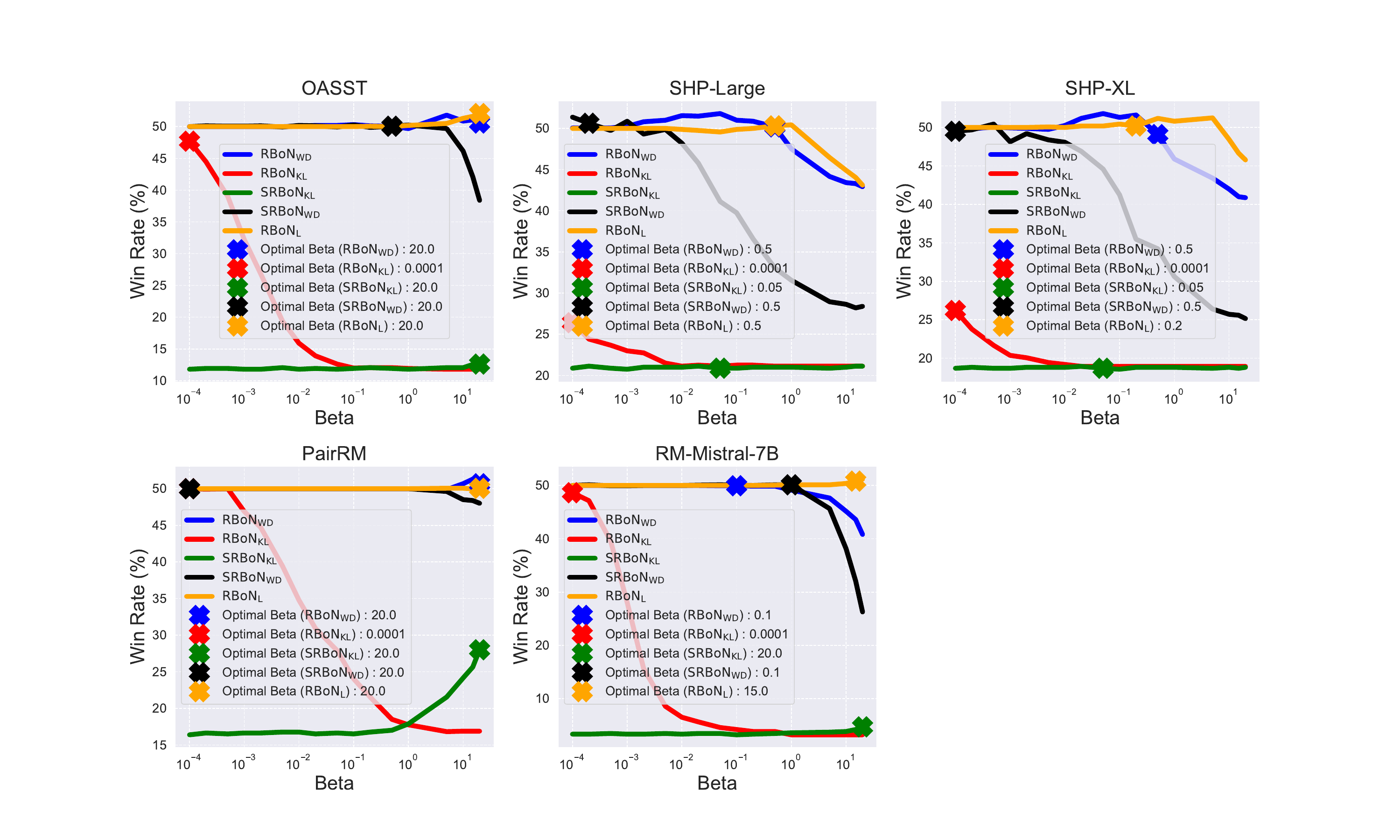}
    \caption{
   Evaluation of RBoN sensitiveness on the AlpacaFarm dataset with varying parameter $\beta$. We use proxy reward models, OASST, SHP-Large, SHP-XL, PairRM, and RM-Mistral-7B. As the gold reward model, we utilize Eurus-RM-7B.
    }
    \label{fig:alpaca-l}
\end{figure}
This result shows that $\mathrm{SRBoN}_{\mathrm{KL}}$ consistently underperforms within the $\beta$ range examined in our experiments. In particular, as shown in \cref{tab:optimal_beta}, the optimal regularization parameter $\beta^*$ for $\mathrm{SRBoN}_{\mathrm{KL}}$ is often found to be $\beta^*=20$ across different problem settings. This observation leads to an intriguing hypothesis, that the performance of $\mathrm{SRBoN}_{\mathrm{KL}}$ could potentially improve with higher values of $\beta$. 


The performance result of $\mathrm{RBoN}_{\mathrm{L}}$ demonstrates superior performance across a wide range of $\beta$ values, exhibiting performance characteristics comparable to $\mathrm{RBoN}_{\mathrm{WD}}$. Notably, this robust performance across varying $\beta$ values indicates that $\mathrm{RBoN}_{\mathrm{L}}$ exhibits low sensitivity to changes in the regularization parameter.

The results for Harmlessness and Helpfulness datasets are presented in \cref{appendix:all_method}.

\section{Related Work}
\paragraph{Robust MDPs}
Several studies have investigated RL considering the worst-case scenario for rewards. \cite{ortega2014adversarial} considers only a single-step analysis for the reward robust problem. \cite{husain2021regularized} proposes a deep RL algorithm related to Q learning for the reward robust problem. \cite{NEURIPS2021_bb1443cc} considers both a reward function and the transition probability as unknown. The policy regularization is considered a perturbation of the rewards, while the transition probability perturbations address the worst-case scenario with respect to the associated set of value functions. They define specific uncertainty sets and conduct thorough experiments. \cite{eysenbach2022maximum} shows that incorporating the policy's Shannon entropy into the reinforcement learning objective function represents the worst-case scenario for a given uncertainty set of rewards.


\paragraph{Alignment Strategies}
Two notable adaptation strategies have recently gained attention: Reinforcement Learning from Human Feedback (RLHF) and Direct Preference Optimization (DPO) \citep{stiennon2020,NEURIPS2023_a85b405e}.
RLHF incorporates human feedback into the reinforcement learning process to align the agent's behavior with human preferences. By using human feedback as a reward signal, RLHF aims to optimize it. This approach has been successfully used in LLM \citep{NEURIPS2022_b1efde53}. 
On the other hand, DPO uses the same objective function as RLHF without an explicit reward function. However, it still suffers more than RLHF from overoptimization when dealing with out-of-distribution data \citep{xu2024is}. Beyond the methods discussed, there is research in robust optimization that addresses the development of robust algorithms for scenarios with unstable preference information \citep{wu2024towards}. In particular, \cite{chowdhury2024provably} have introduced a robust DPO approach that achieves robustness without explicitly employing robust optimization techniques.
Another technique \citep{mudgal2024controlled} is to train a token-level scoring value function module to select the optimal output.
\cite{khanov2024args} is a novel decoding method that does not require additional learning and uses both the language model and the reward model knowledge. There is a parameter that determines which is more important, and depending on its value, it can be a conventional method.

\section{Conclusions}

This paper introduces three novel BoN sampling methods: $\mathrm{SRBoN}_{\mathrm{KL}}$, $\mathrm{SRBoN}_{\mathrm{WD}}$, and $\mathrm{RBoN}_{\mathrm{L}}$. To rigorously evaluate the effectiveness of these proposed methods, we conducted extensive experiments using two datasets: AlpacaFarm and Anthropic's hh-rlhf.

The $\mathrm{SRBoN}_{\mathrm{KL}}$ and $\mathrm{SRBoN}_{\mathrm{WD}}$ methods extend the previous $\mathrm{RBoN}_{\mathrm{KL}}$ and $\mathrm{RBoN}_{\mathrm{WD}}$ methods, respectively. In particular, $\mathrm{SRBoN}_{\mathrm{KL}}$ and $\mathrm{SRBoN}_{\mathrm{WD}}$ produce a stochastic optimal policy that differs from their deterministic counterparts. The theoretical guarantees of their robustness increase the reliability of the methods in different scenarios.

The $\mathrm{RBoN}_{\mathrm{L}}$ method is a contribution to the field of RBoN sampling, providing a simple yet effective approach. Despite its simplicity, our experiments show that $\mathrm{RBoN}_{\mathrm{L}}$ achieves performance comparable to the more complex $\mathrm{RBoN}_{\mathrm{WD}}$. This finding highlights the potential of $\mathrm{RBoN}_{\mathrm{L}}$ as a computationally efficient alternative to more complicated methods, making it particularly attractive for applications with limited resources or stringent performance requirements.

In conclusion, this paper presents three innovative BoN sampling methods that significantly contribute to the field. The experimental results and theoretical guarantees underscore the effectiveness and reliability of these methods. Our work lays the foundation for further research and applications of robust BoN sampling techniques in a wide range of domains.


\section{Limitations}
While our proposed method demonstrates promising results, there are several limitations to note. 
The proposed method requires no fine-tuning of the LLMs but inevitably increases computational overhead at inference. In contrast, fine-tuning approaches incur a one-time cost during training while eliminating overhead at inference. Another concern is that the proposed method considers a max-min problem, so if, for example, the correlation between the proxy reward and the gold reward is strong, performance is reduced due to conservative output selection.

Our study lacks an analysis of whether the reward perturbations satisfy the conditions outlined in Theorems~\ref{theory:kl-minmax} and \ref{theory:wd}. Evaluating the error of the reward and utility function in experiments remains an area for future work. Additionally, the selection of the parameter $\beta$ requires a validation set in the current setting, and developing an automated method to determine $\beta$ is a promising direction for further research.





Furthermore, our approach relies on a specific utility function, which is a prerequisite for applying the proposed method, and the method does not account for process reward models, which may limit its applicability in some scenarios. It is also worth noting that the experiments conducted in this study were limited to three 
English datasets, leaving open the question of its generalizability to other languages or domains.

In addition, the proposed method is based on a probabilistic framework, which, while effective for uncertainty, may not align with real-world applications where deterministic versions (RBoN) are often preferred for their predictability and safety. Based on the analysis in this paper,  the analysis of the deterministic RBoN is a possible direction for future work.

Finally, while the current formulation is specific, the proposed method has the potential to be extended to other divergence measures, such as $f$-divergences, offering an exciting avenue for future investigation.

\section*{Acknowledgments}
We sincerely thank the Action Editor, Pascal Poupart, and the anonymous reviewers for their insightful comments and suggestions.
Kaito Ariu's research is supported by JSPS KAKENHI Grant No. 23K19986. 

\bibliography{ms,anthology}

\bibliographystyle{tmlr}
\newpage

\appendix
\section{Detailed proof of \cref{theory:wd}}\label{appendix:wd-thoery}

\begin{definition}[Similarity-based Lipschitz Continuity] A function $f$ is said to have Similarity-based Lipschitz Continuity if, for any $y, y^\prime \in \mathcal{Y}$, the following holds: 
\begin{equation*}
|f(y) - f(y^\prime)| \leq C(y, y^\prime) 
\end{equation*} 
where \[ C(y, y') = 1 - \cos\left(\mathrm{emb}(y), \mathrm{emb}(y^\prime)\right) \] 
\end{definition}
We first explain how the objective function is reformulated to a max-min problem. Let us focus on the regularization term, 1-$\textbf{WD}$ term rewrite related to $\pi$, $\pi_{\textnormal{\textbf{ref}}}$

The following analysis is done in the framework of finite probability spaces.
To simplify the following proof, we introduce the following notation. Let $x_1, x_2, \cdots, x_n$ be $n$ places and consider the function $f$, where $f_i$ refers to the value $f(x_i)$.
\begin{equation}\label{eq:wd_dev}
    \begin{aligned}
     \textbf{WD}[\nu \| \mu]&= \min _{\gamma \in \Gamma(\nu, \mu)} \sum_{(i,j) \in \mathcal{Y}\times \mathcal{Y}}C_{ij}\gamma_{ij}\\
     &= \min _{\gamma \in\mathbb{R}^{Y \times Y^{\prime}}} \sum_{(i,j) \in \mathcal{Y}\times \mathcal{Y} } C_{ij} \gamma_{ij} + \Psi(\gamma),
     \end{aligned}
\end{equation}
where $\gamma$ is a coupling of the probability measure $\nu$ and $\mu$, $\Gamma(\nu, \mu)=\left\{\gamma \in \mathbb{R}^{Y \times Y^{\prime}} | \sum_{j \in \mathcal{Y}} \gamma_{i j}=\nu_i, \sum_{i \in \mathcal{Y}}, \gamma_{i j}=\mu_j, \gamma_{ij} \geq 0 \, \,\text{for all} \, \, i,j \right\}$, $Y$ and $Y^\prime$ ($=\mathcal{Y}$) is sample space respectively, corresponding to outcomes $i$ and $j$ respectively and $\Psi(\gamma) = 0 $ if $\gamma \in \Gamma(\nu, \mu), +\infty$ otherwise.

Constraint terms, a coupling of the probability measure $\gamma$ needs to satisfy:
\begin{equation}\label{eq:gamma_cons}
\begin{aligned}
    \sum_{j} \gamma_{ij} &= \nu_i \quad \forall i \in \mathcal{Y}\\
    \sum_{i} \gamma_{ij} &= \mu_j \quad \forall j \in \mathcal{Y}\\
    \gamma_{ij} &\geq 0  \quad \forall i,j \in \mathcal{Y}\\
\end{aligned}
\end{equation}
This constraint can be expressed in $\mathbf{A \gamma} = \mathbf{b}$, indicating its linear nature.
Specifically, $\mathbf{A}$ and $\mathbf{b}$ are defined as $\mathbf{A}=\binom{I_i \otimes \mathbf{1}_j^{\top}}{I_j \otimes \mathbf{1}_i^{\top}}$, $\mathbf{b}=\binom{\nu}{\mu}$. In this formulation, $I_i$ and $I_j$ denote identity matrices of dimension $\mathcal{Y} \times \mathcal{Y}$, while $\mathbf{1}_i$ and $\mathbf{1}_j$ represent column vectors of dimension $\mathcal{Y}$ with all components equal to 1. The symbol $\otimes$ denotes the Kronecker product.

\begin{lemma}
Eq. (\ref{eq:wd_dev}) is reformulated as a max problem from a min problem.
    \begin{equation}\label{eq:maxmin}
\max_{\substack{f \\ |f_i-f_j| \leq C_{ij}}} \sum_i f_i\nu_i-\sum_{j} f_j\mu_j
\end{equation}

\end{lemma}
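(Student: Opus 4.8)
The plan is to recognize Eq.~(\ref{eq:wd_dev}) as a linear program in the coupling variable $\gamma$ and pass to its Lagrangian dual. Concretely, the primal is $\min_{\gamma \geq 0,\ \mathbf{A}\gamma = \mathbf{b}} \langle C, \gamma\rangle$ with $\mathbf{A}$ and $\mathbf{b}$ as displayed just before the lemma (the marginal constraints stacked together). I would introduce a dual vector for the equality constraints $\mathbf{A}\gamma = \mathbf{b}$; because $\mathbf{b} = \binom{\nu}{\mu}$ has two blocks of length $n$, the dual variable splits naturally into two vectors, which I will call $g$ (paired with the $\nu$-constraints $\sum_j \gamma_{ij} = \nu_i$) and $h$ (paired with the $\mu$-constraints $\sum_i \gamma_{ij} = \mu_j$). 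The nonnegativity constraint $\gamma \geq 0$ contributes the usual inequality in the dual.

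The key steps, in order: (1) Write the Lagrangian $L(\gamma, g, h) = \sum_{ij} C_{ij}\gamma_{ij} - \sum_i g_i(\sum_j \gamma_{ij} - \nu_i) - \sum_j h_j(\sum_i \gamma_{ij} - \mu_j)$ and minimize over $\gamma \geq 0$; the coefficient of $\gamma_{ij}$ is $C_{ij} - g_i - h_j$, so the infimum is $\sum_i g_i\nu_i + \sum_j h_j\mu_j$ if $C_{ij} - g_i - h_j \geq 0$ for all $i,j$ and $-\infty$ otherwise. This gives the dual LP $\max\{\sum_i g_i\nu_i + \sum_j h_j\mu_j \mid g_i + h_j \leq C_{ij}\ \forall i,j\}$. (2) Invoke LP strong duality (the primal is feasible — e.g.\ the independent coupling $\gamma_{ij} = \nu_i\mu_j$ — and bounded below by $0$ since $C \geq 0$), so the primal optimum equals the dual optimum. (3) Reduce the two-vector dual to the single-function form in Eq.~(\ref{eq:maxmin}). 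Here I use that the cost is a (pseudo)metric-like object with $C_{ii}=0$ (since $C(y,y)=1-\cos(\mathrm{emb}(y),\mathrm{emb}(y))=0$): the constraints $g_i + h_j \le C_{ij}$ with $i=j$ force $h_i \le -g_i$, and since $h$ appears in the objective only through $+\sum_j h_j \mu_j$ with $\mu \geq 0$, at optimum we may take $h_j = -g_j$ (pushing each $h_j$ as large as allowed can only increase the objective, and $h_j=-g_j$ is feasible because the remaining constraints become $g_i - g_j \leq C_{ij}$, i.e.\ $|g_i - g_j| \le C_{ij}$ after symmetrizing in $i,j$ using $C_{ij}=C_{ji}$). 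Substituting $h = -g$ and renaming $g \rightsquigarrow f$ yields exactly $\max_{|f_i - f_j| \le C_{ij}} \sum_i f_i \nu_i - \sum_j f_j \mu_j$.

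The main obstacle is step (3): the clean Kantorovich–Rubinstein reduction from the two-potential dual to the single-Lipschitz-potential form genuinely uses structural properties of $C$ — namely $C_{ij} \geq 0$, $C_{ii} = 0$, and symmetry $C_{ij} = C_{ji}$ — all of which hold for the cosine distance but should be stated explicitly (the paper's ``Similarity-based Lipschitz Continuity'' definition and the note that $C$ is nonnegative are presumably there for this purpose). One must also be slightly careful that the argument "replace $h_j$ by $-g_j$" preserves feasibility for \emph{all} pairs, not just the diagonal; this is where $|f_i - f_j| \le C_{ij}$ emerges, and it relies on $C$ satisfying (at least) the triangle-type slack that a genuine distance provides, or more simply just on the direct check that $g_i - g_j \le C_{ij}$ is implied. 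A secondary, more routine point is being careful about $\min$ versus $\inf$ and finiteness when writing the $\Psi(\gamma)$ indicator reformulation in Eq.~(\ref{eq:wd_dev}); since the probability simplex is compact and $C$ is finite, the minimum is attained and strong duality applies without regularity pathologies.
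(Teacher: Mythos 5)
Your proposal follows essentially the same route as the paper's proof: pass from the Kantorovich primal to the two-potential LP dual with constraint $f_i+g_j\leq C_{ij}$ via Lagrange multipliers, invoke strong duality for linear programs, and then collapse the two potentials to a single $C$-Lipschitz potential $f$ with $g=-f$ --- the paper carries out this last step via the $c$-transform ($g=f^c$ and the assertion $f^{cc}=f=-f^c$) rather than your direct substitution $h=-g$, but this is the same Kantorovich--Rubinstein reduction. The caveat you flag in step (3) --- that the one-potential collapse leans on metric-like properties of $C$ (nonnegativity, $C_{ii}=0$, symmetry, and a triangle-type inequality) --- applies equally to the paper's own argument, which asserts without proof that $f^c$ is similarity-based Lipschitz and that $f^{cc}=f$, so your sketch matches the paper in both structure and level of rigor.
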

\begin{proof}

Taking into account the constraints specified in Eq. (\ref{eq:gamma_cons}), we proceed with the application of the Lagrange multiplier method:
\begin{equation*}\label{cons1}
\begin{aligned}
\textbf{WD}[\nu \| \mu]&=\min_{\gamma \in \mathbb{R}^{Y \times Y^{\prime}}}  \sum_{i,j} C_{ij} \gamma_{ij}+\max_{f, g}\,\,\{ \sum_i f_i \nu_i +\sum_{j} g_j \mu_j-\sum_{i,j}(f_i+g_j) \gamma_{ij}\}
\end{aligned}
\end{equation*}

For a more intuitive understanding, $f$ and $g$ can be considered analogous to Lagrange multipliers.
Except for the first term, all subsequent entries refer to constraints on $\gamma$.

\begin{equation*}
\textbf{WD}[\nu \| \mu]=\min_{\gamma \in \mathbb{R}^{Y \times Y^{\prime}}} \max_{f, g} \,\,\sum_{i,j}(C_{ij}-f_i-g_j) \gamma_{ij}+\sum_i f_i\nu_i+\sum_{j} g_j\mu_j
\end{equation*}
can be seen from Eq. (\ref{eq:gamma_cons}), these constraints are linear. From Theorem 5.2 \citep{vanderbei2020linear}, in linear programming, there is never a gap between the primal and the dual optimal objective values. Under the strong duality theorem (e.g., $\min_x \max_y f(x,y) = \max_y \min_x f(x,y)$), we can exchange the $\min$ $\max$ term.

\begin{equation*}
\textbf{WD}[\nu \| \mu]=\max_{f, g}\,\, \min_{\gamma \in \mathbb{R}^{Y \times Y^{\prime}}} \,\,\sum_{i,j}(C_{ij}-f_i-g_j)\gamma_{ij}+\sum_i f_i\nu_i+\sum_{j} g_j\mu_j
\end{equation*}
If $C_{ij}-f_i-g_j \geq 0 $ for all $i,j$, the optimal value of $\min_\gamma \sum_{i,j}(C_{ij}-f_i-g_j)\gamma_{ij}$ is 0, otherwise $\infty$. This observation allows us to derive the inequality constraint for the first item. 
We can include this as a constraint in the equation:
\begin{equation*}\label{cons2}
\textbf{WD}[\nu \| \mu]=\max _{\substack{f, g \\ f_i+g_j \leq C_{ij}}} \sum_i f_i\nu_i+\sum_{j} g_j \mu_j
\end{equation*}
Our next goal is to express the above function, currently represented by $f$ and $g$, exclusively in terms of the function $f$. From the given constraints, we have established that $f_i + g_j \leq C_{ij}$ for all $i$ and $j$.
We can express this as follows:

\begin{equation}\label{cons10}
g_j \leq \min _i\,\,\{C_{ij}-f_i\}
\end{equation}
To fix $i = i^*$, since $\min_i$ picks the minimum value. The index $i^*$ gives this minimum, and fixing $i$ to $i^*$ turns the inequality in Eq. (\ref{cons10}) into the equality in Eq. (\ref{cons3}).
\begin{equation}\label{cons3}
g_j=\{C_{i^*j}-f_{i^*}\}
\end{equation}

Eq. (\ref{cons3}) gives us a function which is called the $c$-transform of $f_j$ and is often denoted by $f^c_j$,
\begin{equation*}
f^c_j=g_j=\{C_{i^*j}-f_{i^*}\}
\end{equation*}

We can now rewrite $\textbf{WD}$ with $f^c_j$ as
\begin{equation}\label{eq:wd_c}
\textbf{WD}[\nu \| \mu]=\max _{f}\,\, \sum_i f_i \nu_i+\sum_j f^c_j \mu_j
\end{equation}

If $f$ is similarity-based Lipschitz, $f^c$ is also similarity-based Lipschitz, for all $\boldsymbol{i}$ and $\boldsymbol{j}$ we have
\begin{equation*}\label{cons4}
\begin{aligned}
& \left|f^c_j-f^c_i\right| \leq C_{ij} \\
& \Longrightarrow-C_{ij} \leq f^c_j-f^c_i \leq C_{ij} \\
& \Longrightarrow-f^c_i \leq C_{ij}-f^c_j
\end{aligned}
\end{equation*}

\begin{equation*}
\begin{aligned}
& \Longrightarrow-f^c_i \leq \min _{j}\,\,\left\{C_{ij}-f^c_j\right\} \\
\end{aligned}
\end{equation*}
Upper bound of $\min _{j}\left\{C_{ij}-f^c_j\right\}$ is choosing $j \rightarrow i$
\begin{equation*}
\begin{aligned}
\min_{j}\,\,\left\{C_{ij}-f^c_j\right\} \leq-f^c_i \\
\end{aligned}
\end{equation*}
It can be shown that $f^{c c}_{i}=f_{i} = \min _{j}\left\{C_{ij}-f^c_j\right\}$. 
This means that $-g=-f^c = f$.
Substituting $f^c_j=-f_j$ into Eq. \ref{eq:wd_c}, we get
\begin{equation}\label{eq:maxmin2}
\max_{\substack{f \\ |f_i-f_j| \leq C_{ij}}} \sum_i f_i\nu_i-\sum_{j} f_j\mu_j
\end{equation}
which is the dual form of 1-Wasserstein distance. 

\end{proof}
Finally, by substituting $\Delta R$ for $f$, we get:
\begin{equation*}
\begin{aligned}
\pi_{\mathrm{SRBoN}_\mathrm{WD}}(x) &=\max_{\pi\in \Pi}\mathbb{E}_{y \sim \pi(\cdot \mid x)}[R(x,y)] -\Omega (\pi)\\
&=\max_{\pi\in \Pi} \mathbb{E}_{y \sim \pi(\cdot \mid x)}[R(x,y)] -\max_{\Delta R \in \mathcal{R}_{\Delta}}\,\,\beta\left(\sum_\mathcal{Y_{\textbf{ref}}} \Delta R(x,y)\pi_{\textnormal{\textbf{ref}}}(y \mid x)-\sum_\mathcal{Y_{\textbf{ref}}} \Delta R(x,y)\pi(y \mid x)\right)\\
&=\max_{\pi\in \Pi}  \mathbb{E}_{y \sim \pi(\cdot \mid x)}[R(x,y)] -\min_{\Delta R \in \mathcal{R}_{\Delta}}\,\,\beta\left(-\sum_\mathcal{Y_{\textbf{ref}}} \Delta R(x,y)\pi_{\textnormal{\textbf{ref}}}(y \mid x)+\sum_\mathcal{Y_{\textbf{ref}}} \Delta R(x,y)\pi(y \mid x)\right)\\
\end{aligned}
\end{equation*}
where $\Omega (\pi) = \beta \textbf{WD}[\pi_{\textnormal{\textbf{ref}}} (\cdot \mid x) \| \pi(\cdot \mid x)]$.

\begin{equation*}
    \pi_{\mathrm{SRBoN}_\mathrm{WD}}(x) = \max_{\pi\in \Pi} \,\,\min_{\Delta R \in \mathcal{R}_{\Delta}}\mathbb{E}_{y \sim \pi(\cdot \mid x)}\left[R(x,y) - \beta \Delta R(x,y)\right] + \beta \sum_{y \in \mathcal{Y}_{\textbf{ref}}} \pi_{\textnormal{\textbf{ref}}}(y \mid x)\Delta R(x,y)
\end{equation*}
\begin{equation*}
\text{where}\quad \mathcal{R}_{\Delta}:=\left\{\Delta R \in \mathbb{R}^{\mathcal{X}\times\mathcal{Y}_{\textnormal{\textbf{ref}}}} \mid \left|\Delta R(x,y)-\Delta R\left(x,y^{\prime}\right)\right| \leq C\left(y, y^{\prime}\right) \quad \forall y, y^{\prime} \in \mathcal{Y}_{\textnormal{\textbf{ref}}}\right\}
\end{equation*}

\newpage
\section{Relationship Between $\pi_{\textnormal{\textbf{ref}}}$ and the Proxy Reward Model}\label{appendix:kl}
Despite the theoretical robustness of $\mathrm{SRBoN}_{\mathrm{KL}}$ demonstrated in the analyses presented in \cref{sec:kl_sec}, the experimental results (\cref{sec:exp_1} and \cref{Ex:parameter}) did not show comparable robustness. This section aims to explain the reasons for this discrepancy. Recall the objective function of $\mathrm{SRBoN}_{\mathrm{KL}}$:
\begin{equation*}
\begin{aligned}
\pi_{\mathrm{SRBoN}_\mathrm{KL}}(x) &=\max _{\pi}\mathbb{E}_{y \sim \pi(\cdot \mid x)} [R(x,y)]- \Omega(\pi)\\
&= \max _{\pi}\mathbb{E}_{y \sim \pi(\cdot \mid x)} [R(x,y)]- \sum_{\mathcal{Y}_{\textnormal{\textbf{ref}}}} \pi(y \mid x) \log\frac{\pi(y\mid x)}{\pi_{\textnormal{\textbf{ref}}}(y\mid x)}
\end{aligned}
\end{equation*}

This implies that ideally, $\pi_{\textnormal{\textbf{ref}}}$ and the reward function $R$ should have some form of relationship (e.g. positive correlation) that facilitates learning. However, $\pi_{\textnormal{\textbf{ref}}}$ is influenced by complex factors such as length bias. 

To verify this hypothesis, we examine two aspects: (1) the correlation between the Eurus-RM-7B reward values, which were used as the gold reward model in our experiments, and the probabilities assigned by $\pi_{\textnormal{\textbf{ref}}}$; (2) the relationship between the length of the outputs generated by $\pi_{\textnormal{\textbf{ref}}}$ and the generation probabilities of those outputs.

\begin{table}[ht]
\centering
\caption{The correlation between the Eurus-RM-7B reward values and the probabilities assigned by $\pi_{\textnormal{\textbf{ref}}}$}
\label{ap_ex:1}
\begin{tabular}{ C{3cm}C{3cm}C{3cm} }
  \hline
  \textbf{AlpacaFarm} & \textbf{Harmlessness} & \textbf{Helpfulness} \\
  \hline
   $-0.224$ & $0.088$ & $-0.097$ \\
  \hline
\end{tabular}
\end{table}

\vspace{0.5cm}

\begin{table}[ht]
\centering
\caption{The relationship between the length of the outputs generated by $\pi_{\textnormal{\textbf{ref}}}$ and the generation probabilities of these outputs.}
\label{ap_ex:2}
\begin{tabular}{ C{3cm}C{3cm}C{3cm} }
  \hline
  \textbf{AlpacaFarm} & \textbf{Harmlessness} & \textbf{Helpfulness} \\
  \hline
   $-0.877$ & $-0.924$ & $-0.854$ \\
  \hline
\end{tabular}
\end{table}

As can be seen from \cref{ap_ex:1}, there is negligible correlation between $\pi_{\textnormal{\textbf{ref}}}$ and  Eurus-RM-7B (gold reward model) in terms of Harmlessness and Helpfulness. In addition, the domain of the AlpacaFarm dataset tends to be negatively correlated. 

These results explain the performance degradation observed when this relationship is included in the regularization term. \cref{ap_ex:2} shows that $\pi_{\textnormal{\textbf{ref}}}$ has a bias towards shorter sentences, with output probabilities increasing as sentence length decreases.

\newpage

\section{Supplemently Results}\label{appendix:all_method}
\cref{fig:harmless-l,fig:helpful-l} show evaluation of RBoN sensitivity on the Harmlessness subset and Helpfulness of the hh-rlhf dataset. These results were similar to those seen in AlpacaFarm using \cref{Ex:parameter}. This means that each method is not necessarily dependent on the dataset.

\cref{fig:alpaca-wd,fig:harmless-wd,fig:helpful-wd} compare $\mathrm{RBoN}_{\mathrm{WD}}$ and $\mathrm{SRBoN}_{\mathrm{WD}}$ and \cref{fig:alpaca-kl,fig:harmless-kl,fig:helpful-kl} compare $\mathrm{RBoN}_{\mathrm{KL}}$ and $\mathrm{SRBoN}_{\mathrm{KL}}$. These results show that SRBoN is not superior to RBoN. This is for reasons also discussed in \cref{sec:exp}

\begin{figure}[htbp]
    \centering
    \includegraphics[width=0.95\linewidth]{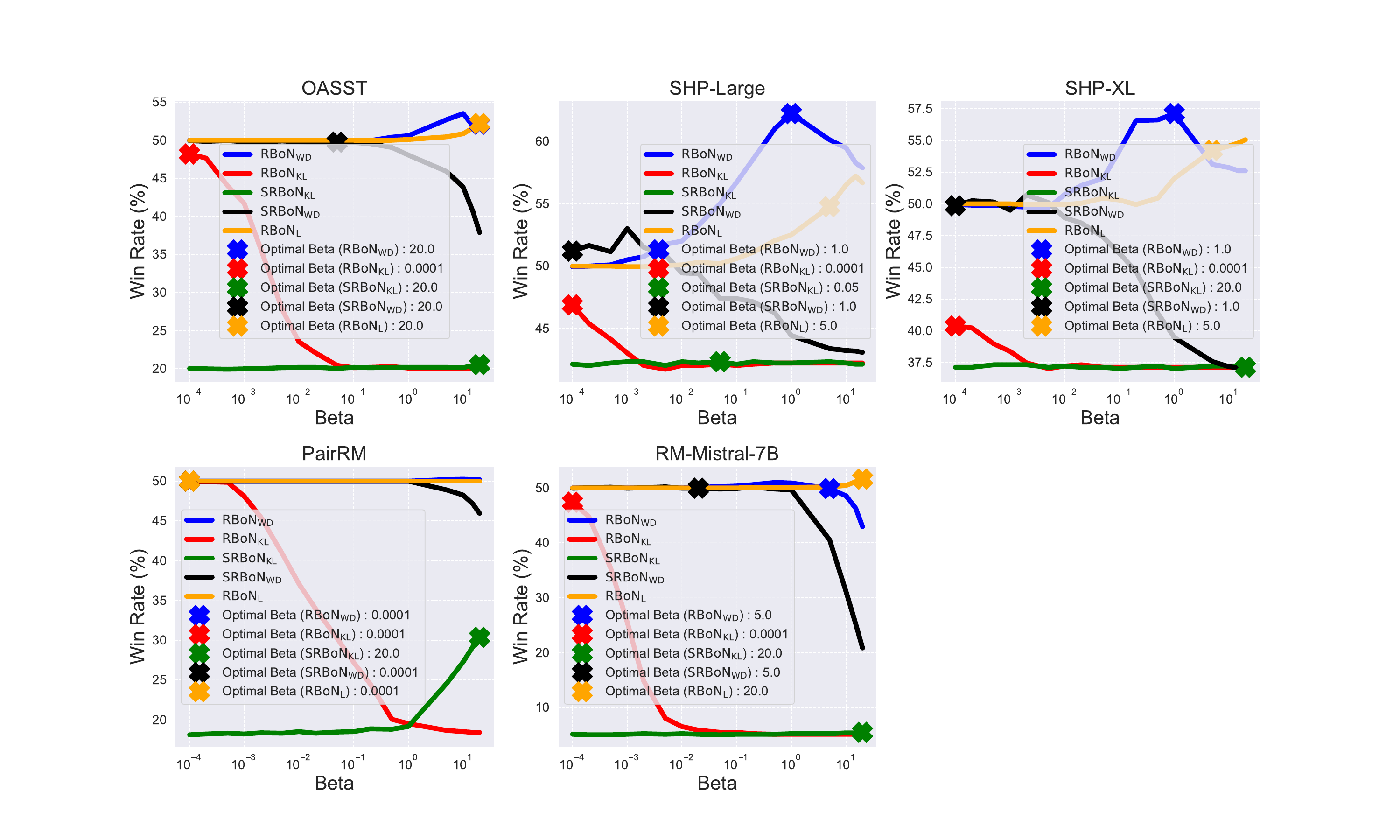}
    \caption{
    Evaluation of RBoN sensitiveness on the Harmlessness subset of the hh-rlhf dataset with varying parameter $\beta$. We use proxy reward models, OASST, SHP-Large, SHP-XL,  PairRM, and RM-Mistral-7B. As the gold reward model, we utilize Eurus-RM-7B.
    }
    \label{fig:harmless-l}
\end{figure}

\begin{figure}[htbp]
    \centering
    \includegraphics[width=0.95\linewidth]{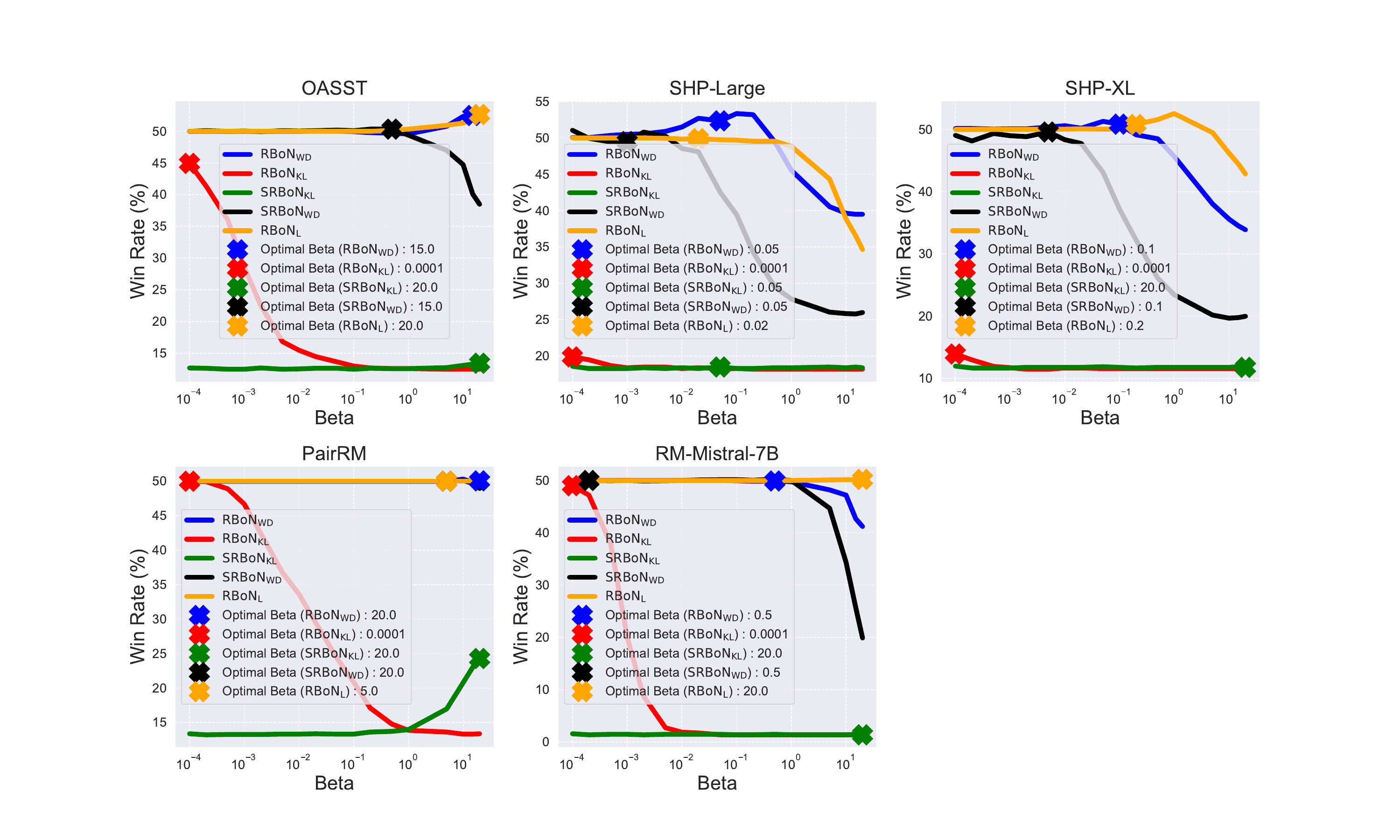}
    \caption{
    Evaluation of RBoN sensitiveness on the Helpfulness subset of the hh-rlhf dataset with varying parameter $\beta$. We use proxy reward models, OASST, SHP-Large, SHP-XL, PairRM, and RM-Mistral-7B. As the gold reward model, we utilize Eurus-RM-7B.
    }
    \label{fig:helpful-l}
\end{figure} 
\begin{figure}[htbp]
    \centering
    \includegraphics[width=0.95\linewidth]{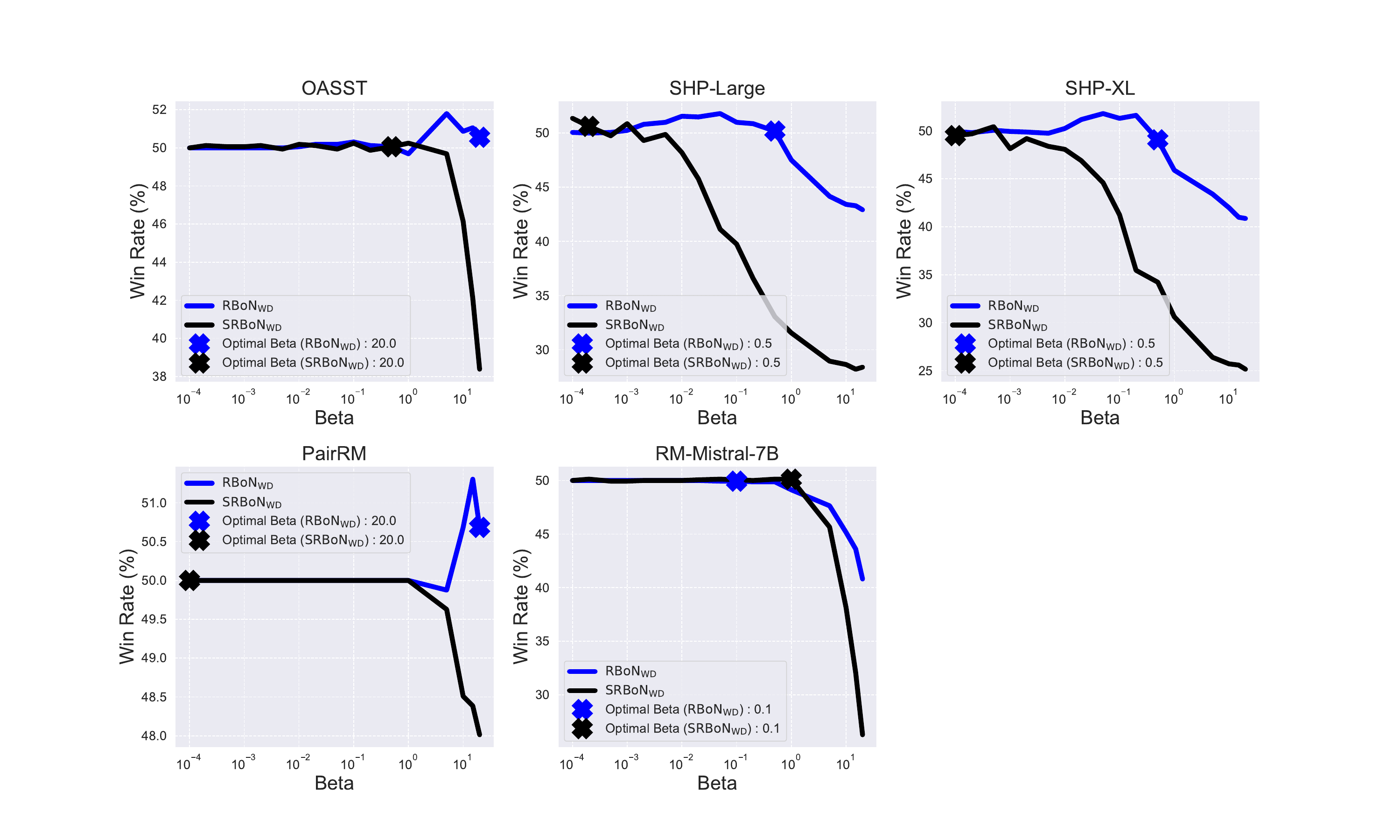}
    \caption{
   Evaluation of $\mathrm{RBoN}_{\mathrm{WD}}$ and $\mathrm{SRBoN}_{\mathrm{WD}}$ sensitiveness on the AlpacaFarm dataset with varying parameter $\beta$. We use proxy reward models, OASST, SHP-Large, SHP-XL,  PairRM, and RM-Mistral-7B. As the gold reward model, we utilize Eurus-RM-7B.
    }
    \label{fig:alpaca-wd}
\end{figure}
\begin{figure}[htbp]
    \centering
    \includegraphics[width=0.95\linewidth]{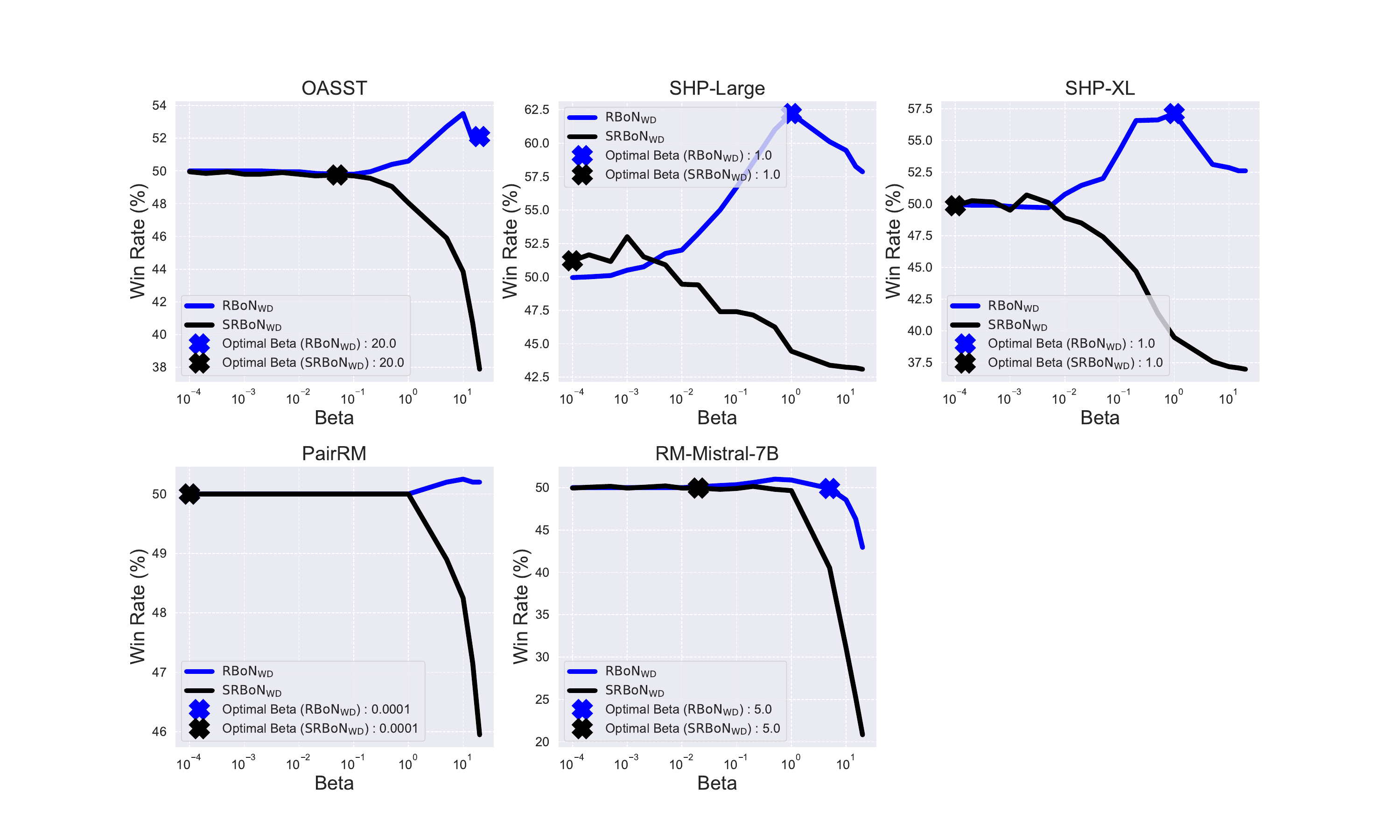}
    \caption{
    Evaluation of $\mathrm{RBoN}_{\mathrm{WD}}$ and $\mathrm{SRBoN}_{\mathrm{WD}}$ sensitiveness on the Harmlessness subset of the hh-rlhf dataset with varying parameter $\beta$. We use proxy reward models, OASST, SHP-Large, SHP-XL,  PairRM, and RM-Mistral-7B. As the gold reward model, we utilize Eurus-RM-7B.
    }
    \label{fig:harmless-wd}
\end{figure}

\begin{figure}[htbp]
    \centering
    \includegraphics[width=0.95\linewidth]{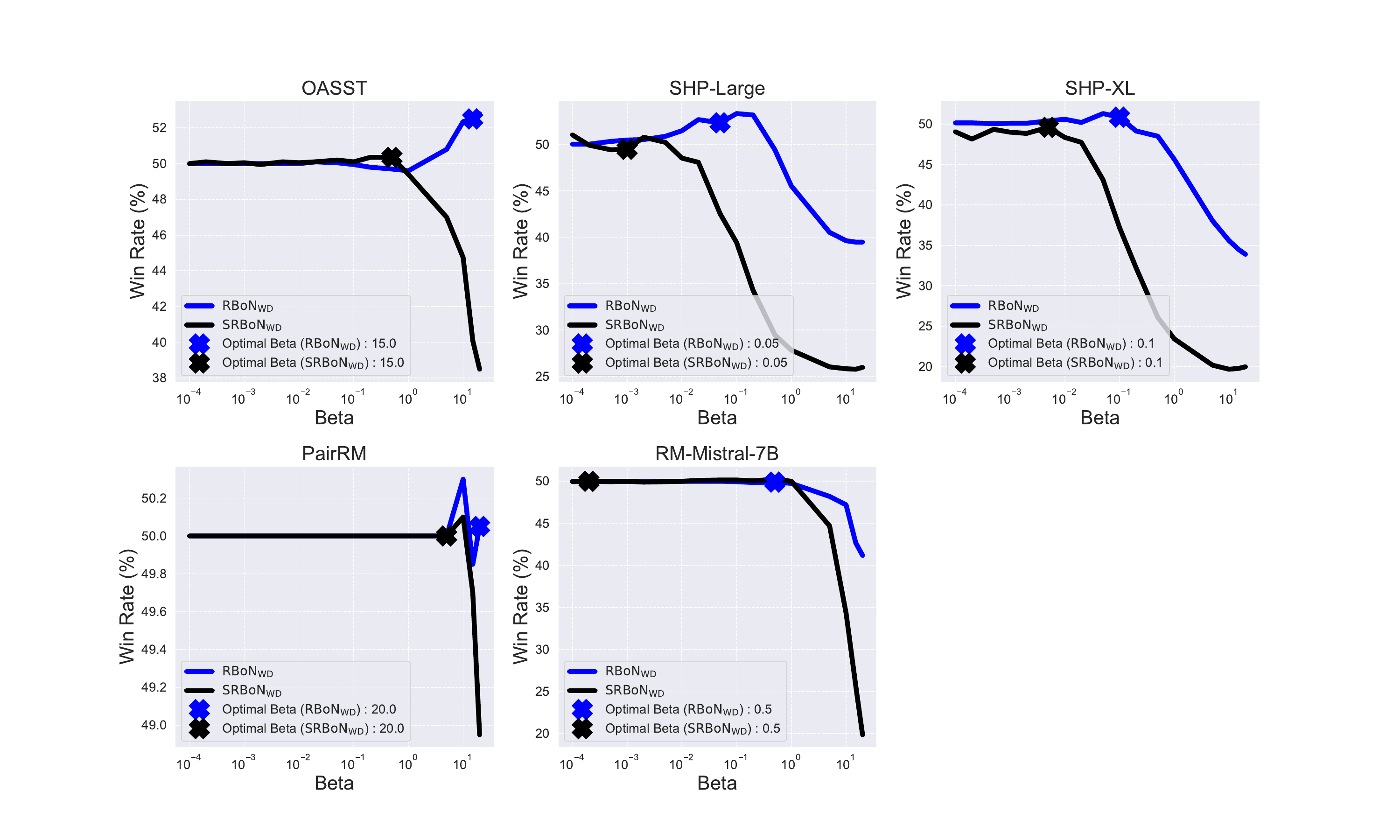}
    \caption{
    Evaluation of $\mathrm{RBoN}_{\mathrm{WD}}$ and $\mathrm{SRBoN}_{\mathrm{WD}}$ sensitiveness on the Helpfulness subset of the hh-rlhf dataset with varying parameter $\beta$. We use proxy reward models, OASST, SHP-Large, SHP-XL,  PairRM, and RM-Mistral-7B. As the gold reward model, we utilize Eurus-RM-7B.
    }
    \label{fig:helpful-wd}
\end{figure}

\begin{figure}[htbp]
    \centering
    \includegraphics[width=0.95\linewidth]{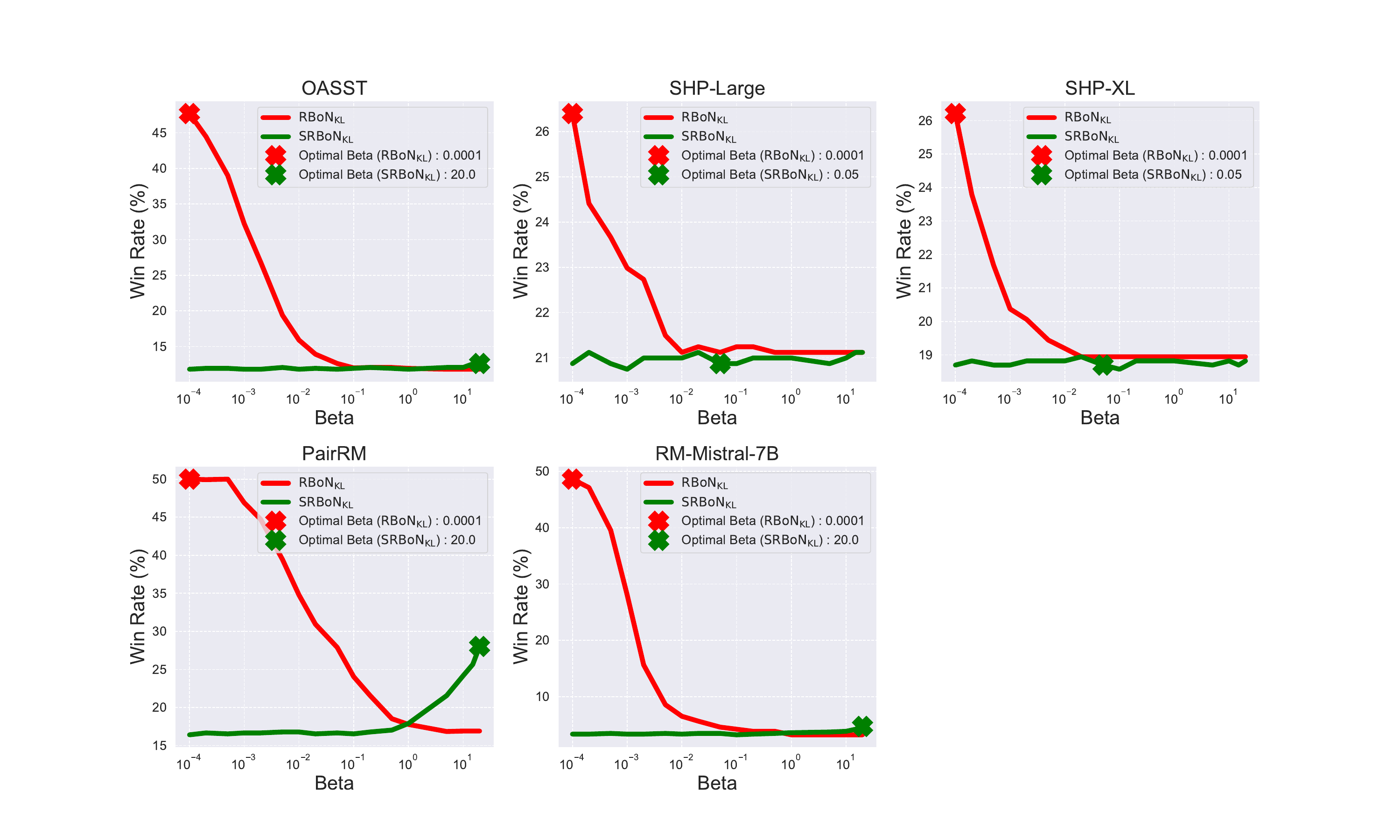}
    \caption{
   Evaluation of $\mathrm{RBoN}_{\mathrm{KL}}$ and $\mathrm{SRBoN}_{\mathrm{KL}}$ sensitiveness on the AlpacaFarm dataset with varying parameter $\beta$. We use proxy reward models, OASST, SHP-Large, SHP-XL,  PairRM, and RM-Mistral-7B. As the gold reward model, we utilize Eurus-RM-7B.
    }
    \label{fig:alpaca-kl}
\end{figure}
\begin{figure}[htbp]
    \centering
    \includegraphics[width=0.95\linewidth]{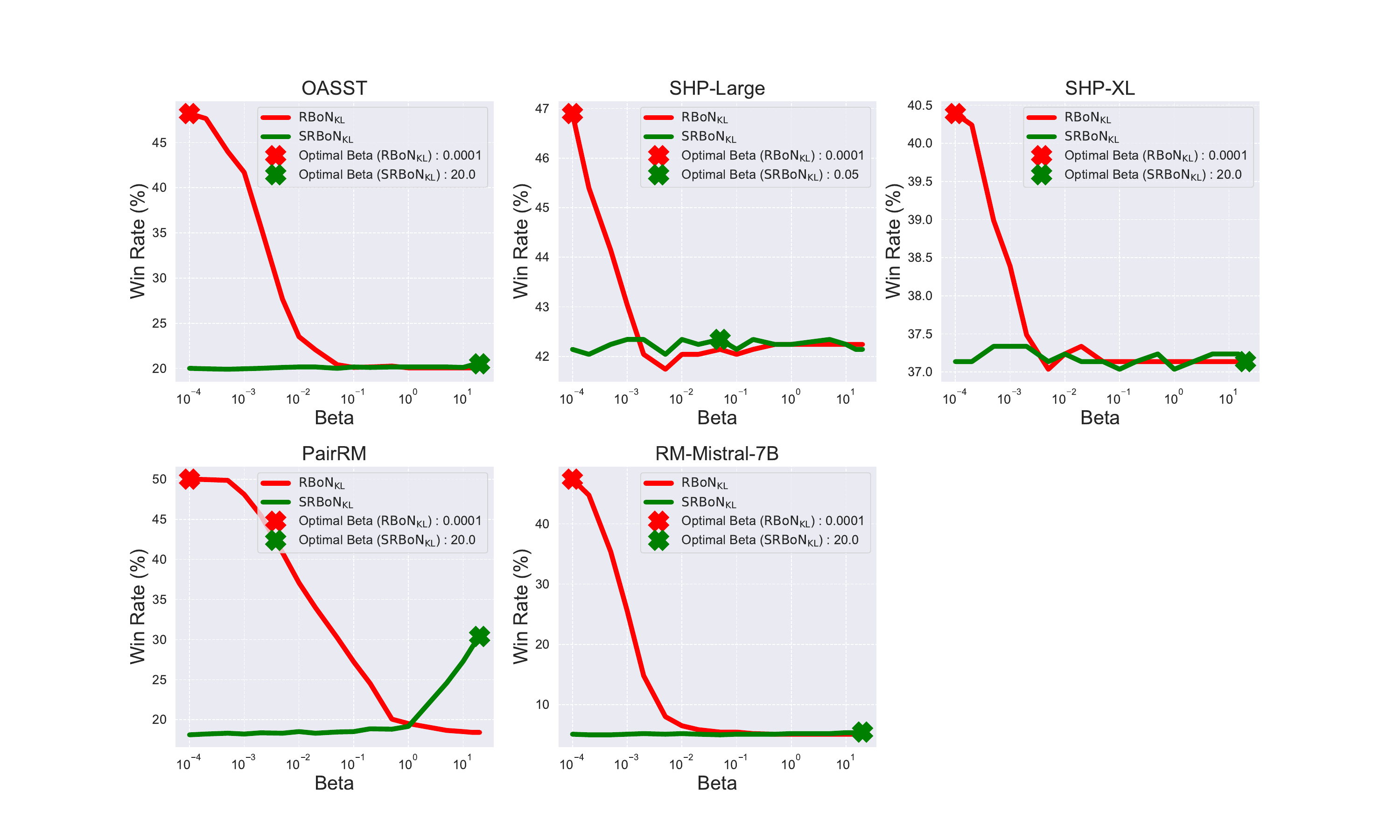}
    \caption{
    Evaluation of $\mathrm{RBoN}_{\mathrm{KL}}$ and $\mathrm{SRBoN}_{\mathrm{KL}}$ sensitiveness on the Harmlessness subset of the hh-rlhf dataset with varying parameter $\beta$. We use proxy reward models, OASST, SHP-Large, SHP-XL,  PairRM, and RM-Mistral-7B. As the gold reward model, we utilize Eurus-RM-7B.
    }
    \label{fig:harmless-kl}
\end{figure}

\begin{figure}[htbp]
    \centering
    \includegraphics[width=0.95\linewidth]{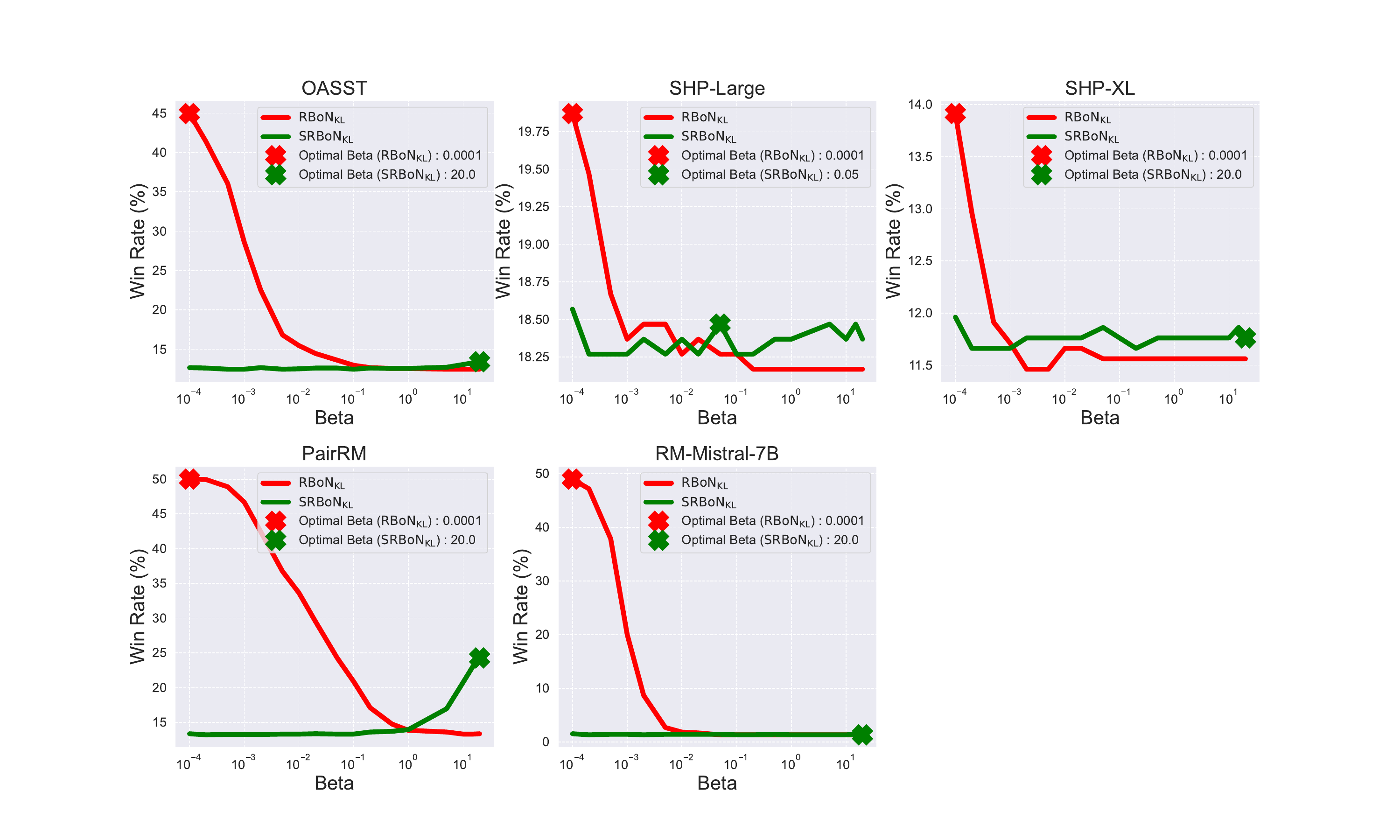}
    \caption{
    Evaluation of $\mathrm{RBoN}_{\mathrm{KL}}$ and $\mathrm{SRBoN}_{\mathrm{KL}}$ sensitiveness on the Helpfulness subset of the hh-rlhf dataset with varying parameter $\beta$. We use proxy reward models, OASST, SHP-Large, SHP-XL,  PairRM, and RM-Mistral-7B. As the gold reward model, we utilize Eurus-RM-7B.
    }
    \label{fig:helpful-kl}
\end{figure}
\begin{figure}[htbp]
    \centering
    \includegraphics[width=0.9\linewidth]{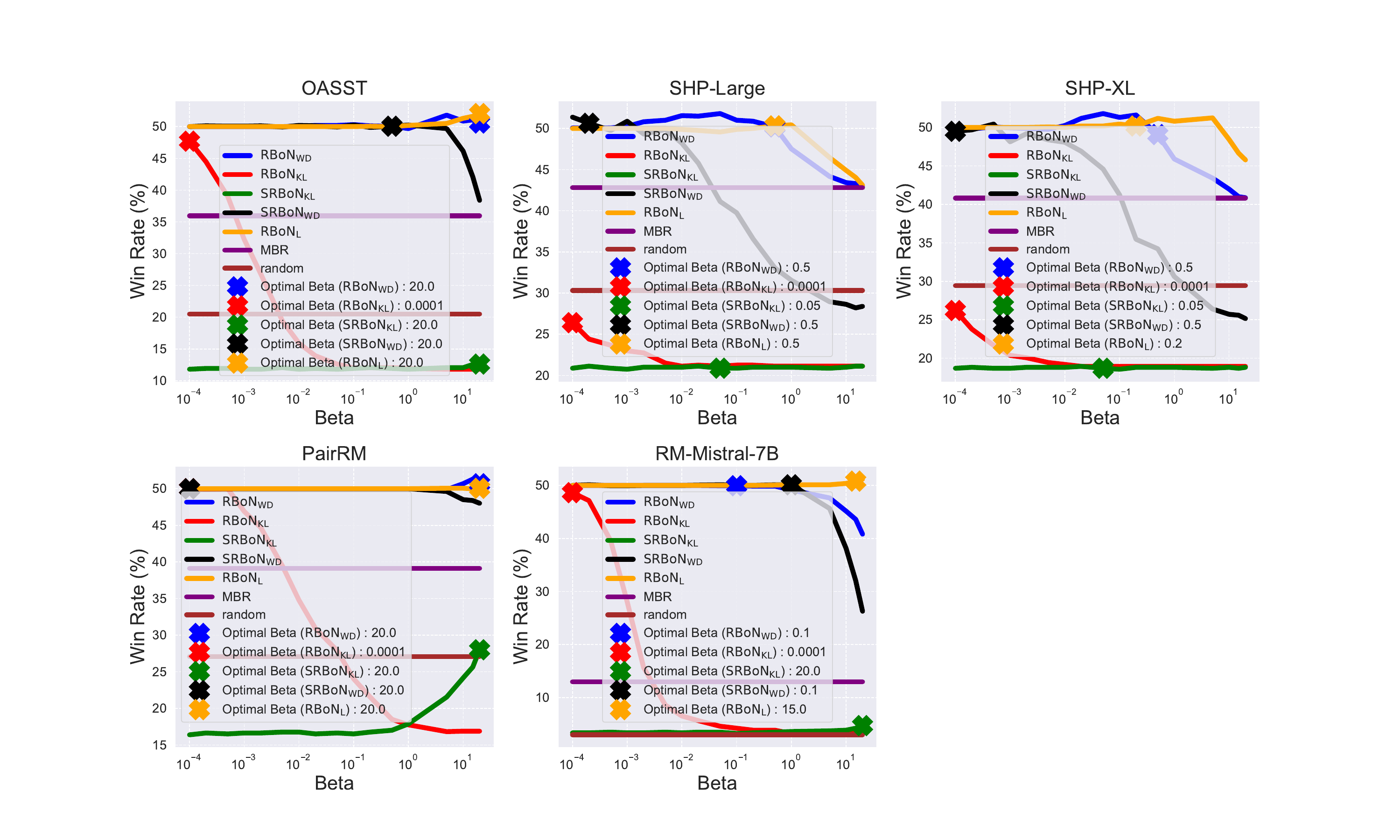}
    \caption{
    Evaluation of the decoder method on the AlpacaFarm dataset with varying parameter $\beta$. We use proxy reward models, OASST, SHP-Large, SHP-XL,  PairRM, and RM-Mistral-7B. As the gold reward model, we utilize Eurus-RM-7B.
    }
    \label{fig:score-a}
\end{figure}

\begin{figure}[htbp]
    \centering
    \includegraphics[width=0.9\linewidth]{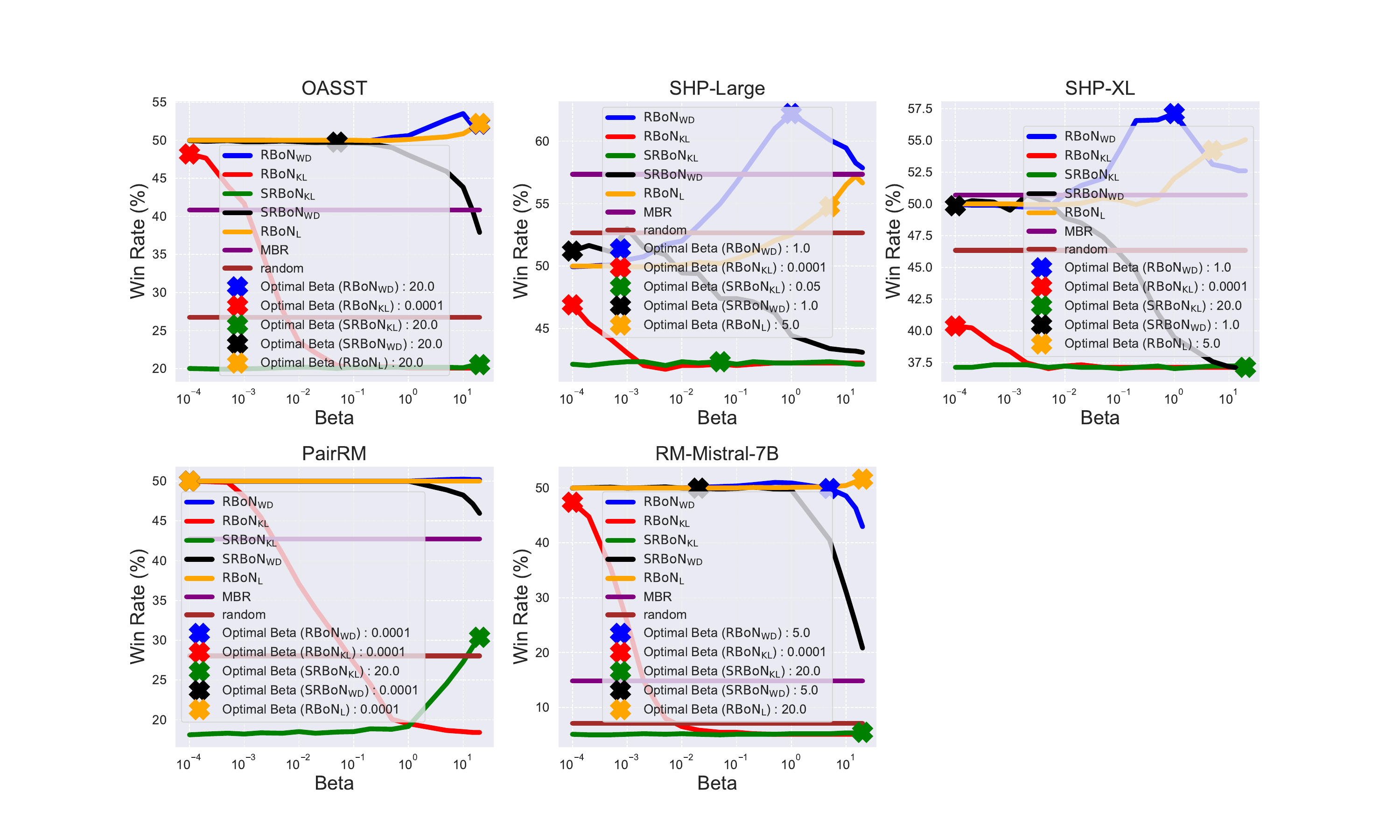}
    \caption{
    Evaluation of the decoder method on the Harmlessness dataset with varying parameter $\beta$. We use proxy reward models, OASST, SHP-Large, SHP-XL,  PairRM, and RM-Mistral-7B. As the gold reward model, we utilize Eurus-RM-7B.
    }
    \label{fig:score-ha}
\end{figure}

\begin{figure}[htbp]
    \centering
    \includegraphics[width=0.9\linewidth]{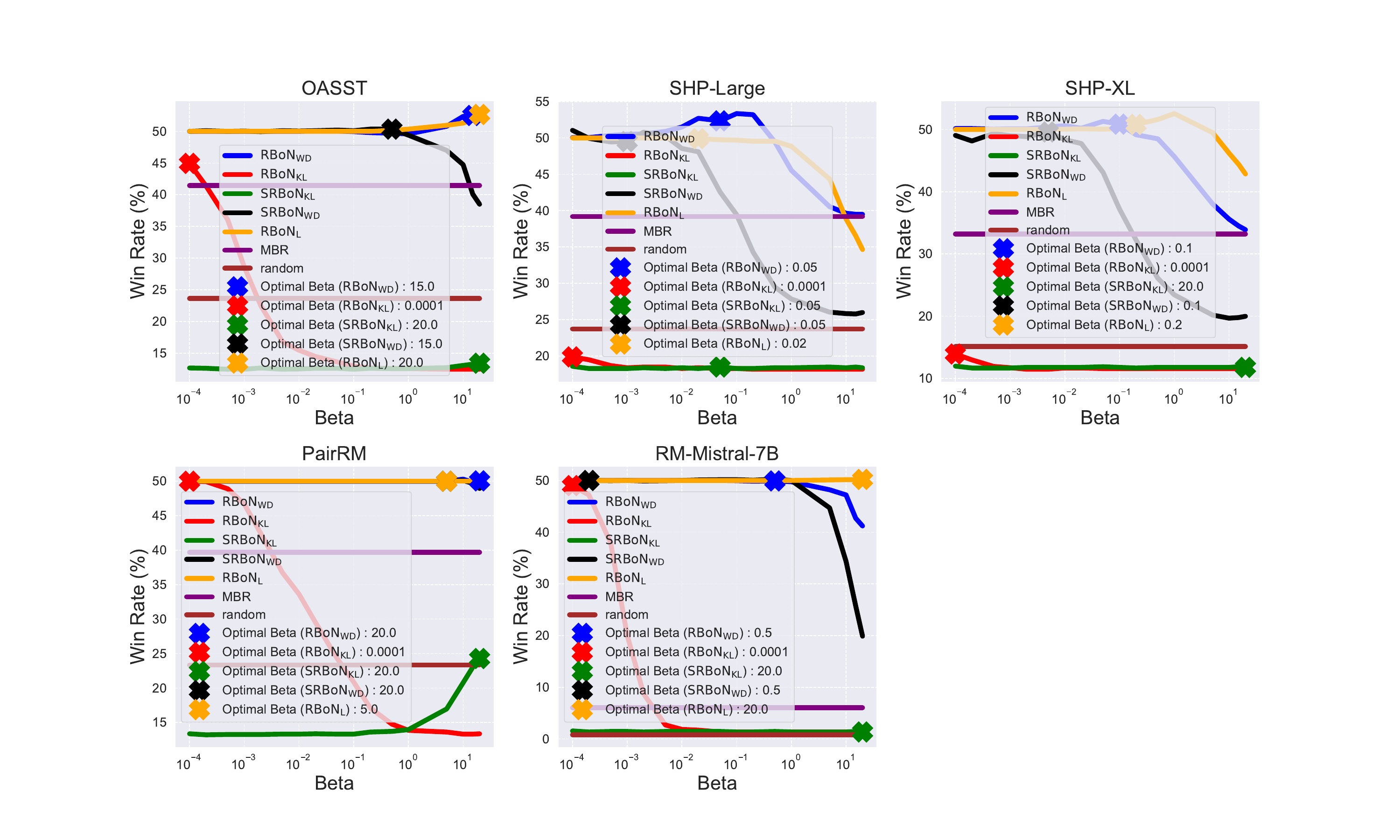}
    \caption{
    Evaluation of the decoder method on the Helpfulness dataset with varying parameter $\beta$. We use proxy reward models, OASST, SHP-Large, SHP-XL,  PairRM, and RM-Mistral-7B. As the gold reward model, we utilize Eurus-RM-7B.
    }
    \label{fig:score-he}
\end{figure}

\newpage
\section{Spearman's Rank Correlation \citep{spearman1904proof}}\label{ap:recol}
\cref{fig:rec_a}, \cref{fig:rec_ha}, and \cref{fig:rec_he} show the average Spearman's rank correlation coefficient ($\rho$) between pairs of reward models \citep{spearman1904proof}.
These results suggest that pairs of reward models with higher correlation values are more similar, indicating a preference for greedy methods in such cases. 

\begin{figure}[htbp]
    \centering
    \includegraphics[width=0.7\linewidth]{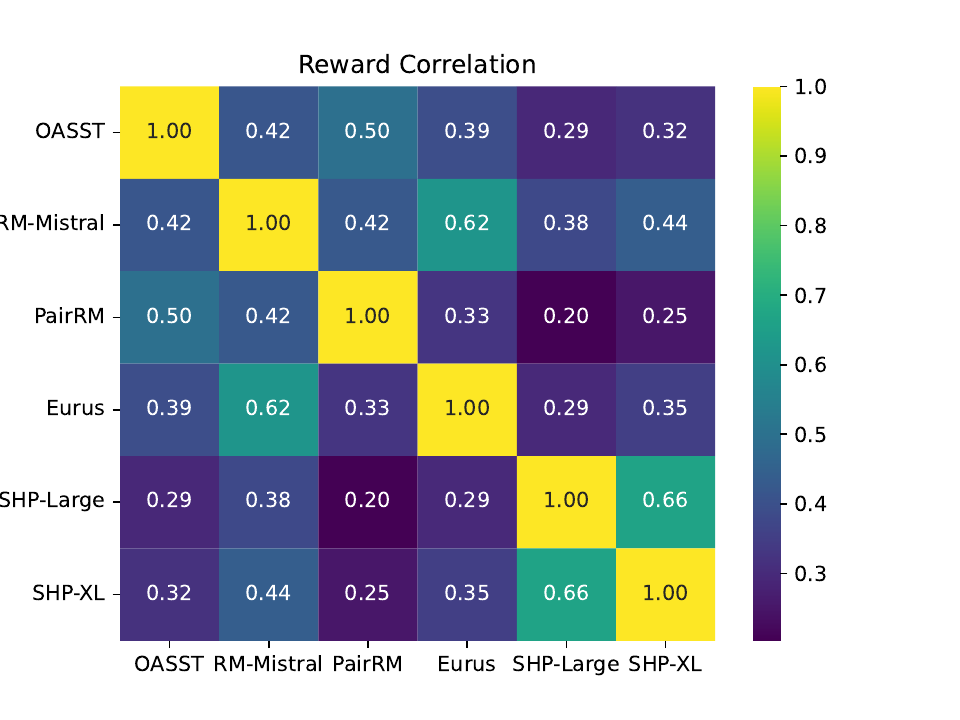}
    \caption{
    The average Spearman's rank correlation coefficient ($\rho$) between pairs of reward models in the AlpacaFarm dataset.
    }
    \label{fig:rec_a}
\end{figure}

\begin{figure}[htbp]
    \centering
    \includegraphics[width=0.7\linewidth]{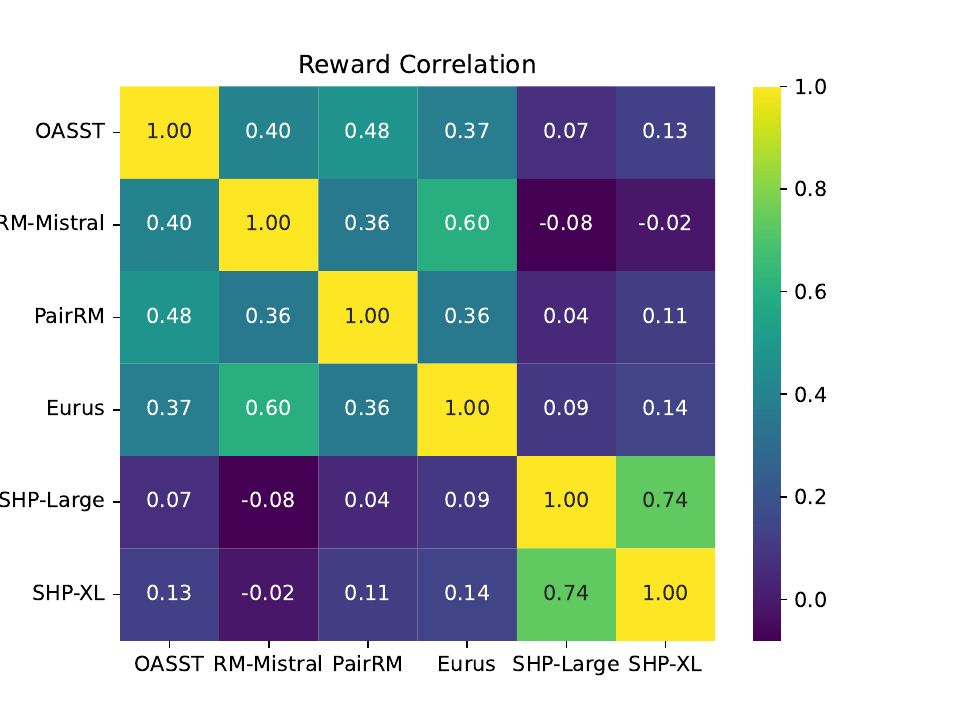}
    \caption{
    The average Spearman's rank correlation coefficient ($\rho$) between pairs of reward models in the Harmlessness dataset.
    }
    \label{fig:rec_ha}
\end{figure}

\begin{figure}[htbp]
    \centering
    \includegraphics[width=0.7\linewidth]{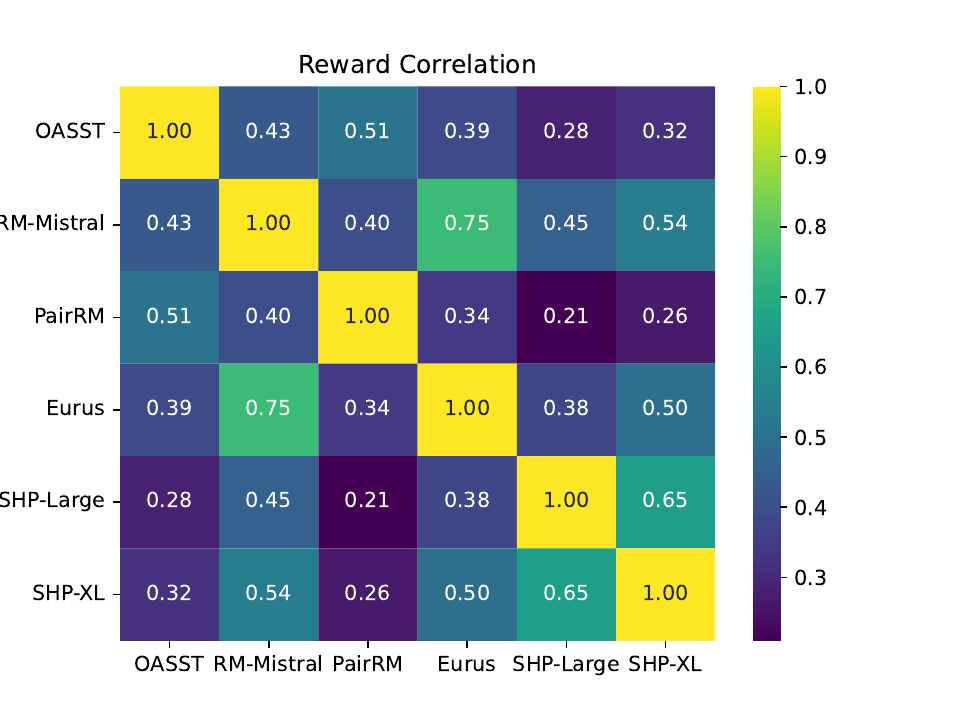}
    \caption{
    The average Spearman's rank correlation coefficient ($\rho$) between pairs of reward models in the Helpfulness dataset.
    }
    \label{fig:rec_he}
\end{figure}
\newpage
\section{Supplementary Result on Meta-Llama-3-8B-Instruct \citep{dubey2024llama}}
We compared the average Spearman's rank correlation coefficient of the reward model and the performance of $\mathrm{{RBoN}}_{{\mathrm{{WD}}}}$ on the evaluation split using the Llama (Meta-Llama-3-8B-Instruct) language model.
The purpose of this analysis is to verify the performance of $\mathrm{{RBoN}}_{{\mathrm{{WD}}}}$, even when applied to samples generated by state-of-the-art language models.

\begin{figure}[htbp]
    \centering
    \includegraphics[width=0.7\linewidth]{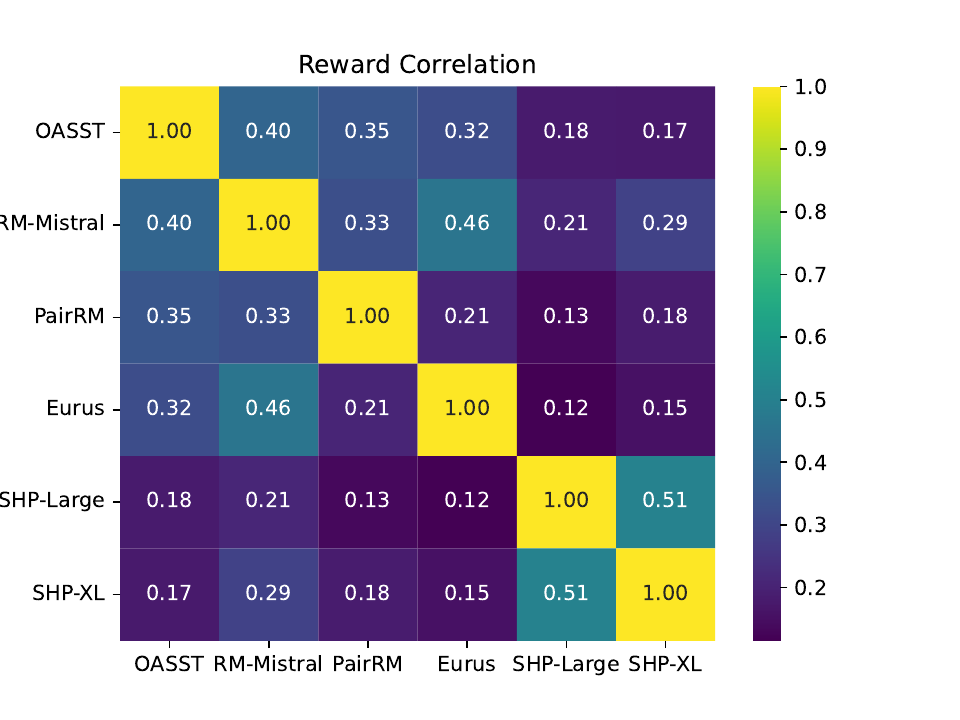}
    \caption{
    The average Spearman's rank correlation coefficient ($\rho$) between pairs of reward models in the AlpacaFarm dataset, using Llama as the language model.
    }
    \label{fig:rec_meta_a}
\end{figure}

\begin{figure}[htbp]
    \centering
    \includegraphics[width=0.7\linewidth]{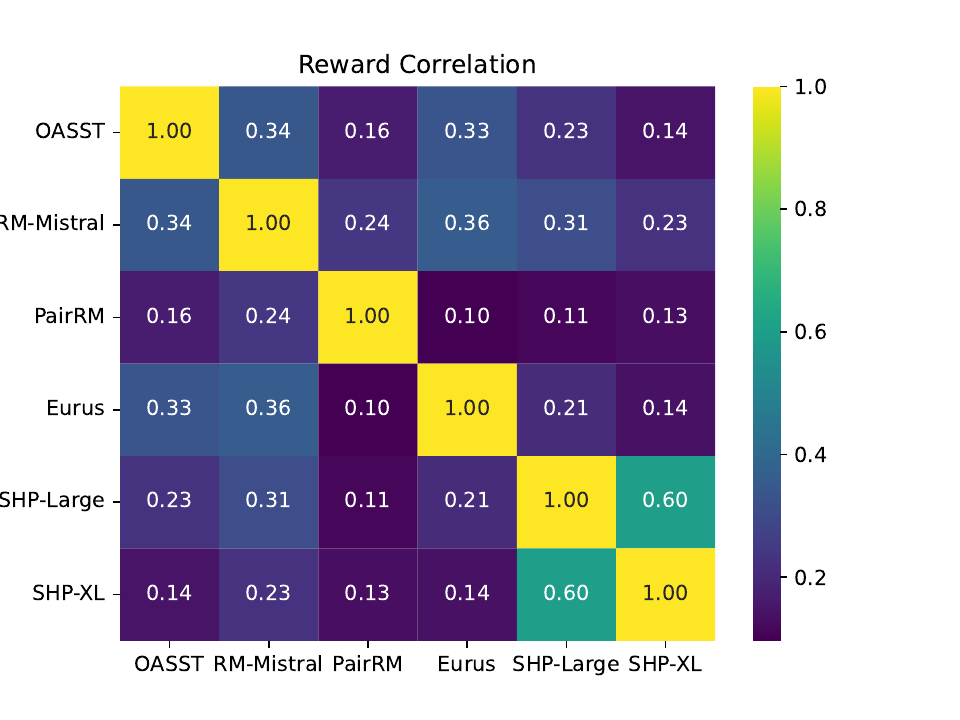}
    \caption{
    The average Spearman's rank correlation coefficient ($\rho$) between pairs of reward models in the Harmlessness dataset, using Llama as the language model.
    }
    \label{fig:rec_meta_ha}
\end{figure}

\begin{figure}[htbp]
    \centering
    \includegraphics[width=0.7\linewidth]{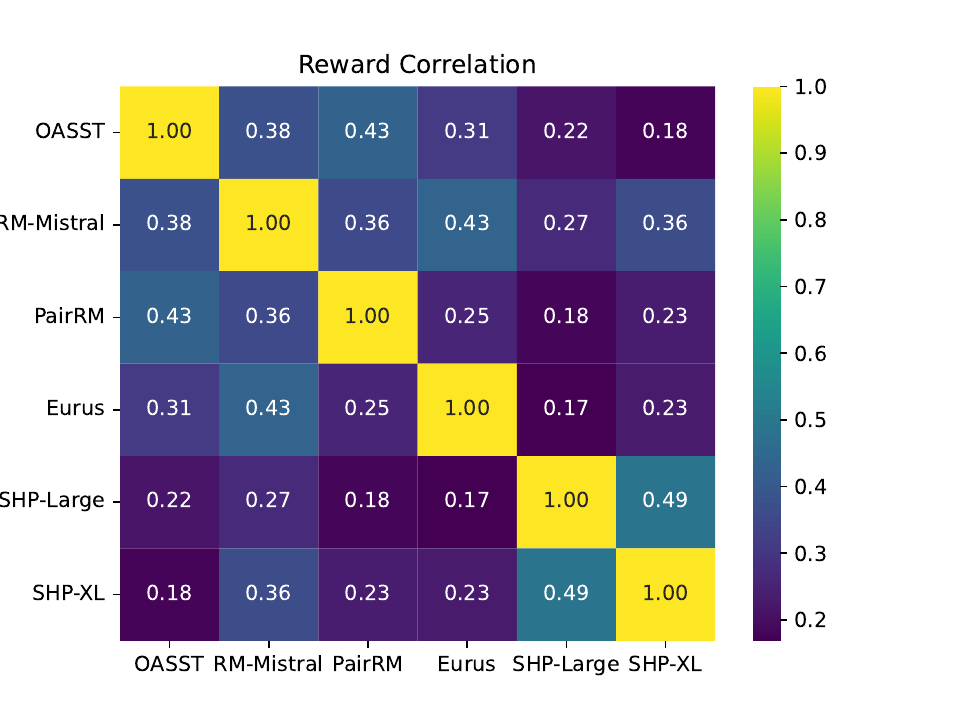}
    \caption{
    The average Spearman's rank correlation coefficient ($\rho$) between pairs of reward models in the Helpfulness dataset, using Llama as the language model.
    }
    \label{fig:rec_meta_he}
\end{figure}

\begin{figure}[htbp]
    \centering
    \includegraphics[width=\linewidth]{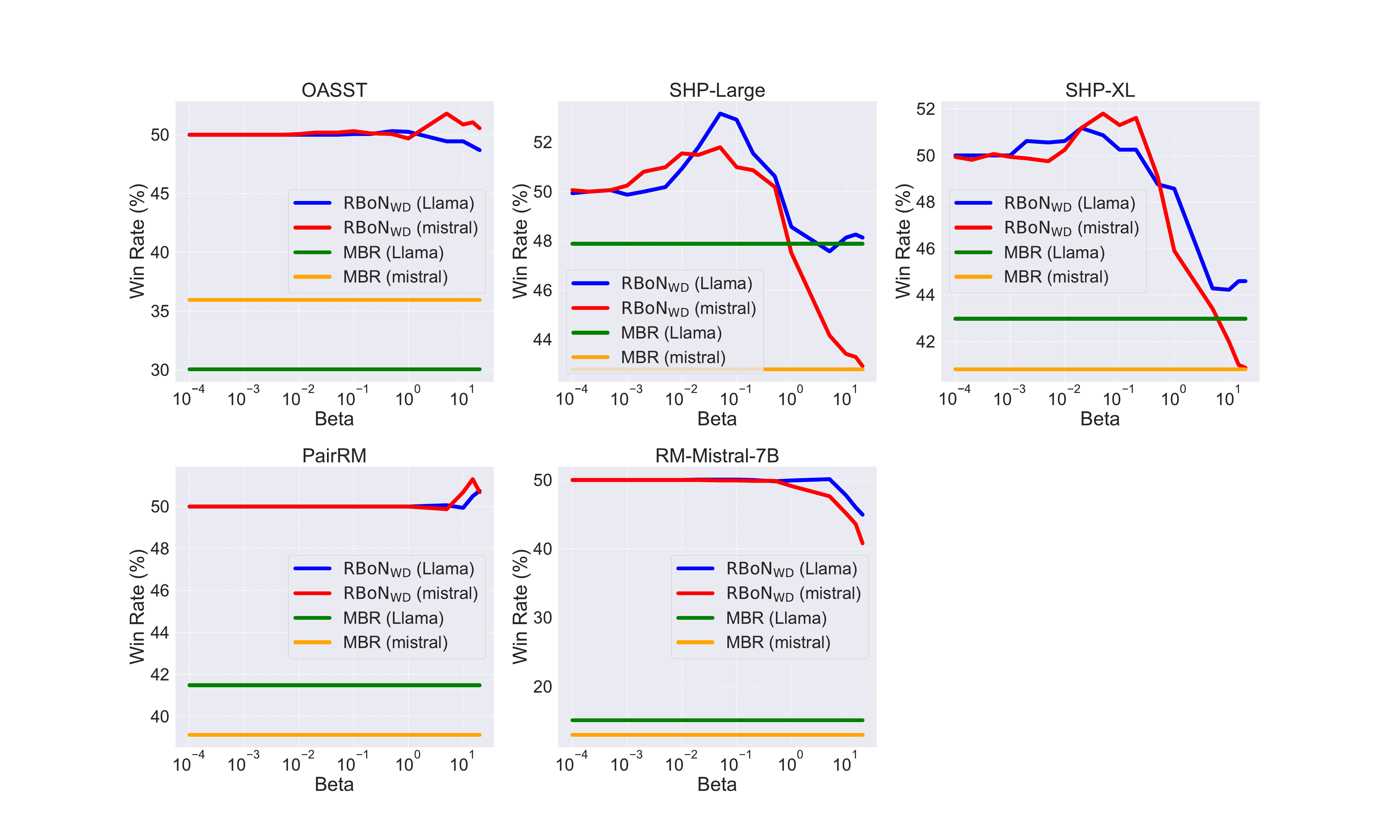}
    \caption{
    Evaluation of the RBoN method on the AlpacaFarm dataset with varying parameter $\beta$. We use proxy reward models, OASST, SHP-Large, SHP-XL,  PairRM, and RM-Mistral-7B. As the gold reward model, we utilize Eurus-RM-7B, and Llama as the language model.
    }
    \label{fig:meta-a}
\end{figure}

\begin{figure}[htbp]
    \centering
    \includegraphics[width=\linewidth]{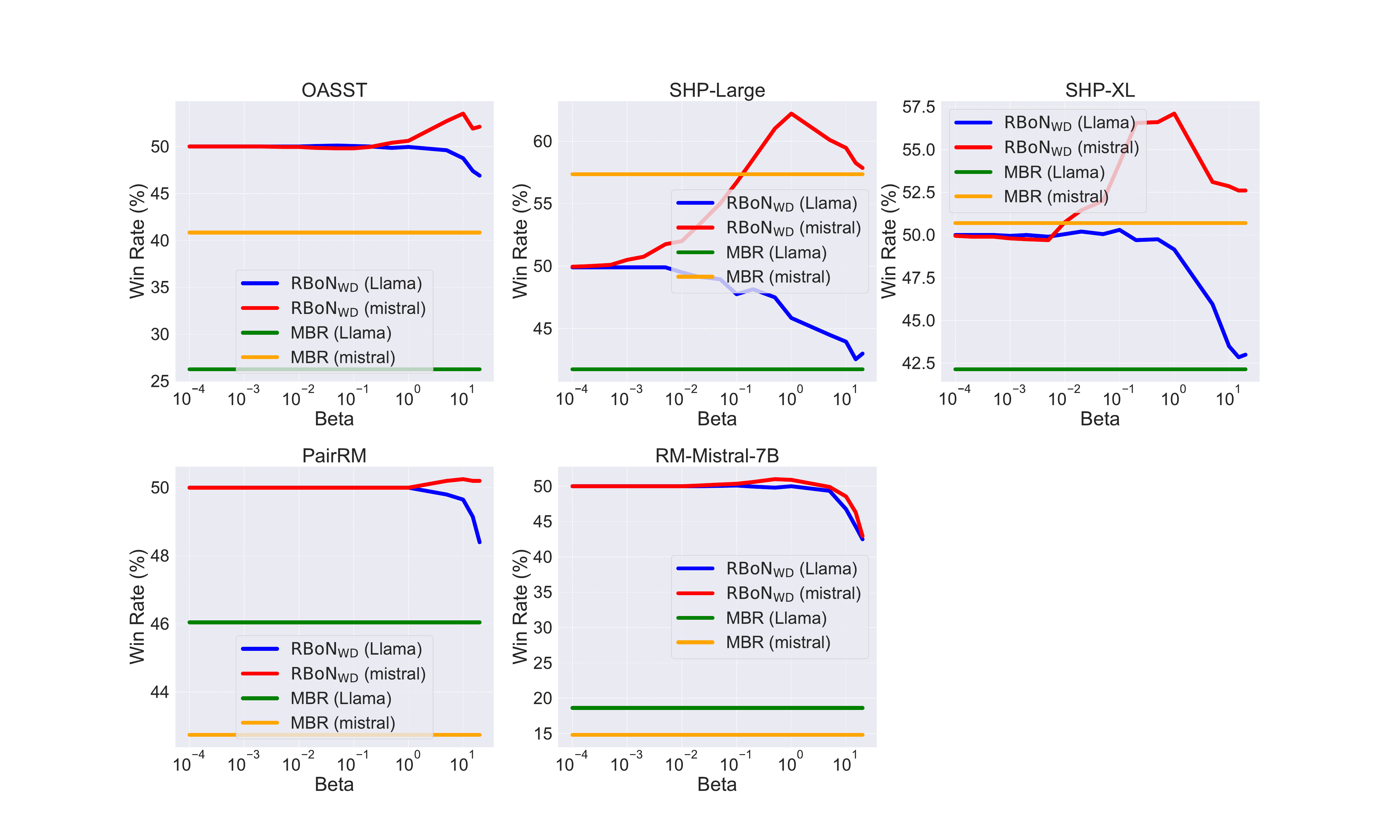}
    \caption{
    Evaluation of the RBoN method on the Harmlessness dataset with varying parameter $\beta$. We use proxy reward models, OASST, SHP-Large, SHP-XL,  PairRM, and RM-Mistral-7B. As the gold reward model, we utilize Eurus-RM-7B, and Llama as the language model.
    }
    \label{fig:meta-ha}
\end{figure}

\begin{figure}[htbp]
    \centering
    \includegraphics[width=\linewidth]{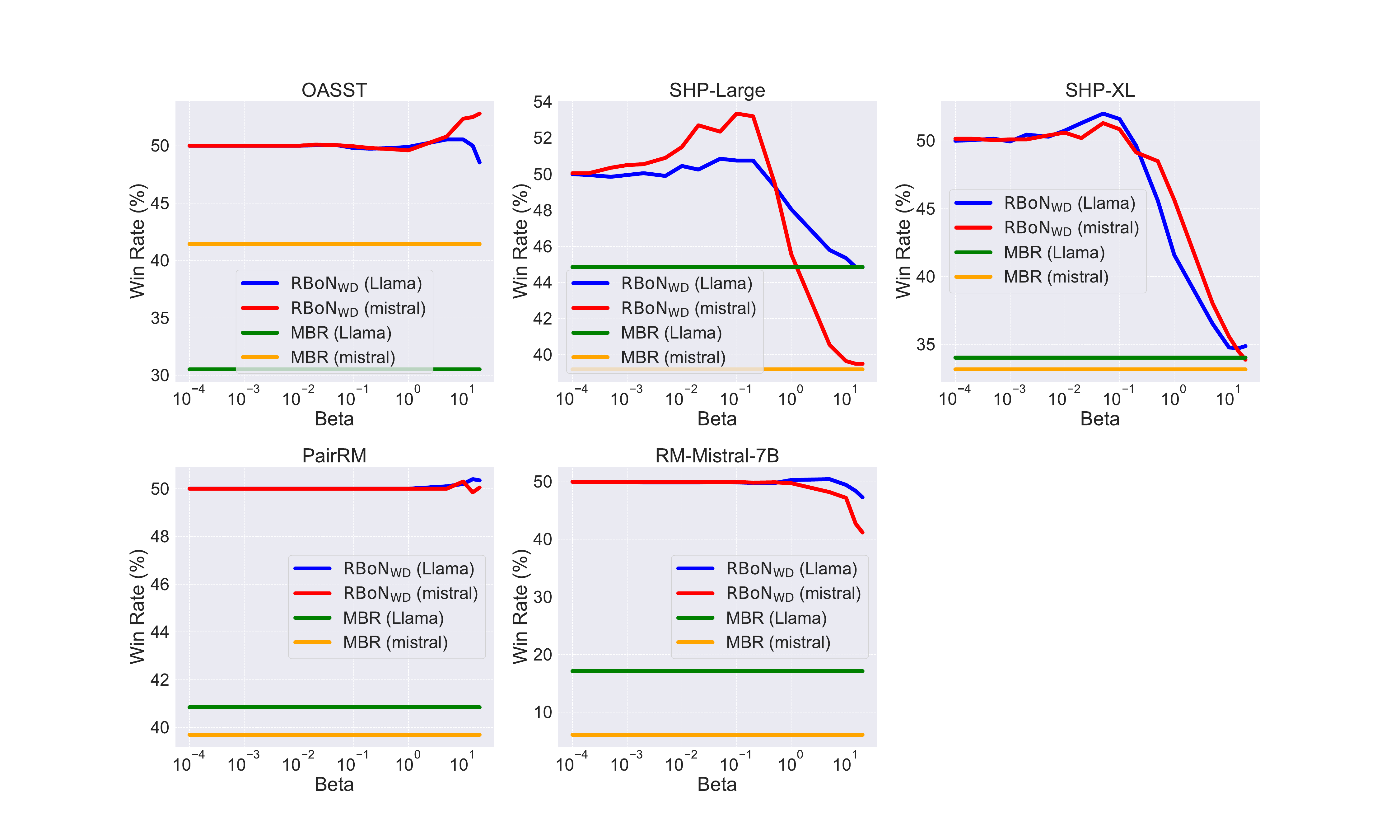}
    \caption{
    Evaluation of the RBoN method on the Helpfulness dataset with varying parameter $\beta$. We use proxy reward models, OASST, SHP-Large, SHP-XL,  PairRM, and RM-Mistral-7B. As the gold reward model, we utilize Eurus-RM-7B, and Llama as the language model.
    }
    \label{fig:meta-he}
\end{figure}

\newpage
\section{Robustness of RBoN Under Suboptimal Reward Models}
We evaluate the performance of suboptimal reward models, Beaver (beaver-7b-v1.0-reward) \citep{dai2024safe}, Open Llama (hh-rlhf-rm-open-llama 3b) \citep{diao-etal-2024-lmflow}, and Tulu (tulu-v2.5-13b-uf-rm) \citep{ivison2024unpacking} selected from \cite{RewardBench}, which underperforms compared to other reward models in some cases. We set these models as proxy models, set Eurus-RM-7B (Eurus) as the gold model, and also show the reward correlation of these models.
\begin{figure}[htbp]
    \centering
    \includegraphics[width=\linewidth]{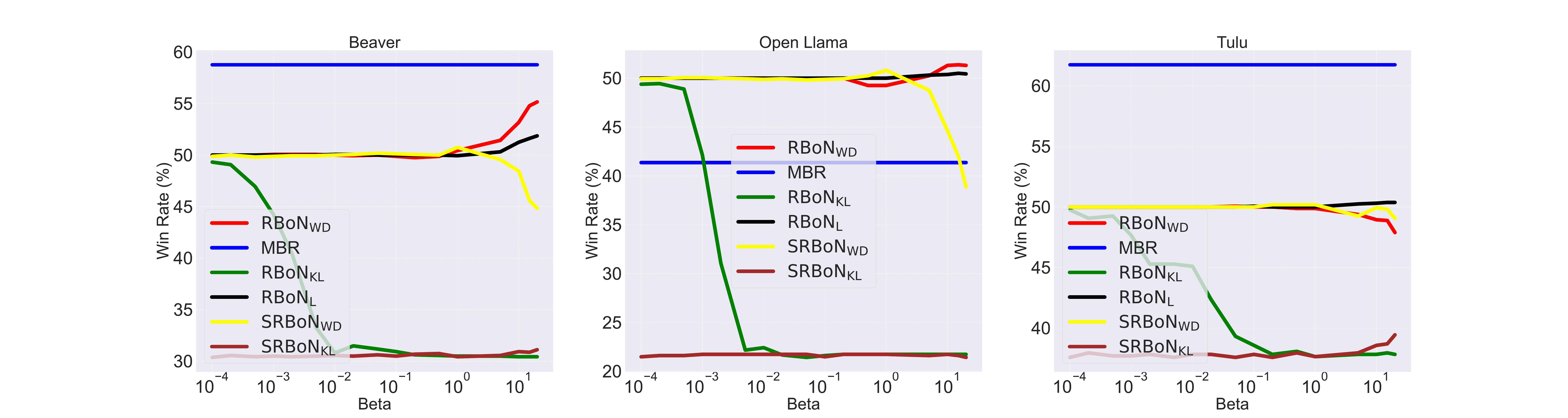}
    \caption{
    Evaluation of RBoN sensitiveness on the AlpacaFarm dataset with varying parameter $\beta$. We use proxy reward models, Beaver, Open Llama, and Tulu. As the gold reward model, we utilize Eurus.
    }
\end{figure}

\begin{figure}[htbp]
    \centering
    \includegraphics[width=0.7\linewidth]{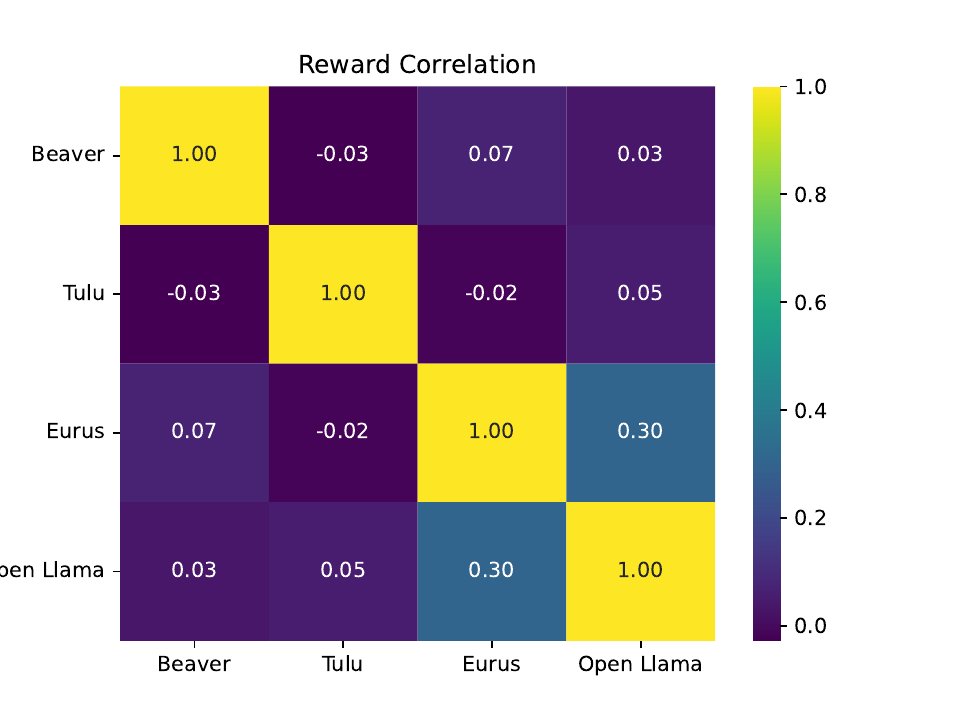}
    \caption{
    The average Spearman's rank correlation coefficient ($\rho$) between pairs of reward models in the AlpacaFarm dataset.
    }
\end{figure}

\begin{figure}[htbp]
    \centering
    \includegraphics[width=\linewidth]{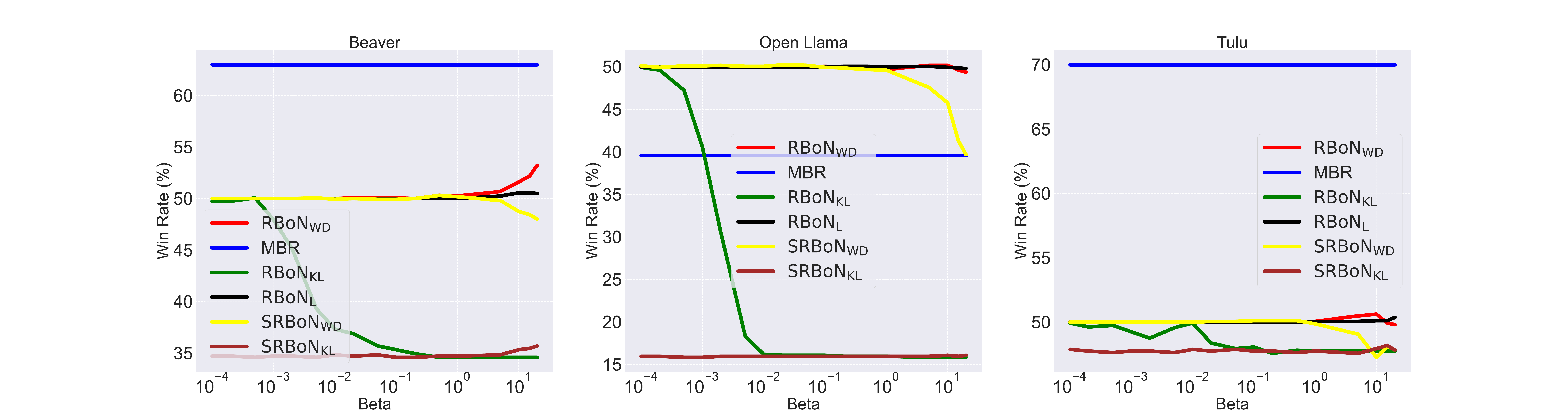}
    \caption{
    Evaluation of RBoN sensitiveness on the Helpfulness dataset with varying parameter $\beta$. We use proxy reward models, Beaver, Open Llama, and Tulu. As the gold reward model, we utilize Eurus.
    }
\end{figure}

\begin{figure}[htbp]
    \centering
    \includegraphics[width=0.7\linewidth]{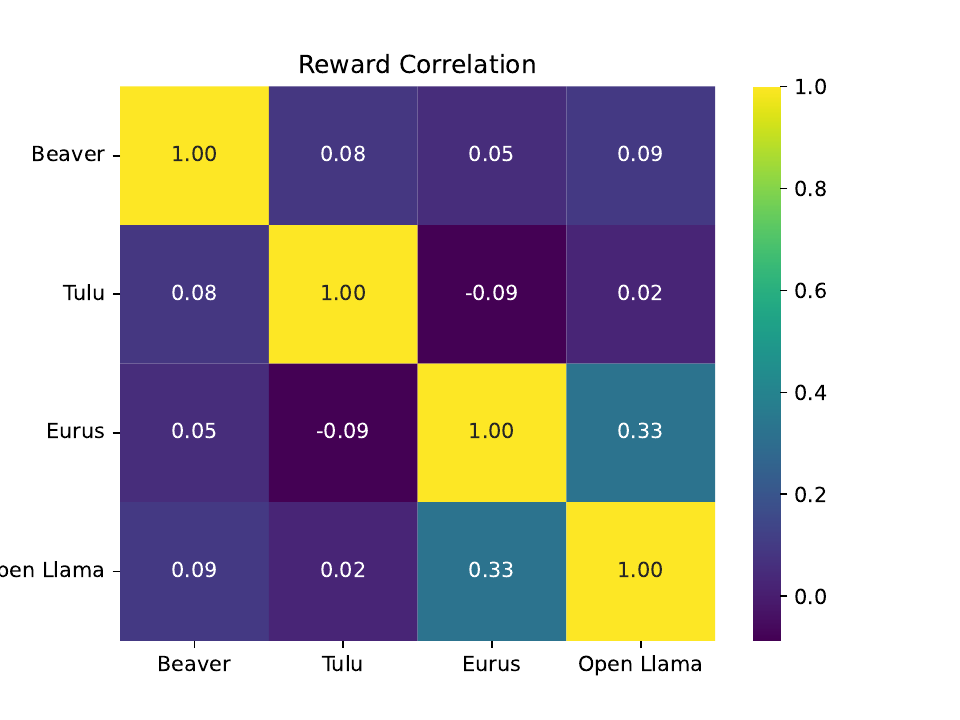}
    \caption{
    The average Spearman's rank correlation coefficient ($\rho$) between pairs of reward models in the Helpfulness dataset.
    }
\end{figure}

\begin{figure}[htbp]
    \centering
    \includegraphics[width=\linewidth]{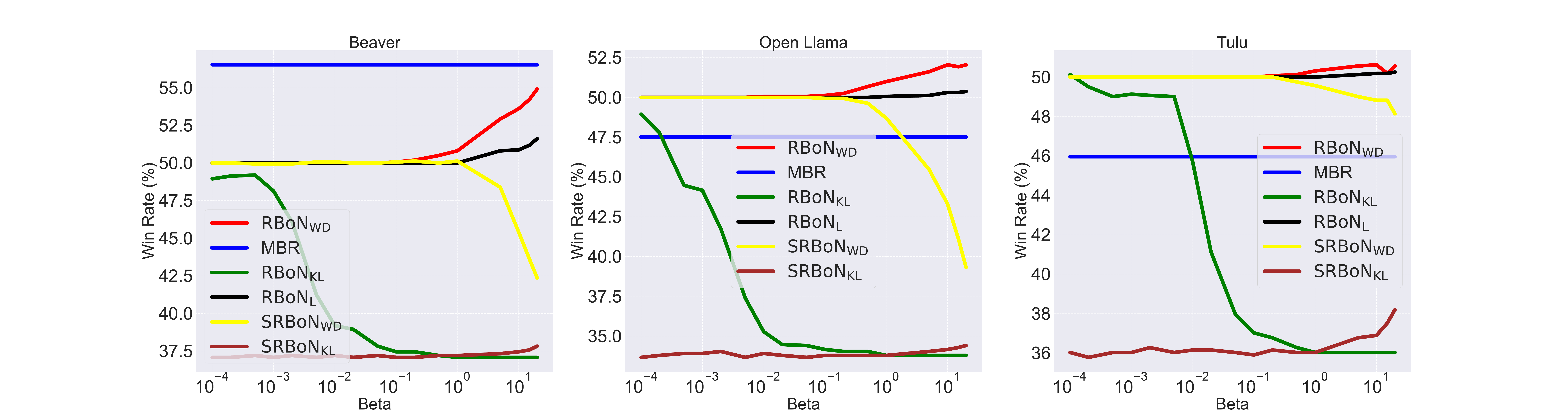}
    \caption{
    Evaluation of RBoN sensitiveness on the Harmlessness dataset with varying parameter $\beta$. We use proxy reward models, Beaver, Open Llama, and Tulu. As the gold reward model, we utilize Eurus.
    }
\end{figure}

\begin{figure}[htbp]
    \centering
    \includegraphics[width=0.7\linewidth]{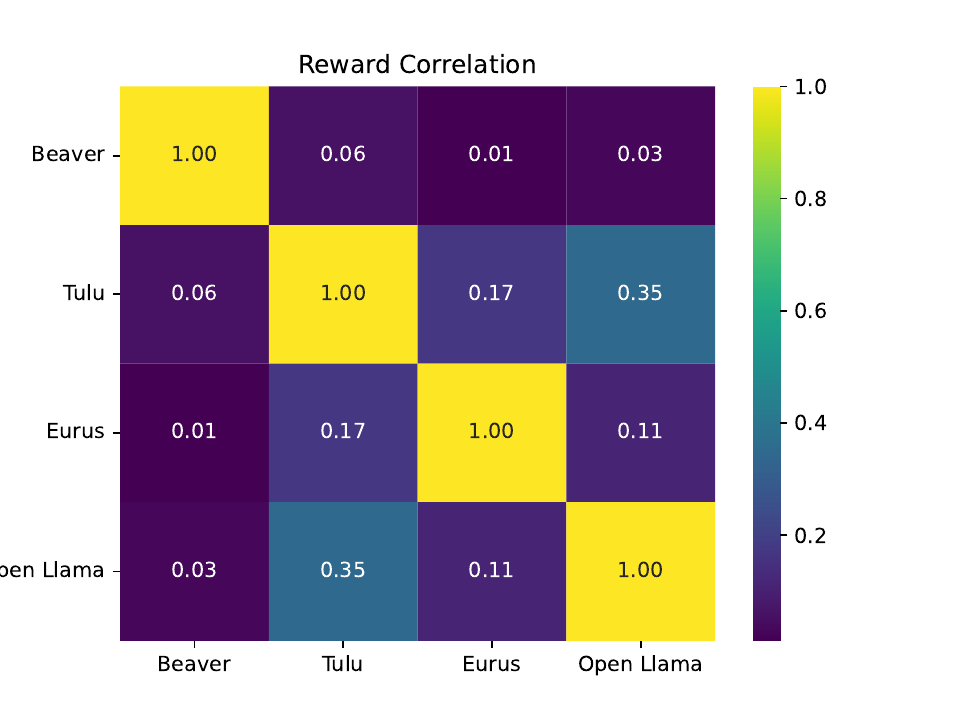}
    \caption{
    The average Spearman's rank correlation coefficient ($\rho$) between pairs of reward models in the Harmlessness dataset.
    }
\end{figure}
\newpage

\section{Sentence Length Regularized BoN ($\mathrm{RBoN}_{\mathrm{L}}$)}\label{appendix:length}
The objective function of $\mathrm{RBoN}_{\mathrm{L}}$ (Sentence Length Regularized BoN) is given by:
\begin{equation*}
y_{\textbf{LBoN}}(x)=\underset{y \in \mathcal{Y}_{\textnormal{\textbf{ref}}}}{\arg \max }\,\, R(x, y)-\frac{\beta}{|y|}
\end{equation*}
where $\beta$ is a regularization parameter, $|y|$ denotes the token length of the sentence $y$ .

This approach aims to address the inherent bias toward shorter outputs often observed in a large language model we used in experiments. We now explain the rationale behind the specific form of the regularization term in $\mathrm{RBoN}_{\mathrm{L}}$. Let $\mu$ represent a probability that is inversely proportional to the token length of the text $y$. 

For example, we could define $\mu(y|x) = 1/|y|$ (e.g. $\mu(y^\prime|x) = 1/|y^\prime|, \mu(y^{\prime\prime}|x) = 1/|y^{\prime\prime}|$...), where $|y|$ represents the token length of output y. 
\begin{definition}\label{definition:length}
We define a newly normalized distribution $\mu^\prime$:
\begin{equation*}
\begin{aligned}
\mu^\prime (y\mid x) &= \frac{1/|y|}{\sum_{\mathcal{Y}_{\textnormal{\textbf{ref}}}} \mu(\cdot \mid x)} \\
&= \frac{1/|y|}{Z}\, \, \left( \mathrm{where} \,\,\sum_{\mathcal{Y}_{\textnormal{\textbf{ref}}}} \mu (\cdot \mid x) = Z\right)\
\end{aligned}
\end{equation*}
\end{definition}

\begin{proposition}
The objective function of $\mathrm{RBoN}_{\mathrm{L}}$ is derived by considering the TV distance between the output probability $\mathbbm{1}_y (\cdot \mid x)$ and $\mu^\prime(\cdot \mid x)$ as a regularization term.
\end{proposition}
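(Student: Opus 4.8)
The plan is to show that, after appropriate normalization, the $\mathrm{RBoN}_{\mathrm{L}}$ objective coincides (up to an additive constant that does not depend on $y$) with a BoN-style objective whose regularizer is the total variation distance $\mathrm{TV}[\mathbbm{1}_y(\cdot\mid x)\,\|\,\mu^\prime(\cdot\mid x)]$, where $\mathbbm{1}_y$ is the point mass at $y$ and $\mu^\prime$ is the length-based distribution of Definition~\ref{definition:length}. First I would recall that for a point mass $\mathbbm{1}_y$ and any distribution $\mu^\prime$ on $\mathcal{Y}_{\textbf{ref}}$, the total variation distance has the closed form
\begin{equation*}
\mathrm{TV}[\mathbbm{1}_y(\cdot\mid x)\,\|\,\mu^\prime(\cdot\mid x)]
= \tfrac12\sum_{y^\prime\in\mathcal{Y}_{\textbf{ref}}}\bigl|\mathbbm{1}_y(y^\prime\mid x)-\mu^\prime(y^\prime\mid x)\bigr|
= 1-\mu^\prime(y\mid x),
\end{equation*}
which is the standard identity $\mathrm{TV}(\delta_y,\mu^\prime)=1-\mu^\prime(y)$ obtained by splitting the sum into the $y^\prime=y$ term and the rest. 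Substituting the definition $\mu^\prime(y\mid x)=\tfrac{1/|y|}{Z}$ with $Z=\sum_{\mathcal{Y}_{\textbf{ref}}}\mu(\cdot\mid x)$ gives $\mathrm{TV}[\mathbbm{1}_y\,\|\,\mu^\prime]=1-\tfrac{1}{Z|y|}$.

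Next I would plug this into the regularized BoN template from Section~\ref{sec:rbon}, i.e.\ consider the objective $R(x,y)-\beta^\prime\,\mathrm{TV}[\mathbbm{1}_y(\cdot\mid x)\,\|\,\mu^\prime(\cdot\mid x)]$ for a regularization weight $\beta^\prime$. Using the closed form this equals $R(x,y)-\beta^\prime\bigl(1-\tfrac{1}{Z|y|}\bigr) = R(x,y)-\beta^\prime+\tfrac{\beta^\prime}{Z}\cdot\tfrac{1}{|y|}$. Since $\beta^\prime$ and $Z$ are constants independent of the candidate $y$, the term $-\beta^\prime$ is irrelevant to the $\argmax$, and setting $\beta := -\beta^\prime/Z$ (equivalently absorbing the sign and the normalizer $Z$ into the hyperparameter) recovers exactly
\begin{equation*}
\argmax_{y\in\mathcal{Y}_{\textbf{ref}}}\ R(x,y)-\frac{\beta}{|y|}
= y_{\mathrm{RBoN}_{\mathrm{L}}}(x),
\end{equation*}
which is the claimed objective function. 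I would remark on the sign bookkeeping: because we want to \emph{reward} longer outputs (to counteract the length bias described in Appendix~\ref{appendix:kl}) rather than penalize distance from $\mu^\prime$, the effective coefficient that multiplies the TV term is negative, or equivalently one regularizes \emph{toward} the anti-length-bias distribution; either framing yields the same decision rule after absorbing constants.

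The only real subtlety — and the step I would be most careful about — is the constant-handling and sign convention, since the statement is essentially a reformulation rather than a deep inequality: one must verify that $Z$ (the normalization constant of $\mu$) and the additive $-\beta^\prime$ genuinely drop out of the $\argmax$ over a \emph{fixed} candidate pool $\mathcal{Y}_{\textbf{ref}}$ for a given $x$, which holds because $Z$ depends only on $x$ through the lengths of the $N$ fixed samples and not on which one is selected. I would therefore state explicitly that the equivalence is "up to a $y$-independent affine transformation of the objective and a reparametrization $\beta\leftrightarrow\beta^\prime$," making clear this is a structural identification in the spirit of the MBR-as-$\mathrm{RBoN}_{\mathrm{WD}}$ correspondence already noted in Section~\ref{sec:exp}. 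No limiting arguments or duality are needed; the whole proof is the TV-to-point-mass identity plus constant absorption.
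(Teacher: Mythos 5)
Your proposal is correct and follows essentially the same route as the paper's proof: both rest on the point-mass identity $\textbf{TV}[\mathbbm{1}_y(\cdot\mid x)\,\|\,\mu^\prime(\cdot\mid x)]=1-\mu^\prime(y\mid x)$ (which the paper obtains by explicitly splitting the sum over $\mathcal{Y}_{\textbf{ref}}$), the substitution $\mu^\prime(y\mid x)=\frac{1}{Z|y|}$, and the absorption of the $y$-independent additive constant and the normalizer $Z$ into the hyperparameter. The only cosmetic difference is sign convention: the paper writes the objective directly as $R(x,y)+\beta\,\textbf{TV}[\mathbbm{1}_y\,\|\,\mu^\prime]$ (a bonus for deviating from the length-biased distribution), whereas you write $R(x,y)-\beta^\prime\,\textbf{TV}$ and then correctly note that the effective coefficient must be negative; the two framings coincide after reparametrization.
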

\begin{proof}
Let us examine how the objective function of $\mathrm{RBoN}_{\mathrm{L}}$ is derived using \cref{definition:length}.
\begin{equation*}
\begin{aligned}
y_{\mathrm{LBoN}}(x) &=\argmax_{y \in \mathcal{Y_{\textbf{ref}}}}\,\,  R(x, y)+\beta \textbf{TV}\left[\mathbbm{1}_y (\cdot \mid x) \| \mu^\prime(\cdot \mid x)\right],\\
&= \argmax_{y \in \mathcal{Y_{\textbf{ref}}}}  \,\,R(x, y)+\frac{\beta}{2} \sum_{y \in \mathcal{Y_{\textbf{ref}}}} |\mathbbm{1}_y (\cdot \mid x) - \mu^\prime(\cdot \mid x)|\\
&= \argmax_{y \in \mathcal{Y_{\textbf{ref}}}}\,\,  R(x, y)+\frac{\beta}{2}\left( \left|1 - \frac{1}{Z|y|}\right| + \underbrace{\frac{1}{Z|y^\prime|} + \frac{1}{Z|y^{\prime\prime}|} + \cdots + \frac{1}{Z|y^{\prime\prime\prime}|}}_{= 1 - \frac{1}{Z|y|}} \right)\\
&= \argmax_{y \in \mathcal{Y_{\textbf{ref}}}} \,\, R(x, y)+\beta\left(1 - \frac{1}{Z|y|}\right)\\
&= \argmax_{y \in \mathcal{Y_{\textbf{ref}}}} \,\, R(x, y)-\frac{\beta}{Z|y|} \\
&= \argmax_{y \in \mathcal{Y_{\textbf{ref}}}} \,\, R(x, y)-\frac{\beta^\prime}{|y|} \, \, \, \left(\textbf{$\beta^\prime = \frac{\beta}{Z}$}\right)\\
\end{aligned}
\end{equation*}

where $\beta^\prime$ is a regularization parameter and $\textbf{TV}$ denotes TV distance. 
\end{proof}
The purpose of this normalization is to counteract the effect of $\mathrm{SRBoN}_{\mathrm{KL}}$, which tends to favor shorter outputs. This formulation provides a theoretical basis for understanding how $\mathrm{RBoN}_{\mathrm{L}}$ achieves its length-aware behavior, and offers insight into its potential advantages over other decoding methods that may inadvertently bias toward shorter outputs. 

Our methodological approach to assessing the divergence of output distributions from the length distribution $\mu^\prime$ involves a comparative analysis of BoN sampling and $\mathrm{RBoN}_{\mathrm{L}}$. For each output y selected, we construct the corresponding $\mathbbm{1}_y (\cdot \mid x)$ distribution. We then measure the TV distance between $\mathbbm{1}_y (\cdot \mid x)$ and $\mu^\prime(\cdot \mid x)$. 

The results of this comparative analysis are visualized in \cref{fig:bon_l_alpaca}, \cref{fig:bon_l_harm}, and \cref{fig:bon_l_help}. 
\begin{figure}[htbp]
    \centering
    \includegraphics[width=0.9\linewidth]{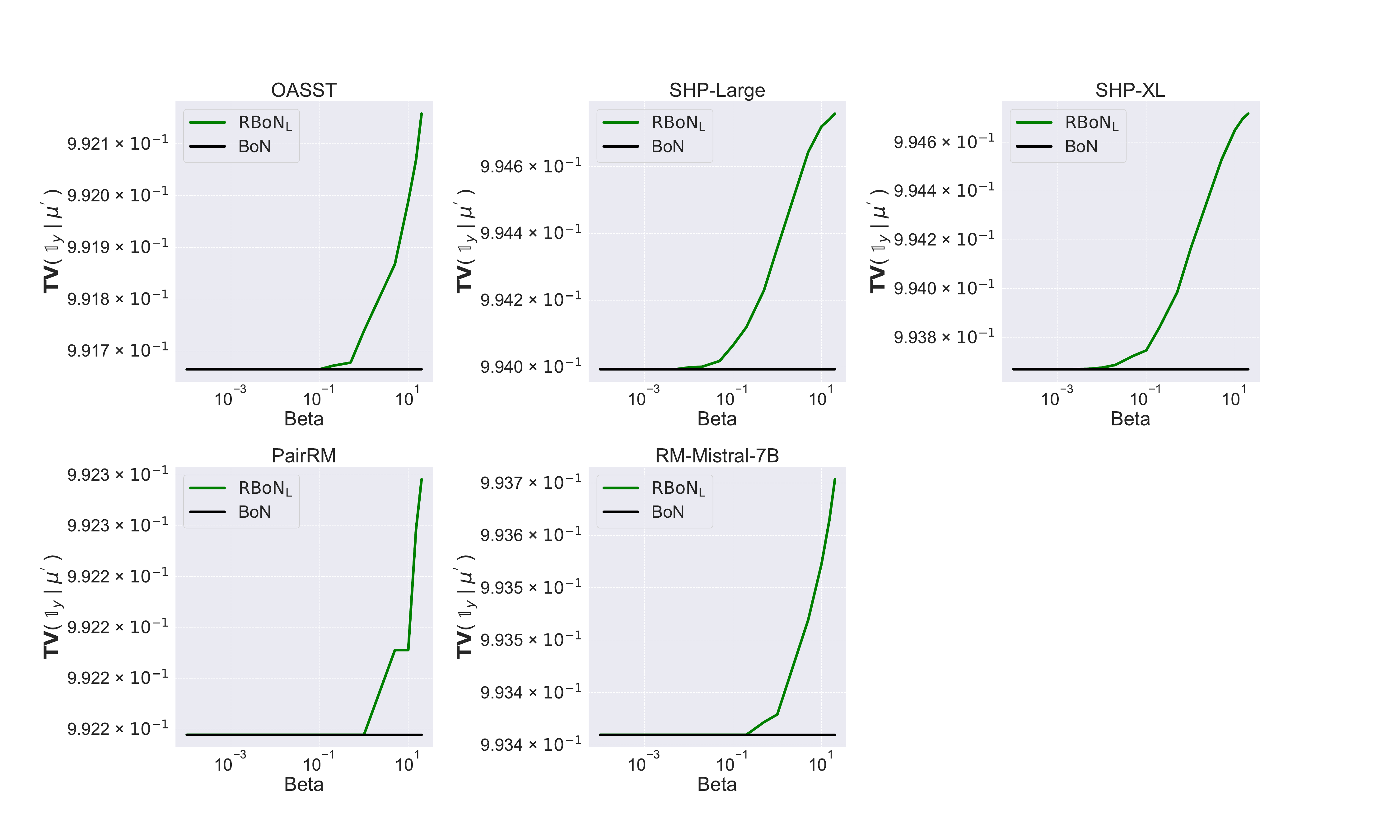}
    \caption{
   BoN sampling and $\mathrm{RBoN}_{\mathrm{L}}$ methods by measuring the TV distance between their output distributions and sentence length distribution $\mu^\prime$ in AlpacaFarm. This allows us to evaluate how closely each method's outputs align with the desired distribution, with a smaller TV distance indicating a preference for shorter sentences.
    }
    \label{fig:bon_l_alpaca}
\end{figure}

\begin{figure}[htbp]
    \centering
    \includegraphics[width=0.9\linewidth]{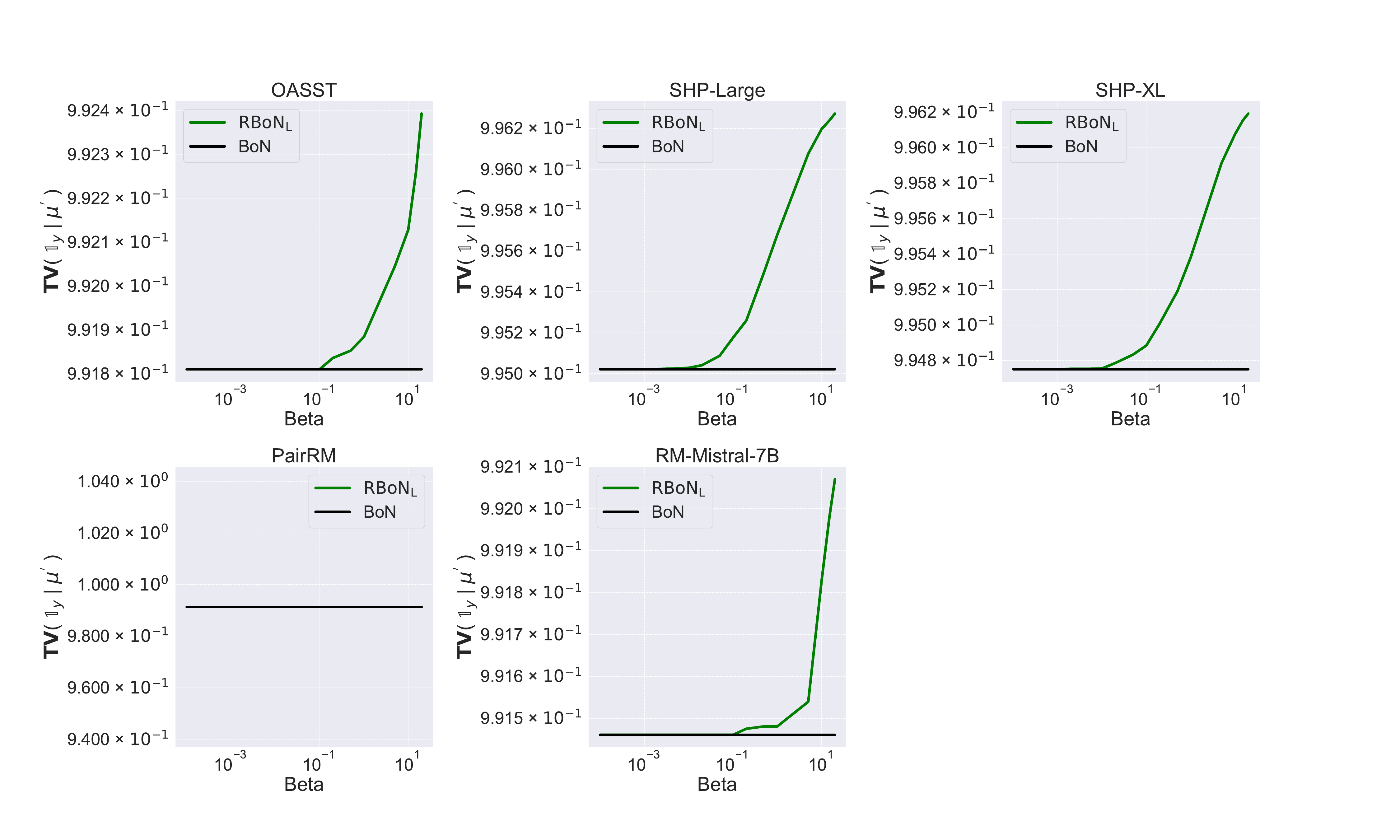}
    \caption{
    BoN sampling and $\mathrm{RBoN}_{\mathrm{L}}$ methods by measuring the TV distance between their output distributions and sentence length distribution $\mu^\prime$ in Harmlessness. This allows us to evaluate how closely each method's outputs align with the desired distribution, with a smaller TV distance indicating a preference for shorter sentences.
    }
    \label{fig:bon_l_harm}
\end{figure}

\begin{figure}[htbp]
    \centering
    \includegraphics[width=0.9\linewidth]{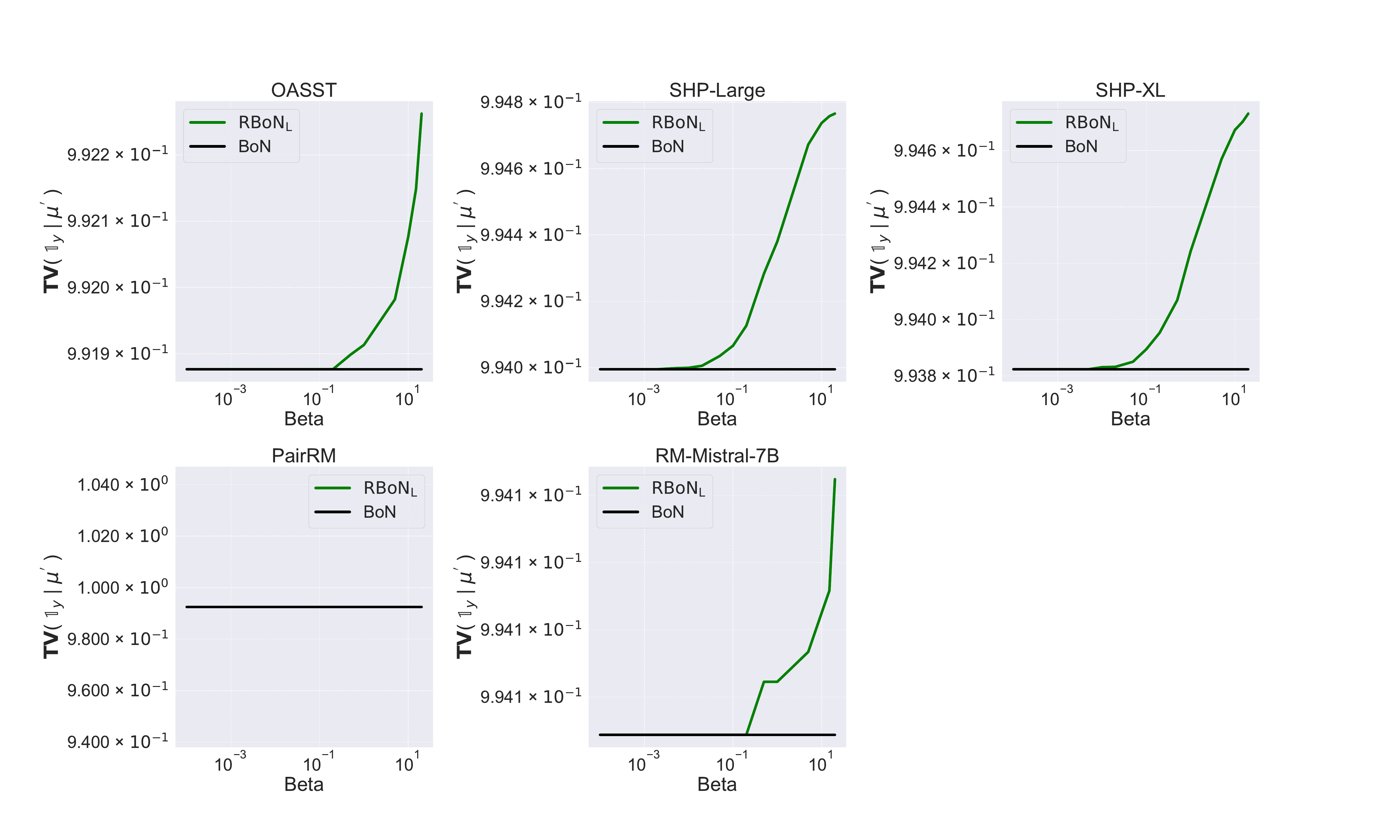}
    \caption{
    BoN sampling and $\mathrm{RBoN}_{\mathrm{L}}$ methods by measuring the TV distance between their output distributions and sentence length distribution $\mu^\prime$ in Helpfulness. This allows us to evaluate how closely each method's outputs align with the desired distribution, with a smaller TV distance indicating a preference for shorter sentences.
    }
    \label{fig:bon_l_help}
\end{figure}

\newpage
Our analysis shows that the output probability of $\mathrm{RBoN}_{\mathrm{L}}$ deviates more from $\mu^\prime$ than the output probability of BoN sampling. \cref{table:gold_corelation} illustrates the correlation between the length of the sequence and the values of gold reference reward (Eurus-RM-7B), focusing on subsets of sentences comprising the top {5, 10, 15} based on the proxy reward values. The strength of this correlation is an indication of the effectiveness of $\mathrm{RBoN}_{\mathrm{L}}$; a stronger correlation indicates greater effectiveness of the method.

In \cref{table:gold_corelation}, we have highlighted in \textbf{bold} the instances of high correlation compared to all samples used correlation, which corresponds to superior performance as shown in \cref{res:table}. In contrast, areas with lower correlation tend to show lower performance. This pattern shows a consistent relationship between correlation strength and method effectiveness. We also explored an alternative view of PairRM that had a high correlation but did not produce correspondingly strong results in \cref{res:table}. 

We hypothesized that this discrepancy might be due to the range of the regularization parameter $\beta$. To investigate this hypothesis and to demonstrate the potential of $\mathrm{RBoN}_{\mathrm{L}}$, we performed an extensive analysis by varying $\beta$ over a wide range, from 10 to 5000 \cref{fig:pair_beta}.

\begin{table}[ht]
\centering
\small
\caption{The correlation between sequence length and gold reference reward (Eurus-RM-7B) values, focusing on a subset of sentences that include the top {5, 10, 15} based on proxy reward values.}\label{table:gold_corelation}
\begin{tabular}{@{}lrrrrr@{}}
\toprule
 \textbf{Top N} &\textbf{OASST} & \textbf{SHP-Large} & \textbf{SHP-XL} & \textbf{PairRM} & \textbf{RM-Mistral-7B}\\ \midrule
\rowcolor[HTML]{EFEFEF} 
\multicolumn{6}{c}{\textbf{AlpacaFarm}} \\ \midrule
\textbf{All} &0.11 (0.33) & 0.11 (0.33)&  0.11 (0.33) & 0.11 (0.33) & 0.11 (0.33)  \\\midrule
\textbf{5} &\textbf{0.27} (0.55) & -0.04 (0.56)&  0.05 (0.55) & 0.15 (0.59) & 0.10 (0.56)  \\\midrule
\textbf{10} &\textbf{0.24} (0.44) & -0.02 (0.44)&  0.06 (0.41) & \textbf{0.17} (0.48) & 0.09 (0.56)  \\\midrule
\textbf{20} &\textbf{0.21} (0.39) & -0.02 (0.37)&  0.06 (0.36) & \textbf{0.16} (0.41) & 0.08 (0.44)  \\\midrule
\rowcolor[HTML]{EFEFEF} 
\multicolumn{6}{c}{\textbf{Harmlessness}} \\ \midrule
\textbf{All} &0.08 (0.45) & 0.08 (0.45)&  0.08 (0.45) & 0.08 (0.45) & 0.08 (0.45)  \\\midrule
\textbf{5} &\textbf{0.24} (0.58) & 0.10 (0.57)&  0.13 (0.58) & \textbf{0.20} (0.62) & \textbf{0.37} (0.51)  \\\midrule
\textbf{10} &\textbf{0.25} (0.50) & 0.11 (0.46)&  0.12 (0.47) & \textbf{0.19} (0.54) & \textbf{0.36} (0.41)  \\\midrule
\textbf{20} &\textbf{0.22} (0.47) & 0.11 (0.41)&  0.11 (0.43) & \textbf{0.21} (0.49) & \textbf{0.34} (0.39)  \\\midrule
\rowcolor[HTML]{EFEFEF} 
\multicolumn{6}{c}{\textbf{Helpfulness}} \\ \midrule
\textbf{All} &0.07 (0.40) & 0.07 (0.40)&  0.07 (0.40) & 0.07 (0.40) & 0.07 (0.40)  \\\midrule
\textbf{5} &\textbf{0.28} (0.56) & -0.04 (0.58)&  0.11 (0.54) & \textbf{0.14} (0.62) & 0.06 (0.54)  \\\midrule
\textbf{10} &\textbf{0.27} (0.47) & -0.05 (0.45)&  0.11 (0.42) & \textbf{0.15} (0.52) & 0.06 (0.40)  \\\midrule
\textbf{20} &\textbf{0.24} (0.43) & -0.06 (0.40)&  0.10 (0.37) & \textbf{0.17} (0.46) & 0.03 (0.36)  \\
 \bottomrule
\end{tabular}
\label{tab:diff2}
\end{table}

\begin{figure}[htbp]
    \centering
    \includegraphics[width=\linewidth]{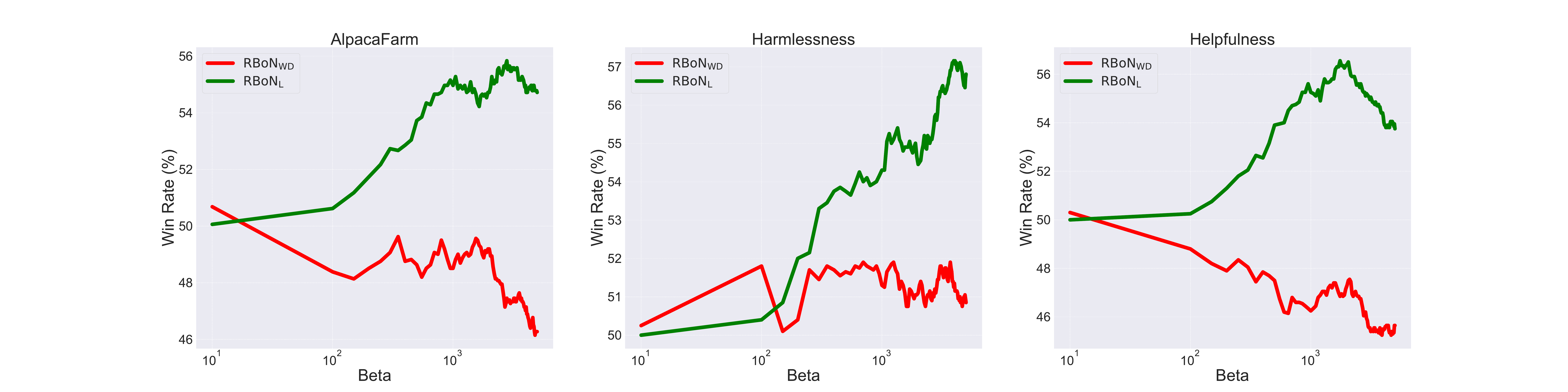}
    \caption{Performance analysis of $\mathrm{RBoN}_{\mathrm{L}}$ with varying $\beta$ (10 to 5000) across AlpacaFarm, Harmlessness, and Helpfulness datasets. PairRM and Eurus-RM-7B are used as proxy and gold reward models, respectively.}
    \label{fig:pair_beta}
\end{figure}

\section{Experiment with Qwen2.5-7B-Instruct}
As an ablation study, we evaluate the methods using the Qwen (Qwen2.5-7B-Instruct) as the language model. Overall, we observe the same results as with Mistral-7B-SFT, where $\mathrm{RBoN}_{\mathrm{WD}}$ outperforms the baseline algorithms (Figure \ref{fig:qwen}).


\begin{figure}[htbp]
    \centering
    \includegraphics[width=\linewidth]{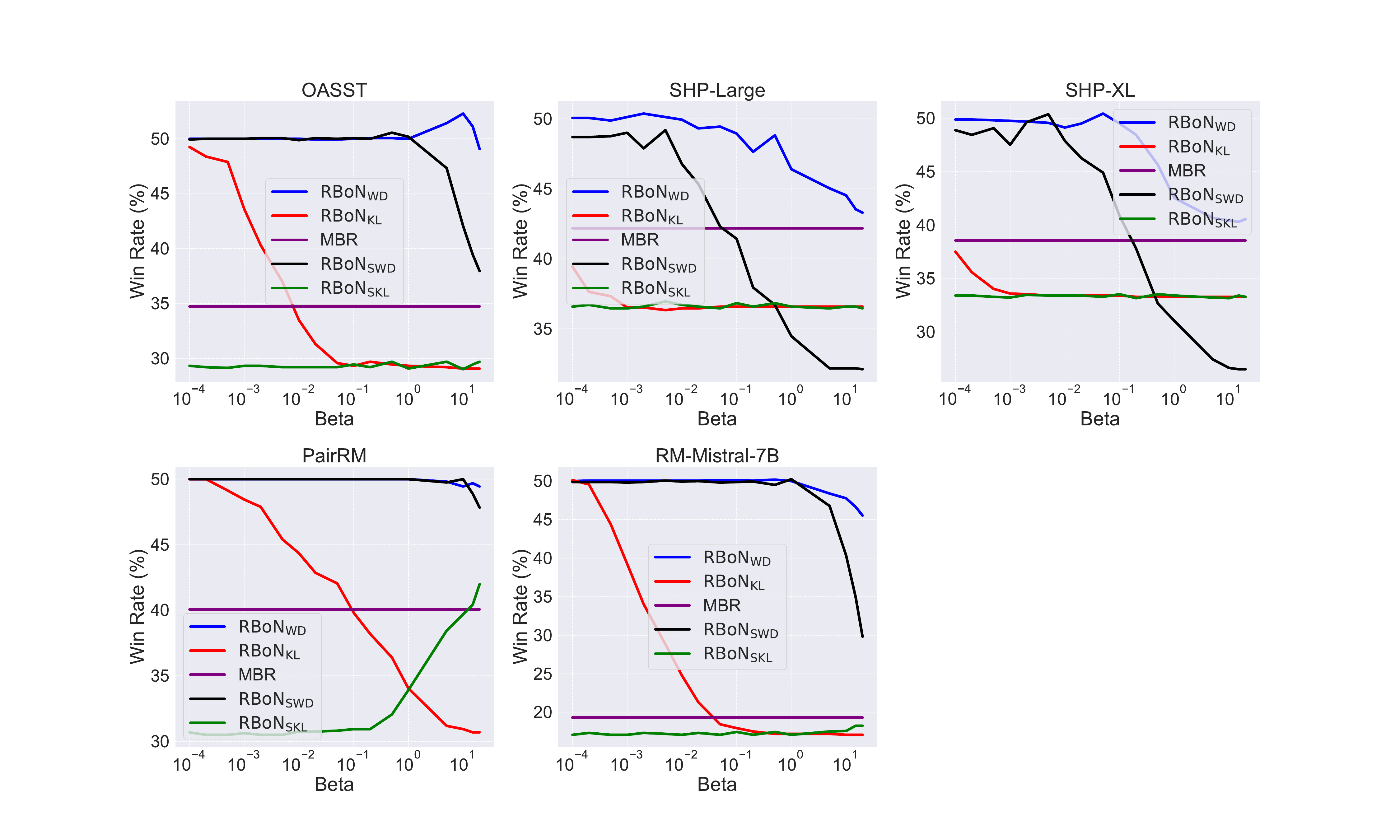}
    \caption{Evaluation of the RBoN method on the AlpacaFarm dataset with varying parameter $\beta$. We use proxy reward models, OASST, SHP-Large, SHP-XL,  PairRM, and RM-Mistral-7B. As the gold reward model, we utilize Eurus-RM-7B, and Qwen as the language model.
    }
    \label{fig:qwen}
\end{figure}

\newpage
\section{Reproducibility Statement}
\label{appendix:reprod}

All datasets and models used in the experiments are publicly available (Table \ref{tab:links}). Our code will be available
as open source upon acceptance.

\begin{table*}
    \caption{List of datasets and models used in the experiments.}
    \label{tab:links}
    \centering
    \begin{tabularx}{\textwidth}{cX}
    \toprule
        Name & Reference \\
    \midrule
        AlpacaFarm & \cite{NEURIPS2023_5fc47800} \url{https://huggingface.co/datasets/tatsu-lab/alpaca_farm} \\\midrule
        Anthropic's hh-rlhf & \cite{bai2022training} \url{https://huggingface.co/datasets/Anthropic/hh-rlhf} \\\midrule
        mistral-7b-sft-beta (Mistral) & \cite{jiang2023mistral,tunstall2023zephyr} \url{https://huggingface.co/HuggingFaceH4/mistral-7b-sft-beta} \\\midrule
        Meta-Llama-3-8B-Instruct  (Llama) & \cite{dubey2024llama} \url{https://huggingface.co/meta-llama/Meta-Llama-3-8B-Instruct} \\\midrule
        Qwen2.5-7B-Instruct (Qwen)& \cite{qwen2,qwen2.5} \url{https://huggingface.co/Qwen/Qwen2.5-7B-Instruct} \\\midrule
        SHP-Large & \cite{pmlr-v162-ethayarajh22a} \url{https://huggingface.co/stanfordnlp/SteamSHP-flan-t5-large} \\\midrule
        SHP-XL & \cite{pmlr-v162-ethayarajh22a} \url{https://huggingface.co/stanfordnlp/SteamSHP-flan-t5-xl} \\\midrule
        OASST & \cite{NEURIPS2023_949f0f8f} \url{https://huggingface.co/OpenAssistant/reward-model-deberta-v3-large-v2} \\\midrule
        PairRM & \cite{jiang-etal-2023-llm} \url{https://huggingface.co/llm-blender/PairRM} \\\midrule
        RM-Mistral-7B & \cite{dong2023raft} \url{https://huggingface.co/weqweasdas/RM-Mistral-7B} \\\midrule
Eurus-RM-7B & \cite{yuan2024advancing} \url{https://huggingface.co/openbmb/Eurus-RM-7b} \\\midrule
        Beaver & \cite{dai2024safe}\url{https://huggingface.co/PKU-Alignment/beaver-7b-v1.0-reward} \\\midrule
         Tulu & \cite{ivison2024unpacking} \url{https://huggingface.co/allenai/tulu-v2.5-ppo-13b-uf-mean-70b-uf-rm} \\\midrule
         Open Llama & \cite{diao-etal-2024-lmflow} \url{https://huggingface.co/weqweasdas/hh_rlhf_rm_open_llama_3b} \\\midrule
        MPNet & \cite{NEURIPS2020_c3a690be} \url{https://huggingface.co/sentence-transformers/all-mpnet-base-v2} \\
        \bottomrule
    \end{tabularx}
\end{table*}
\newpage

\end{document}